\documentclass{article}

\usepackage[margin=1in]{geometry}
\usepackage{xcolor}
\usepackage{adjustbox}
\usepackage{multirow}
\usepackage{graphicx}
\usepackage{subcaption}
\usepackage{shuffle}
\usepackage{mathtools}
\usepackage{hyperref}
\usepackage{diagbox}
\usepackage{booktabs}
\usepackage[backend=biber, style=numeric, sorting=nty, citestyle=authoryear-comp, 
            maxcitenames=1, mincitenames=1,maxbibnames=99, 
            uniquelist=false, uniquename=false]{biblatex}
\DeclareNameAlias{default}{family-given}

\usepackage{amsmath, amssymb, amsthm}
\usepackage{bbm}
\usepackage{relsize} 

\usepackage{algorithm}
\usepackage{algorithmic}

\usepackage{thmtools}

\declaretheoremstyle[
    headfont=\bfseries,
    notefont=\bfseries, 
    notebraces={(}{)},
    bodyfont=\itshape,    
    postheadspace=0.5em,
    qed=$\lozenge$         
]{definitionstyle}

\declaretheoremstyle[
    headfont=\bfseries,
    notefont=\bfseries,
    bodyfont=\itshape,
    postheadspace=0.5em,
]{plainstyle}

\declaretheorem[style=definitionstyle, name=Definition, numberwithin=section]{definition}
\declaretheorem[style=definitionstyle, name=Example, sibling=definition]{exmp}
\declaretheorem[style=plainstyle,      name=Theorem,   sibling=definition]{theorem}
\declaretheorem[style=plainstyle,      name=Lemma,     sibling=definition]{lemma}
\declaretheorem[style=definitionstyle, name=Remark,    sibling=definition]{remark}
\declaretheorem[style=plainstyle, name=Proposition, sibling=definition]{proposition}
\declaretheorem[style=plainstyle, name=Corollary, sibling=definition]{corollary}

\newcommand{\R}{\mathbb{R}}

\DeclareMathAlphabet{\pazocal}{OMS}{zplm}{m}{n}

\usepackage{comment}

\title{The Exponentially-Weighted Signature}

\author{
Alexandre Bloch$^{1}$ \and
Samuel N. Cohen$^{1}$ \and
Terry Lyons$^{1,2}$ \and
Joël Mouterde$^{3}$ \and
Benjamin Walker$^{1}$\\[0.5em]
$^{1}$Mathematical Institute, University of Oxford \\
$^{2}$Department of Mathematics, Imperial College London\\
$^{3}$SKF\\[0.25em]
}

\date{}

\begin{document}

\maketitle

\begin{abstract}
\noindent The signature is a canonical representation of a multidimensional path over an interval. However, it treats all historical information uniformly, offering no intrinsic mechanism for contextualising the relevance of the past. To address this, we introduce the Exponentially Weighted Signature (EWS), generalising the Exponentially Fading Memory (EFM) signature from diagonal to general bounded linear operators. These operators enable cross-channel coupling at the level of temporal weighting together with richer memory dynamics including oscillatory, growth, and regime-dependent behaviour, while preserving the algebraic strengths of the classical signature. We show that the EWS is the unique solution to a linear controlled differential equation on the tensor algebra, and that it generalises both state-space models and the Laplace and Fourier transforms of the path. The group-like structure of the EWS enables efficient computation and makes the framework amenable to gradient-based learning, with the full semigroup action parametrised by and learned through its generator. We use this framework to empirically demonstrate the expressivity gap between the EWS and both the signature and EFM on two SDE-based regression tasks.
\end{abstract}

\section{Introduction}\label{Chapater_Introduction}

Many real-world time-series, ranging from biological signals and physical systems to financial markets, are discrete observations of systems that evolve continuously in time. In these settings, the apparent discreteness of the data reflects limitations of measurement rather than the nature of the underlying dynamics. Despite this, dominant machine learning architectures, such as Recurrent Neural Networks (RNNs), Temporal Convolutional Networks (TCNs), and Transformers, treat time-series as sequences of discrete observations, with time entering only implicitly through the sequence index. This perspective identifies the evolution of the system with the discretisation used to observe it and introduces assumptions not inherent to the data-generating process. Consequently, these models suffer from limitations imposed by the discretisation itself. For recurrent architectures, increasing the number of time steps exacerbates vanishing and exploding gradient phenomena \parencite{hochreiter2001gradient}, while in attention-based models it leads to a quadratic increase in memory and computational costs \parencite{vaswani2017attention}. Additionally, irregular sampling must often be handled through ad hoc interpolation, imputation, or padding. Finally, the discrete viewpoint ties memory and temporal scale to the observation grid rather than allowing them to be determined by the underlying dynamics.

These challenges suggest that the limitation lies in the discrete representation itself. A more principled approach is to model the data as a continuous path $X \colon [t_0,t_N] \to V$, where $V$ is a Banach space. Rooted in the theory of controlled differential equations \parencite{lyons1998differential}, this perspective treats the underlying trajectory as the primary object of interest and decouples the model from the measurement schedule. Within this framework, the problem becomes one of identifying a representation of the path that captures its essential information in a manner that is intrinsic to the underlying trajectory, rather than dependent on the discretisation used to observe it. A central object achieving this is the path signature, introduced by \textcite{Chen1954Iterated}, which represents a path through its iterated integrals. The signature takes values in the tensor algebra 
\begin{equation}
    T((V)) = \bigoplus_{n=0}^{\infty} V^{\otimes n},
\end{equation}
where $V^{\otimes 0}$ denotes $\mathbb{R}$ and we assume that $\{V^{\otimes n}\}_{n=0}^{\infty}$ is equipped with a family of admissible norms \parencite{lyons2025signaturemethodsmachinelearning}. For more details on tensor products, admissible norms, and the integration framework, see Appendix \ref{appendix_tensor_algebra} \& \ref{appendix_young_integration}.

\begin{definition}[Signature \parencite{lyons2007differential}]
Let $X \in \mathcal{V}^p([t_0,t_N],V)$ with $p < 2$. The signature of $X$ over the interval $[s,t] \subseteq [t_0,t_N]$ is defined as 
\[
S(X)_{s,t} = (1, S(X)_{s,t}^{(1)}, S(X)_{s,t}^{(2)},\dots) \in T((V)),
\]
where 
\[
S(X)^{(n)}_{s,t}= \int^t_s \int^{t_n}_s \cdots \int^{t_2}_s dX_{t_1} \otimes \cdots \otimes dX_{t_n} \in V^{\otimes n}
\]
is defined in the Young sense \parencite{Young1936AnIO}.
\end{definition}

The signature is a canonical representation of the path; as shown in \textcite{hambly2010uniqueness}, for paths $X$ and $Y$ of bounded variation, the condition $S(X) = S(Y)$ holds if and only if the two paths are tree-like equivalent. This result was later extended to the $p<2$ setting in \textcite{BOEDIHARDJO2016720}. In practice, when $V$ is finite dimensional, this equivalence can be removed by augmenting the path with a strictly monotone channel, such as time, which ensures that the signature is injective on the space of paths (up to a translation constant). Another key property of the signature is its universal approximation capability.

\begin{theorem}[Universality]
    Let $K \subset \mathcal{V}^p([a, b], V)$ be a compact set of paths with $p < 2$. Let $\mathcal{F} \subset C(K, \mathbb{R})$ be the set of continuous functions such that $F(X) = F(Y)$ whenever $S(X) = S(Y)$. Then the set of linear functionals of the signature is dense in $\mathcal{F}$. That is, for any $F \in \mathcal{F}$ and $\epsilon > 0$, there exists a linear functional $l$ such that$$\sup_{X \in K} |F(X) - \langle l, S(X) \rangle| < \epsilon.$$
\end{theorem}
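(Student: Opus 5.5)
The plan is to apply the Stone--Weierstrass theorem, not on $K$ directly, but on the image of $K$ under the signature map. Let $\mathcal{A} = \{X \mapsto \langle l, S(X)\rangle : l \in T(V^*)\}$, where $l$ ranges over finite linear combinations of coordinate functionals on the levels $V^{\otimes n}$. Since every $F\in\mathcal{F}$ is constant on fibres of $S$, we may write $F = G\circ S$ with $G$ defined on $S(K)$. So it is enough to show that $\mathcal{A}$, viewed as functions on $S(K)$, is dense in $C(S(K),\mathbb{R})$. When $V$ is infinite dimensional, we interpret $l$ as ranging over elements of $\bigoplus_n (V^{\otimes n})^*$, assuming the admissible norms make these functionals continuous.

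The steps, in order, are as follows.

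\textbf{Continuity.} For $p<2$, the map $X\mapsto S(X)^{(n)}_{a,b}$ is continuous on $\mathcal{V}^p$ at each fixed level $n$. This follows from the continuity of the Young integral, or from the standard factorial bound $\|S(X)^{(n)}\|\le C\,\|X\|_{p\text{-var}}^n/(n/p)!$ together with local Lipschitz estimates. Consequently each element of $\mathcal{A}$ is continuous on $K$. Equip $S(K)$ with the topology induced levelwise, i.e.\ the product topology over $n$. Then $S(K)$ is compact as the continuous image of a compact set, and it is Hausdorff.

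\textbf{Pulling back through $S$.} The quotient $K/\!\sim$, where $X\sim Y$ iff $S(X)=S(Y)$, is homeomorphic to $S(K)$: a continuous bijection from a compact space onto a Hausdorff space is a homeomorphism. Hence $G := F\circ S^{-1}$ is well defined and continuous on $S(K)$, and it suffices to approximate $G$ uniformly.

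\textbf{Verifying the Stone--Weierstrass hypotheses.} We check the three conditions for $\mathcal{A}$ on $S(K)$.
\begin{itemize}
\item \emph{Constants.} The empty word gives $\langle \mathbf{1}, S(X)\rangle = 1$.
\item \emph{Separation of points.} If $S(X)\neq S(Y)$, they differ at some finite level $n$, and some coordinate functional at that level distinguishes them.
\item \emph{Closure under multiplication.} This is the key algebraic input, the shuffle product identity
\[
\langle l_1, S(X)\rangle\,\langle l_2, S(X)\rangle = \langle l_1 \shuffle l_2, S(X)\rangle .
\]
We prove it for words by induction on total length. Differentiate the product of the two iterated integrals in the upper limit $t$ using the Young integration-by-parts (product) rule, which is valid for $p<2$. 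Then apply the inductive hypothesis to the two resulting terms. This reproduces the recursive definition of the shuffle, $ua\shuffle vb=(u\shuffle vb)a+(ua\shuffle v)b$.
\end{itemize}
Stone--Weierstrass then gives density of $\mathcal{A}$ in $C(S(K))$, and pulling back yields $\sup_{X\in K}|F(X)-\langle l,S(X)\rangle|<\epsilon$.

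\textbf{Main obstacle.} I expect the main difficulty to be the shuffle identity in the Young setting, and specifically justifying the product rule for $p$-variation paths with $p<2$. I would handle it by first proving the identity for bounded-variation (e.g.\ piecewise linear) approximations, where it is classical calculus. I would then pass to the limit using continuity of the signature in the $p'$-variation topology for some $p<p'<2$, since bounded-variation paths are dense there. A secondary technical point is the topology on $S(K)$ when $V$ is infinite dimensional; working levelwise, with the functionals restricted to continuous ones, avoids needing any global norm on $T((V))$.
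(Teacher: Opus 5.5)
Your proposal is correct and follows essentially the paper's route. The paper states this theorem without proof, but its proof of the EWS analogue (Lemma~\ref{universal_approximation}, which reduces to this statement at $A=0$) is the same Stone--Weierstrass argument: continuity of the truncated map, constants from the empty word, closure under multiplication via the shuffle identity, and separation of points at a distinguishing level via Hahn--Banach. There are two minor differences. First, you prove the shuffle identity by induction with Young integration by parts (or by bounded-variation approximation), whereas the paper obtains it from group-likeness, showing that $\Delta(S(X)_{a,t})$ and $S(X)_{a,t}\boxtimes S(X)_{a,t}$ solve the same linear CDE (Lemma~\ref{ews_shuffle_linearisation}). Second, you justify the identification $K/\!\sim\,\cong S(K)$ as a compact Hausdorff space more carefully than the paper, which simply asserts density in $C(K/\!\sim_A,\mathbb{R})$.
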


Recent work extends universality results to stochastic settings. In particular, universal approximation by linear functionals of signatures has been established for classes of non-geometric rough paths arising from stochastic processes, showing that signature-based models retain their approximation power beyond the deterministic Young regime \parencite{ceylan2026universalapproximationsignaturesnongeometric}. The signature also possesses a number of additional algebraic and analytic properties, including multiplicativity under concatenation and shuffle identities relating products of coordinates. For completeness, these and other foundational results are summarised in Appendix \ref{appendix_signatures}. Together, these properties make the signature a powerful representation of sequential data, and they have led to its widespread use in time-series modelling, both in shallow methods and in deep architectures \parencite{graham2013sparsearrayssignaturesonline,gyurkó2014extractinginformationsignaturefinancial,levin2016learningpastpredictingstatistics,kiraly2019kernels,Salvi_2021,Bonnier2019DeepST}. However, signatures summarise the past uniformly, and therefore provide no intrinsic mechanism for contextualising history when the relevance of past information varies over time.  While this is a strength from the perspective of representation, it becomes a limitation
for many temporal modelling tasks, where the relevance of historical information is not uniform in time. In such settings, recent behaviour may be more informative than distant history, or the importance of the past may vary across regimes, while still requiring some form of long-term memory. However, the signature treats all historical information identically and thus cannot adapt its representation to the temporal context.

A common remedy is to impose temporal locality by computing signatures on sliding or expanding windows, whereby the representation at each time is constructed from a restricted sub-interval of the past, either of fixed length or growing with time \parencite{Bonnier2019DeepST,morrill2021generalisedsignaturemethodmultivariate,fermanian2021learning,cohen2023nowcasting,drobac2025slidingwindowsignaturestimeseries}.
However, this makes the temporal horizon a fixed design choice: sliding windows discard all information before a prescribed cut-off, while expanding windows retain the entire past but weight distant and recent history identically. Moreover, sliding-window signatures do not admit a simple continuous-time dynamical description, while for expanding windows, signature terms may grow unbounded as the interval length increases.

To address these limitations, \textcite{jaber2025exponentiallyfadingmemorysignature} proposed the Exponentially Fading Memory Signature (EFM) as a continuous-time alternative, replacing hard temporal cut-offs with exponential weighting of the past, while preserving many of the algebraic and analytical properties of the classical signature. 

\begin{definition}[Exponentially Fading Memory Signature \parencite{jaber2025exponentiallyfadingmemorysignature}]
    Let $X \in \mathcal{V}_{\mathrm{loc}}^p((-\infty, t_N], \R^d)$ for $p<2$. Assume further that there exists $\rho \in \R^d$ such that 
$0 \prec \rho \prec \lambda$ and
\[
\sup_{u \le t_N} e^{\rho (t_N-u)} |X_u| < \infty.
\]
 Let $\lambda \in \R^d$ with $\lambda \succ 0$. The $\lambda$-exponentially fading memory signature of $X$ is defined as 
    \[
    \mathbb{X}^{\lambda}_{s,t} = (1,\mathbb{X}^{\lambda, 1}_{s,t}, \mathbb{X}^{\lambda, 2}_{s,t}, \dots ) \in T((\R^d)),
    \]
    where 
    \begin{equation}\label{eq_efm_coords}
        \mathbb{X}^{\lambda, n}_{s,t} = \int^{t}_s \int ^{t_n}_s \cdots \int^{t_2}_s e^{-\lambda(t-t_1)} \odot dX_{t_1} \otimes \cdots \otimes e^{-\lambda(t-t_n)} \odot dX_{t_n} \in (\R^d)^{\otimes n},
    \end{equation}
    where $e^{\lambda}$ denotes the element-wise exponential $(e^{\lambda_1},\dots,e^{\lambda_d})$, and $\odot$ denotes the Hadamard (pointwise) product on $\R^d$; that is, for $x,y \in \R^d$, $x \odot y := (x_1y_1,\dots,x_dy_d)$. We also denote $\mathbb{X}^{\lambda}_t = \mathbb{X}^{\lambda}_{-\infty, t}$.
\end{definition}

In contrast to sliding or expanding windows, the EFM yields a representation that incorporates the entire past, while ensuring that older information is smoothly attenuated over time. However, this construction imposes strong structural restrictions on how memory can evolve:

First, the exponential weighting acts component-wise through the Hadamard product. Each increment $dX^i_{t_k}$ is scaled by a factor $e^{-\lambda_i(t-t_k)}$ that depends only on the $i$-th coordinate of $\lambda$ and is independent of all other components of the path. As a result, the temporal weighting is factorised across channels: each component evolves with its own fixed decay rate, and there is no interaction between channels at the level of time propagation. While cross-channel effects do appear at higher levels through the tensor structure of the iterated integrals, they do not arise from the temporal weighting. The mechanism governing how past information is retained or forgotten is therefore independent for each component.

Second, the definition is formulated on the infinite time horizon $(-\infty, t]$, and thus the existence of the resulting improper integrals relies crucially on the exponential decay of the memory kernel. This is enforced by restricting the decay parameters $\lambda_i$ to be strictly positive, ensuring that contributions from the distant past are exponentially attenuated. However, the positivity of the $\lambda_i$ constrains the temporal weighting to monotone exponential decay, so that each channel is associated with a single decaying mode. In particular, the EFM cannot capture oscillatory behaviour, growth, or more general temporal dynamics in which the influence of the past does not simply diminish over time. This limitation is significant in applications where the effect of past inputs is not well described by uniform decay. For example, in systems with inertia or resonance, past behaviour may persist in an oscillatory or phase-dependent manner, while in financial time-series the relevance of past information may vary across regimes. Such phenomena require richer temporal dynamics, in which the influence of the past can evolve in a non-monotone or coupled way.

A concurrent line of work, the Volterra signature of 
\parencite{harang2021volterraequationsdrivenrough1, harang2021volterraequationsdrivenrough2, 
harang2022volterraequationsdrivenrough3}, extended to matrix-valued kernels in \parencite{hager2026volterrasignature}, shares the motivation of expanding beyond channel-separable weighting to increase expressivity, though it contextualises memory through a fundamentally different framework to both the EFM and the approach proposed here. By lifting a linear Volterra CDE directly into the tensor algebra via Picard 
iteration, this approach accommodates a broad class of memory structures including fractional and power-law kernels. However, the Volterra signature does not inherit the strong algebraic properties of the classical signature: it is not group-like and thus does not satisfy the shuffle product identity, and requires a more involved Chen identity based on a convolutional tensor product. This renders it more computationally difficult to compute and calibrate than the approach we propose here.

We therefore introduce the Exponentially Weighted Signature (EWS), which adopts a different modelling perspective: rather than enforcing fading memory, we aim to learn a general notion of temporal context. Concretely, we replace the component-wise exponential weighting in the EFM with a more general mechanism governed by bounded linear operators. This allows for richer temporal dynamics, including oscillatory and non-decaying modes, as well as coupling between channels, so that the influence of past observations can evolve in a non-monotone and state-dependent manner. Such flexibility is incompatible with the infinite-history formulation of the EFM without imposing additional stability constraints on the spectrum. Instead, the EWS is defined on finite horizons, allowing the temporal weighting to be learned without restricting it to purely decaying behaviour.

\section{The Exponentially Weighted Signature}

In this section we introduce the EWS as a collection of iterated integrals. Let $X \in \mathcal{V}^p([t_0,t_N],V)$) for $p< 2$, and suppose that the path carries one or more intrinsic notions of time. That is, there exist bounded linear functionals $\ell: V \to \R$ whose evaluation along the path produces a scalar, strictly monotone process
\[
\theta_t := \ell(X_t), \qquad t \in [t_0,t_N].
\]
Such functionals define valid clocks with respect to which the evolution of the system can be parametrised. In principle, a single path may admit multiple admissible clock functionals, corresponding to different intrinsic temporal scales encoded in the data. A canonical example is the standard time-augmentation used in the signature literature \parencite{SigPrimer}, where $X_t = (t, \widehat X_t)$ and $\ell$ extracts the first coordinate, yielding $\theta_t =t$. Another example of an intrinsic clock could be the quadratic variation of the path $\theta_t = \langle X \rangle_t$. We now fix a single clock functional $\ell$ and work with the associated intrinsic time $\theta$. This choice is sufficient to ensure well-posed continuous-time dynamics and, when combined with time augmentation, guarantees uniqueness
of the exponentially-weighted signature. Extensions to multiple clocks can be treated analogously but are not pursued here for clarity of exposition.

We now define the EWS using iterated integrals in a similar manner to the classical signature; we leave existence and uniqueness to Sections \ref{section_existence} \& \ref{section_uniqueness}.

\begin{definition} \label{def_ews_iterated_integrals}
    Let $V,W$ be Banach spaces and consider $X \in \mathcal{V}^p([t_0,t_N], V)$ with intrinsic clock $\theta_t = \ell(X_t)$. Let $\mathbf{A} = (A,B)$ denote a pair of bounded linear operators with $A \in L(W,W)$ and  $B \in L(V,W)$. Then, the exponentially weighted signature of $X$ over $[s,t] \subseteq [t_0, t_N]$ is 
    \begin{equation}
        S_{\mathbf{A}}(X)_{s,t} = \Bigl( 1, S_{\mathbf{A}}(X)_{s,t}^{(1)}, S_{\mathbf{A}}(X)_{s,t}^{(2)}, \dots \Bigr) \in T((W)),
    \end{equation}
    whose $n$-th level component $S_{\mathbf{A}}(X)_{s,t}^{(n)} \in W^{\otimes n}$ is defined recursively by the iterated Young integrals
    \begin{equation}
        S_{\mathbf{A}}(X)^{(n)}_{s,t} = \int^t_s \int^{t_n}_s \cdots \int^{t_2}_{s} \Bigl(e^{-(\theta_{t} - \theta_{t_1})A}BdX_{t_1}\Bigr) \otimes \cdots \otimes\Bigl(e^{-(\theta_{t} - \theta_{t_n})A}BdX_{t_n}\Bigr).
    \end{equation}    
\end{definition}
When $\dim(W) = w$, using the basis $\mathcal{B}_W = \{e_i\}_{i=1}^w$ we can expand $S_{\mathbf{A}}(X)^{(n)}_{s,t}$ as
\[
S_{\textbf{A}}(X)^{(n)}_{s,t} = \sum_{i_1,\dots,i_n \in \{1,\dots,w\}} S_{\mathbf{A}}(X)^{i_1,\dots,i_n}_{s,t} e_{i_1} \otimes \cdots \otimes e_{i_n},
\]
where with $E(h) := e^{-hA}$ and $E_{i,j}(h)$ denoting the $ij$-th entry, the coefficient for word $(i_1,\dots,i_n)$ is
\begin{equation}
    S_{\mathbf{A}}(X)^{i_1,\dots,i_n}_{s,t} = \int^t_s \int^{t_n}_s \cdots \int^{t_2}_s \sum_{j_1, \dots,j_n \in \{1,\dots,w\}} \left( \prod_{k=1}^nE_{i_k, j_k}(\theta_t-\theta_{t_k})\right)d\widehat{X}^{j_1}_{t_1} \cdots d\widehat{X}^{j_n}_{t_n},
\end{equation}
with $\widehat{X}_t \in W$ being the lifted path defined by $\widehat{X}_t = BX_t$. We can also express the coefficient for word $(i_1,\dots,i_n)$ in terms of the coefficient for word $(i_1,\dots,i_{n-1})$ as follows:
\begin{equation}
    S_{\mathbf{A}}(X)^{i_1,\dots,i_n}_{s,t}
=
\sum_{m_1,\dots,m_{n-1} \in \{1,\dots,w\}}\int_s^t
\sum_{j = 1}^w \Bigl(\prod_{k=1}^{n-1}
E_{i_k, m_k}\!\big(\theta_t - \theta_u\big) \Bigr)
\,S_{\mathbf{A}}(X)^{m_1,\dots,m_{n-1}}_{s,u}
\,E_{i_n,j}(\theta_t - \theta_u)d\widehat X^{j}_u.
\end{equation}

For an explicit example of the component-wise iterated integral definition of the EWS, see Appendix~\ref{ews_example}. In the special case $B = \mathrm{Id}$ and $A = \mathrm{diag}(\lambda_1, \ldots, \lambda_d)$ with $\lambda_i > 0$, the EWS reduces exactly to the EFM-signature of Equation~(\ref{eq_efm_coords}). In this setting, the temporal weighting is entirely channel-wise: each component $X^i$ is 
propagated through a single scalar kernel $e^{-\lambda_i h}$, and while cross-channel terms do appear at higher signature levels through the tensor structure of the iterated integrals, the mechanism governing how past information is retained or forgotten remains independent for each channel. Allowing a general operator $A$ fundamentally changes 
this picture: cross-channel coupling now enters at the level of the temporal weighting itself, since the full matrix exponential $E(h) = e^{-hA}$ mixes channels as it propagates the past. This permits richer behaviour such as oscillatory or regime-dependent memory effects that are structurally inaccessible to any diagonal operator, regardless of truncation depth.

\begin{remark}At depth one, the EWS coincides with the first level of the Volterra signature of \textcite{hager2026volterrasignature} for the kernel $K(t,s) = e^{-(t-s)A}$: both reduce to $\int_s^t e^{-(t-u)A} dX_u$ (under the convention $\tau = t_{n+1} = t$ in \cite[Definition 2.14]{hager2026volterrasignature}). However, beyond depth one, the two objects diverge. Rewriting \parencite[Definition 2.14]{hager2026volterrasignature} component-wise, the $n$-th level term of the Volterra signature for the word $i_1 \cdots i_n$ is
\begin{equation}
    \mathrm{VSig}(x;K)^{i_1\cdots i_n}_{s,t}
    = \int_s^t \int_s^{t_n} \cdots \int_s^{t_2}
    \sum_{j_1,\ldots,j_n \in \{1,\dots,w\}}
    \left(\prod_{k=1}^{n} K^{i_k, j_k}(t_{k+1}, t_k)\right)\,
    dX^{j_1}_{t_1} \cdots dX^{j_n}_{t_n},
\end{equation}
where the integration is over the simplex $\Delta^n_{s,t} = \{s \leq t_1 \leq \cdots \leq t_n \leq t\}$. In the EWS, the increment $dX_{t_k}$ at each vertex $t_k$ of the simplex is weighted by $e^{-(\theta_t - \theta_{t_k})A}$, which depends on the distance from $t_k$ to the apex $t$. In the Volterra signature, the same increment is weighted by $K(t_{k+1}, t_k)$, which depends on the gap to the next vertex $t_{k+1}$. This is the distinction between a global weighting scheme, in which every increment is discounted relative to a fixed terminal time, and a local one, in which each increment is discounted relative to its immediate neighbour. While both the EWS and the Volterra signature extend the diagonal operator of the EFM to a general matrix-valued operator, it is precisely the global anchoring to a fixed terminal 
time that makes the EWS the signature of a re-weighted path evaluated at that time (Proposition \ref{prop_ews_sig_reweighted}), ensuring that the EWS is group-like while the Volterra signature is not \parencite[Remark 2.17]{hager2026volterrasignature}. The EWS therefore generalises the EFM to arbitrary bounded operators $A \in L(V,V)$ while preserving the algebraic strengths that make the EFM a natural extension of the classical signature.
\end{remark}

For the remainder of this paper, we adopt the following simplifying convention. The operator $B \in L(V,W)$ embeds the input path into the space where exponential weighting acts, often duplicating channels so that different modes of $A$ can act on them. Since $B$ enters the iterated integrals only through the lifted path $\widehat{X} = BX$, it amounts to a linear re-embedding of the signal, and all results carry over unchanged up to constants depending on $\|B\|$. We therefore take $W = V$ and $B = \mathrm{Id}$, noting that all definitions and results extend directly to general bounded $B$ by replacing $X$ with $\widehat{X}$. Under this convention, the EWS takes values in $T((V))$.

\section{Dynamics of the EWS}

The iterated integrals defining the EWS involve repeated applications of the matrix exponential $e^{-hA}$ to individual increments of the path. To express these operations coherently across all tensor levels, it is convenient to extend the action of $e^{-hA}$ from the base space $V$ to the entire tensor algebra $T((V))$ in a way that respects the algebraic structure.

\subsection{The Flow Operator}

Recall that $T((V))$ is the free unital associative algebra generated by $V$. In particular, any linear map $f:V\to V$ admits a unique extension to an algebra homomorphism on $T((V))$, a consequence of the universal property of the tensor algebra \parencite{lang2002algebra}.
We apply this construction to the linear map $v\mapsto e^{-hA}v$ where, since $A \in L(V,V)$ is bounded, the operator exponential $e^{-hA}$ is well-defined for all $h \in \R$ and defines a uniformly continuous one-parameter semigroup on $V$ \parencite{Rudin1991}.

\begin{definition}
Let $A\in L(V,V)$ be a bounded linear operator and $h\in\R$. The flow operator $D_A^h:T((V))\to T((V))$ is defined as the unique algebra homomorphism extending the linear map $L_{e^{-hA}}:V\to V$. That is,
\begin{equation}
    D_A^h|_V = e^{-hA}.
\end{equation}
Equivalently, for any elementary tensor $v_1\otimes\cdots\otimes v_n\in V^{\otimes n}$,
\begin{equation}
    D_A^h(v_1\otimes\cdots\otimes v_n)
=
(e^{-hA}v_1)\otimes\cdots\otimes(e^{-hA}v_n).
\end{equation}
\end{definition}
The operator $D_A^h$ propagates the exponential weighting across tensor levels in a multiplicative manner, acting independently on each tensor factor while preserving the concatenation structure of the algebra. This construction is precisely what is required to rewrite the EWS iterated integrals in a compact recursive form and to formulate their continuous-time dynamics.

\begin{lemma}
The family $\{D_A^h : h\in\R\}$ is a one-parameter subgroup of $\mathrm{Aut}\big(T((V))\big)$, the group of algebra automorphisms of $T((V))$.
\end{lemma}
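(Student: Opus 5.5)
The plan is to reduce everything to the uniqueness part of the universal property of $T((V))$ invoked in the definition of $D_A^h$: an algebra homomorphism of $T((V))$ is determined by its restriction to $V$. Three facts are needed: $D_A^0=\mathrm{Id}$, the law $D_A^{h}\circ D_A^{k}=D_A^{h+k}$ for all $h,k\in\R$, and that each $D_A^h$ is an algebra automorphism. Together these say that $h\mapsto D_A^h$ is a group homomorphism $(\R,+)\to\mathrm{Aut}(T((V)))$, whose image is a one-parameter subgroup.

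\textbf{Identity and group law.} Since $e^{-0\cdot A}=\mathrm{Id}_V$ and the identity of $T((V))$ is an algebra homomorphism restricting to $\mathrm{Id}_V$, uniqueness gives $D_A^0=\mathrm{Id}_{T((V))}$. For the group law, $D_A^h\circ D_A^k$ is a composition of unital algebra homomorphisms, hence again one. On $V$ it acts as $e^{-hA}e^{-kA}=e^{-(h+k)A}$. This identity holds because $-hA$ and $-kA$ commute, so the exponential series (absolutely convergent in operator norm since $A$ is bounded) multiply as for scalars. Uniqueness then yields $D_A^h\circ D_A^k=D_A^{h+k}$. Alternatively, one can check the law directly on elementary tensors $v_1\otimes\cdots\otimes v_n$ using the displayed formula, extend by linearity to each level $V^{\otimes n}$, and then extend levelwise to $T((V))$, since $D_A^h$ preserves tensor degree.

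\textbf{Invertibility.} Taking $k=-h$ gives $D_A^h\circ D_A^{-h}=D_A^{-h}\circ D_A^h=D_A^0=\mathrm{Id}$. So each $D_A^h$ is bijective with inverse $D_A^{-h}$, and this inverse is itself an algebra homomorphism. Hence $D_A^h\in\mathrm{Aut}(T((V)))$. Commutativity of the family, $D_A^hD_A^k=D_A^kD_A^h$, follows from the group law since $h+k=k+h$.

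\textbf{Main obstacle.} The one delicate point is making precise that the universal-property extension is well defined on the completed algebra $T((V))$ of formal series, and not just on the free algebra $T(V)$. I would handle this by noting that $D_A^h$ is graded: on $V^{\otimes n}$ it equals $(e^{-hA})^{\otimes n}$, which is bounded with respect to the admissible norms (by at most $\|e^{-hA}\|^n$). It can therefore be defined componentwise on sequences $(a_0,a_1,\dots)$. Multiplicativity on $T((V))$ holds because the degree-$n$ component of a product involves only finitely many levels of each factor. Uniqueness among degree-preserving homomorphisms then follows from uniqueness on each finite level, which is all the argument above uses.
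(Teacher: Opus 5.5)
Your proposal is correct and follows the paper's proof. Both establish $D_A^{h}D_A^{k}=D_A^{h+k}$ by checking agreement on the generators $V$ and invoking uniqueness of the homomorphic extension, then obtain inverses from $D_A^{-h}$; your explicit check of $D_A^0=\mathrm{Id}$ and your grading argument for why uniqueness persists on the completed algebra $T((V))$ (where the universal property of the free algebra does not directly apply) tighten points the paper uses silently. The one loose end is in that extra paragraph: the bound $\|(e^{-hA})^{\otimes n}\|\le\|e^{-hA}\|^n$ is not one of the listed admissibility axioms (it is the uniform-crossnorm property, which holds e.g.\ for the projective and injective norms), but since the paper builds well-definedness of $D_A^h$ into its definition, this does not affect the lemma.
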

\begin{proof}
Since $\mathrm{Aut}\big(T((V))\big)$ is a group, it suffices to show that $\mathcal G$ is closed
under multiplication and inverses. For any $h_1,h_2\in\R$ and any generator $v\in V$,
\[
D_A^{h_1}D_A^{h_2}(v)
=
e^{-h_1A}e^{-h_2A}v
=
e^{-(h_1+h_2)A}v
=
D_A^{h_1+h_2}(v).
\]
Both $D_A^{h_1}D_A^{h_2}$ and $D_A^{h_1+h_2}$ are algebra homomorphisms on $T((V))$
that agree on the generators $V$; by the universal property of the tensor algebra,
they therefore coincide on all of $T((V))$. Hence
\[
D_A^{h_1}D_A^{h_2} = D_A^{h_1+h_2}\in\mathcal G.
\]
Moreover,
\[
D_A^hD_A^{-h} = D_A^0 = \mathrm{id},
\]
so $(D_A^h)^{-1}=D_A^{-h}\in\mathcal G$. The claim follows by the subgroup test.
\end{proof}

When $V$ is finite dimensional, the automorphism group $\mathrm{Aut}(T((V)))$ admits a natural Lie group structure, and the family $(D_A^h)_{h\in\R}$ can be viewed as a smooth one-parameter Lie subgroup. In this setting, the operator
$\Lambda_A$ introduced in the next sub-section coincides with the corresponding element of the
Lie algebra. Nevertheless, no Lie group structure is required for the constructions and results that follow, and thus they hold even when $V$ is infinite dimensional.

\subsection{The derivation operator}

\noindent
We define the derivation operator $\Lambda_A$ as the infinitesimal generator of the
continuous one-parameter subgroup of automorphisms $\mathcal{G}=\{D_A^h:h\in\R\}$.
More precisely, since the map $h\mapsto D_A^h$ is differentiable as a curve in the
ambient space of linear operators acting on the tensor algebra, we set $\Lambda_A := -\left.\frac{d}{dh}\right|_{h=0} D_A^h$.

\begin{definition}
Let $A\in L(V,V)$. The operator $\Lambda_A:T((V))\to T((V))$ defined by
\begin{equation}
    \Lambda_A := -\left.\frac{d}{dh}\right|_{h=0} D_A^h
\end{equation}
is called the \emph{derivation induced by $A$}.
\end{definition}

We now show that $\Lambda_A$ is a derivation on the tensor algebra, i.e.\ that it satisfies the Leibniz rule. Since each $D_A^h$ is an algebra homomorphism, 
\[
D_A^h(x\otimes y)=D_A^h(x)\otimes D_A^h(y)
\qquad \text{for all } x,y\in T((V)).
\]
Using this, we get
\begin{align*}
    \Lambda_A(u_1 \otimes u_2) &= -\frac{d}{dh} D^h_A(u_1 \otimes u_2) \Big\rvert_{h=0} \\
    &= -\frac{d}{dh} \Bigl(D^h_A(u_1) \otimes D^h_A(u_2) \Bigr) \Big\rvert_{h=0} \\
    &= \Bigl(-\frac{d}{dh} D^h_A(u_1) \Big\rvert_{h=0}\Bigr) \otimes D^0_A(u_2) + D^0_A(u_1) \otimes \Bigl(-\frac{d}{dh} D^h_A(u_2) \Big\rvert_{h=0} \Bigr)\\
    & = \Lambda_A(u_1) \otimes u_2 + u_1 \otimes \Lambda_A(u_2).
\end{align*}
Thus, $\Lambda_A$ is indeed a derivation. The derivation is uniquely defined by its action on the generators $V$. For $u\in V$ we have
\[
\Lambda_A(u) = -\frac{d}{dh} \Big\rvert_{h=0}D^h_A(u)= -\frac{d}{dh} \Big\rvert_{h=0} e^{-hA}u = -(-Ae^{-hA})|_{h=0}u = Au
\]
where we used differentiability of the operator exponential. Consequently, for any
elementary tensor $u_1\otimes\cdots\otimes u_n\in V^{\otimes n}$,
\begin{equation}
    \Lambda_A(u_1\otimes\cdots\otimes u_n)
=
\sum_{k=1}^n
u_1\otimes\cdots\otimes (A u_k)\otimes\cdots\otimes u_n.
\end{equation}
This explicit formula shows that $\Lambda_A$ is the unique continuous derivation on
$T((V))$ induced by the operator $A$. Since $\Lambda_A$ is a derivation on the unital algebra $T((V))$, it vanishes on the unit element; that is, $\Lambda_A(\mathbbm{1}) = 0$.

\subsection{The EWS is the Solution of a Linear CDE}

We now combine the iterated integral definition of the EWS with the operators $D_A^h$ and $\Lambda_A$ to derive a compact recursive representation and the associated linear CDE.

\begin{lemma}\label{lem_recursive_def}
    For any $[s,t] \subseteq [t_0,t_N]$, the EWS satisfies     \begin{equation}\label{eq_ews_cde_integral}
        S_{\mathbf{A}}(X)_{s,t} = \mathbbm{1} + \ \int^t_s D^{\theta_t -\theta_u}_A(S_{\mathbf{A}}(X)_{s,u} \otimes dX_u),
    \end{equation}
    where $\mathbbm{1} = (1,0,0,\dots)$ is the identity element in $T((V))$.
\end{lemma}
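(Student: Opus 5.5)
The plan is to verify the identity level by level. The zeroth level of both sides is $1$, since $D_A^h$ preserves the grading and the integral term has no level-zero component. So it suffices to show, for each $n\ge 1$,
\[
S_{\mathbf{A}}(X)^{(n)}_{s,t} = \int_s^t D_A^{\theta_t-\theta_u}\bigl(S_{\mathbf{A}}(X)^{(n-1)}_{s,u}\otimes dX_u\bigr).
\]
Here we use that $D_A^h$ maps $V^{\otimes k}$ to itself, so projecting onto level $n$ only picks up the level $n-1$ part of $S_{\mathbf{A}}(X)_{s,u}$ tensored with $dX_u$.

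The key step is an algebraic rewriting of the iterated integral in Definition~\ref{def_ews_iterated_integrals}. Split off the outermost variable $t_n=u$ and use additivity of the clock, $\theta_t-\theta_{t_k}=(\theta_t-\theta_u)+(\theta_u-\theta_{t_k})$. Then the group property of the flow operator (the preceding lemma) gives
\[
e^{-(\theta_t-\theta_{t_k})A}=e^{-(\theta_t-\theta_u)A}e^{-(\theta_u-\theta_{t_k})A}.
\]
The inner $(n-1)$-fold integral over $s\le t_1\le\dots\le t_{n-1}\le u$ therefore becomes $e^{-(\theta_t-\theta_u)A}$ applied factor-wise to the integrand defining $S_{\mathbf{A}}(X)^{(n-1)}_{s,u}$. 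The last factor is $e^{-(\theta_t-\theta_u)A}dX_u$. By the defining property $D_A^h(v_1\otimes\cdots\otimes v_n)=(e^{-hA}v_1)\otimes\cdots\otimes(e^{-hA}v_n)$, extended by linearity and continuity, the whole integrand equals $D_A^{\theta_t-\theta_u}\bigl(S_{\mathbf{A}}(X)^{(n-1)}_{s,u}\otimes dX_u\bigr)$. This is the coordinate recursion already displayed after Definition~\ref{def_ews_iterated_integrals}, written intrinsically.

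The main obstacle is justifying the interchanges. Pulling the bounded operator $D_A^{\theta_t-\theta_u}|_{V^{\otimes(n-1)}}=(e^{-(\theta_t-\theta_u)A})^{\otimes(n-1)}$ out of the inner Young integral is fine: for fixed $u$ it is a constant bounded linear map, and it commutes with Riemann–Stieltjes sums and their limits. This needs boundedness of $e^{-hA}$ and admissibility of the norms, so that $\|T^{\otimes k}\|\le\|T\|^k$. The subtler point is that the outer object $\int_s^t D_A^{\theta_t-\theta_u}(Y_u\otimes dX_u)$ is a well-defined Young integral. Here $Y_u=S_{\mathbf{A}}(X)^{(n-1)}_{s,u}$ and the integrand $u\mapsto D_A^{\theta_t-\theta_u}Y_u$ must have finite $q$-variation with $1/p+1/q>1$. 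I would establish this inductively. The map $h\mapsto D_A^h$ is smooth in operator norm on compacts, and $\theta=\ell(X)$ has finite $p$-variation, so $u\mapsto D_A^{\theta_t-\theta_u}$ has finite $p$-variation. By the induction hypothesis, $Y$ has finite $p$-variation, being itself a Young integral. Products of finite $p$-variation paths stay in $p$-variation, which closes the induction.

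With this in place, the iterated Young integral can be identified with the nested one via the standard Fubini-type identity for Young iterated integrals. Alternatively, one can simply take the nested form as the meaning of Definition~\ref{def_ews_iterated_integrals}. Summing over $n$ then gives \eqref{eq_ews_cde_integral}, with convergence in $T((V))$ understood levelwise.
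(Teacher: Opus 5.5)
Your proposal is correct and follows essentially the same route as the paper. Both arguments check the identity level by level: only the level-$(n-1)$ part of $S_{\mathbf{A}}(X)_{s,u}$ pairs with $dX_u$, the weights factor as $e^{-(\theta_t-\theta_{t_k})A}=e^{-(\theta_t-\theta_u)A}e^{-(\theta_u-\theta_{t_k})A}$, and the outer factor is recognised as $D_A^{\theta_t-\theta_u}$ acting factor-wise. The paper runs this computation from the right-hand side towards the iterated integral, and you run it in the opposite direction.

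Your additional regularity discussion goes beyond what the paper provides. The one imprecision is calling $u\mapsto S_{\mathbf{A}}(X)^{(n-1)}_{s,u}$ ``itself a Young integral'', since its integrand depends on the upper limit $u$. Finite $p$-variation still follows at once by writing it, via the group property, as $D_A^{\theta_u-\theta_s}\int_s^u D_A^{\theta_s-\theta_r}\bigl(S_{\mathbf{A}}(X)^{(n-2)}_{s,r}\otimes dX_r\bigr)$.
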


\begin{proof}
    Since trivially $S_{\mathbf{A}}(X)_{s,t}^{(0)} = 1$, we just need to show that the depth $n$ term of the integral in the claim equals the depth $n$ term of the EWS.
        \begin{align*}
            \Bigl(\int^t_s D^{\theta_t - \theta_u}_A(S_{\mathbf{A}}(X)_{s,u} \otimes dX_u)\Bigr)^{(n)} & = \Bigl(\int^t_s D^{\theta_t - \theta_u}_A(S_{\mathbf{A}}(X)_{s,u}) \otimes D^{\theta_t - \theta_u}_A(dX_u)\Bigr)^{(n)} \\
            & = \int^t_s \Bigl(D^{\theta_t - \theta_u}_A(S_{\mathbf{A}}(X)_{s,u}) \otimes D^{\theta_t - \theta_u}_A(dX_u)\Bigr)^{(n)} \\
            &= \int^t_s \sum_{k=1}^n \Bigl(D^{\theta_t - \theta_u}_A(S_{\mathbf{A}}(X)_{s,u})\Bigr)^{(k)} \otimes \Bigl(D^{\theta_t - \theta_u}_A(dX_u)\Bigr)^{(n-k)}\\
            &=\int^t_s \Bigl(D^{\theta_t - \theta_u}_A(S_{\mathbf{A}}(X)_{s,u})\Bigr)^{(n-1)} \otimes \Bigl(D^{\theta_t - \theta_u}_A(dX_u)\Bigr)^{(1)}\\
            &= \int^t_s D^{\theta_t - \theta_u}_A(S_{\mathbf{A}}(X)_{s,u}^{(n-1)}) \otimes (e^{-(\theta_t - \theta_u)A}dX_u),
        \end{align*}

        where the second equality uses the algebra homomorphism property of the flow operator, and the fourth equality uses the fact that $dX_u$ is only non zero at depth $1$. We can then plug in the iterated integral expression for the EWS to get
        \[
        \begin{aligned}
&\Bigl(\!\int_s^t \! D_A^{\theta_t-\theta_u}
(S_{\mathbf{A}}(X)_{s,u}\!\otimes dX_u)\!\Bigr)^{(n)}\\
&= \!\int_s^t \!D_A^{\theta_t-\theta_u}\!\Bigl(\!
  \int_s^u\!\int_u^{t_{n-1}}\!\!\cdots\!\int_s^{t_2}
  e^{-(\theta_u-\theta_{t_1})A}dX_{t_1}\!\otimes\!\cdots\otimes\! e^{-(\theta_u-\theta_{t_{n-1}})A}dX_{t_{n-1}}
\!\Bigr)\!\otimes\! e^{-(\theta_t-\theta_u)A}\!\,dX_u \\[4pt]
&= \int_s^t\!\int_s^u\!\int_u^{t_{n-1}}\!\cdots\!\int_s^{t_2}
   e^{-(\theta_t-\theta_{t_1})A}dX_{t_1}\otimes\cdots\otimes
   e^{-(\theta_t-\theta_{t_{n-1}})A}dX_{t_{n-1}}\otimes e^{-(\theta_t-\theta_u)A}\!\,dX_u.
\end{aligned}
\]
        Changing notation from $u$ to $t_n$ gives us exactly $S_{\mathbf{A}}(X)_{s,t}^{(n)}$, as required.
\end{proof}

This integral form now lets us write the EWS as the solution to a linear CDE in the following lemma.

\begin{lemma}\label{lemma_ews_cde}
The EWS is the unique solution of the linear controlled differential equation
\begin{equation}\label{eq_ews_cde}
      dS_{\mathbf{A}}(X)_{s,t} = -\Lambda_A S_{\mathbf{A}}(X)_{s,t}\,d\theta_t + S_{\mathbf{A}}(X)_{s,t}\otimes dX_t,\qquad S_{\mathbf{A}}(X)_{s,s}=\mathbbm{1}.
  \end{equation}
  Equivalently, in Young integral form,
  \begin{equation}
      S_{\mathbf{A}}(X)_{s,t}=\mathbbm{1}+\int_s^t\bigl(-\Lambda_A S_{\mathbf{A}}(X)_{s,u}\bigr)\,d\theta_u+\int_s^t S_{\mathbf{A}}(X)_{s,u}\otimes dX_u.
  \end{equation}
\end{lemma}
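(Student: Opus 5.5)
We prove the two claims separately: first that the EWS satisfies \eqref{eq_ews_cde}, then that the solution is unique. Because $\Lambda_A$ and $D_A^h$ preserve tensor degree, we can work level by level throughout.

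\emph{Existence.} We start from the recursive representation of Lemma~\ref{lem_recursive_def}. The key observation is that $D_A^{\theta_t-\theta_u}$ factorises as $D_A^{\theta_t}D_A^{-\theta_u}$ by the one-parameter group property. Hence
\[
S_{\mathbf{A}}(X)_{s,t}=\mathbbm{1}+D_A^{\theta_t}\,Y_t,\qquad Y_t:=\int_s^t D_A^{-\theta_u}\bigl(S_{\mathbf{A}}(X)_{s,u}\otimes dX_u\bigr),
\]
where we have used $D_A^{\theta_t}\mathbbm{1}=\mathbbm{1}$ and pulled the bounded, level-preserving operator $D_A^{\theta_t}$ outside the Young integral. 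At level $n$ the map $h\mapsto D_A^h|_{V^{\otimes n}}$ is smooth in operator norm, with derivative $-\Lambda_A D_A^h$; this follows from the explicit Leibniz formula for $\Lambda_A$. Consequently $t\mapsto D_A^{\theta_t}$ is the composition of a smooth map with a path of finite $p$-variation. We then apply the Young product rule, which is valid since $p<2$ and is recorded in Appendix~\ref{appendix_young_integration}:
\[
d\bigl(D_A^{\theta_t}Y_t\bigr)=-\Lambda_A D_A^{\theta_t}Y_t\,d\theta_t+D_A^{\theta_t}\,dY_t
=-\Lambda_A\bigl(S_{\mathbf{A}}(X)_{s,t}-\mathbbm{1}\bigr)d\theta_t+S_{\mathbf{A}}(X)_{s,t}\otimes dX_t.
\]
Since $\Lambda_A\mathbbm{1}=0$, this is exactly \eqref{eq_ews_cde}, and integrating from $s$ gives the Young integral form. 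The main obstacle is justifying this product/chain rule rigorously in the Young setting for operator-valued paths. We would handle it level by level: on each $V^{\otimes n}$ everything is a bounded-operator-valued path of finite $p$-variation, so the standard Young calculus applies. Alternatively, one could verify the integral identity directly on Riemann sums by Taylor-expanding $D_A^{\theta_{t}-\theta_u}$.

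\emph{Uniqueness.} The equation is triangular in the tensor levels. The level-$n$ component reads
\[
Z^{(n)}_t=\int_s^t -\Lambda_A Z^{(n)}_u\,d\theta_u+\int_s^t Z^{(n-1)}_u\otimes dX_u,
\]
and $\Lambda_A$ is bounded on $V^{\otimes n}$ with norm at most $n\|A\|$, using admissibility of the norms. We argue by induction on $n$. Suppose the level-$(n-1)$ component is already determined. Then the level-$n$ component solves a linear Young equation with bounded linear vector field driven by $\theta$ and with a fixed forcing term. The difference of two such solutions satisfies the homogeneous equation with zero initial condition, and therefore vanishes by standard Young ODE uniqueness (a Gr\"onwall-type argument on short intervals, then patching). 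Since level $0$ is constant and equal to $1$, the induction starts, and uniqueness holds on every level and hence in $T((V))$.
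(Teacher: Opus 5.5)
Your proof is correct, but it takes a genuinely different route from the paper's. The paper argues directly on increments. It subtracts the recursive representation of Lemma~\ref{lem_recursive_def} at $t$ and $t'$ and splits the difference into two terms. The new-interval term is identified by a local Young estimate as $S_{\mathbf{A}}(X)^{(n-1)}_{s,t}\otimes(X_{t'}-X_t)$ to first order. The term coming from $D_A^{\theta_{t'}-\theta_u}-D_A^{\theta_t-\theta_u}$ is identified by a first-order Taylor expansion of the flow as $-\delta\theta\,\Lambda_A S_{\mathbf{A}}(X)^{(n)}_{s,t}$. The paper then reads off the differential.

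Your factorisation $D_A^{\theta_t-\theta_u}=D_A^{\theta_t}D_A^{-\theta_u}$ is a variation-of-constants step that removes the $t$-dependence from the integrand. As a result, $Y$ is an ordinary Young integral, and everything reduces to integration by parts plus the chain rule for $h\mapsto D_A^h$.

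This buys you two things. First, the passage from local first-order expansions to the integral identity is delegated entirely to standard Young calculus. The paper leaves that passage informal: its remainders are stated as pointwise $o(\cdot)$ terms, whereas summing over partitions needs bounds of order $\omega(t,t')^{1+\epsilon}$ for a control $\omega$. Second, your route exposes the structure $D_A^{-\theta_t}S_{\mathbf{A}}(X)_{s,t}=S(\tilde Z)_{s,t}$ with $\tilde Z_r=\int_s^r e^{\theta_u A}\,dX_u$, which can be read directly off the iterated-integral definition. So the EWS trajectory is a single classical signature transported by the flow, a $t$-uniform version of Proposition~\ref{prop_ews_sig_reweighted}. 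This is also the cleanest way to see that $u\mapsto S_{\mathbf{A}}(X)_{s,u}$ has the finite $p$-variation you need for $Y$ to be a Young integral.

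Your uniqueness argument proceeds by induction up the triangular level structure, with a homogeneous linear Young equation in $\theta$ at each level. It is more explicit than the paper's one-line appeal to standard linear CDE theory, and it correctly deals with the fact that $T((V))$ is not itself a Banach space.

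One small correction: Appendix~\ref{appendix_young_integration} only defines the Young integral. The product rule, the chain rule, and the associativity property $\int g\,d\bigl(\int f\,d\theta\bigr)=\int gf\,d\theta$ that you use should be cited from a standard source such as Friz--Victoir.
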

\begin{proof}
    Fix $t_0 \leq s < t < t'$ with $t' := t+\Delta t$ for $\Delta t>0$ and define $\delta\theta := \theta_{t'}-\theta_t$. Applying Equation (\ref{eq_ews_cde_integral}) at times $t'$ and $t$ and subtracting gives
\[
\begin{aligned}
S_{\mathbf{A}}(X)_{s,t'}-S_{\mathbf{A}}(X)_{s,t}
&=
\int_s^{t'} D_A^{\theta_{t'}-\theta_u}\big(S_{\mathbf{A}}(X)_{s,u}\otimes dX_u\big)
-
\int_s^{t}  D_A^{\theta_{t}-\theta_u}\big(S_{\mathbf{A}}(X)_{s,u}\otimes dX_u\big) \\
&=
\underbrace{\int_t^{t'} D_A^{\theta_{t'}-\theta_u}\big(S_{\mathbf{A}}(X)_{s,u}\otimes dX_u\big)}_{(\textbf{I})}
\;+\;
\underbrace{\int_s^{t}
\Big(D_A^{\theta_{t'}-\theta_u}-D_A^{\theta_t-\theta_u}\Big)
\big(S_{\mathbf{A}}(X)_{s,u}\otimes dX_u\big)}_{(\textbf{II})}.
\end{aligned}
\]

We now consider the contribution of term $(\textbf{I})$. We work at the arbitrary tensor level $n$. Using the semi-group property of $D^A_h$, we may rewrite
\[
(\textbf{I})^{(n)}
=
\int_t^{t'}
\big(D_A^{\theta_{t'}-\theta_u} S_{\mathbf{A}}(X)_{s,u}\big)^{(n-1)}
\otimes
\big(e^{-(\theta_{t'}-\theta_u)A} dX_u\big),
\]
which is a Young integral with respect to $X$ taking values in $V^{\otimes n}$. We therefore define the integrand
\[
F_{t'}(u)
:=
D_A^{\theta_{t'}-\theta_u}S_{\mathbf{A}}(X)_{s,u}^{(n-1)}\otimes e^{-(\theta_{t'}-\theta_u)A},
\qquad u\in[t,t'],
\]
so that $(\textbf{I})$ is the Young integral of $F_{t'}$ against $X$. We first observe that $F_{t'}$ converges uniformly on $[t,t']$ to the constant tensor
$S_{\mathbf{A}}(X)_{s,t}^{(n-1)}\otimes I$ as $t'\downarrow t$.
Indeed, $u\mapsto S_{\mathbf{A}}(X)_{s,u}^{(n-1)}$ is continuous, hence
$\sup_{u\in[t,t']}\|S_{\mathbf{A}}(X)_{s,u}^{(n-1)}-S_{\mathbf{A}}(X)_{s,t}^{(n-1)}\|_{V^{\otimes (n-1)}}\to0$,
and since $\theta$ is continuous and $h\mapsto D^A_h$ (equivalently $h\mapsto e^{-hA}$)
is continuous at $h=0$, we also have
\[
\sup_{u\in[t,t']}\big\|e^{-(\theta_{t'}-\theta_u)A}-I\big\|_{\mathrm{op}} \to0,
\qquad
\sup_{u\in[t,t']}\big\|D_A^{\theta_{t'}-\theta_u}-\mathrm{Id}\big\|_{\mathrm{op}}\to0.
\]

Combining these gives
\[
\sup_{u\in[t,t']}\|F_{t'}(u)-S_{\mathbf{A}}(X)_{s,t}^{(n-1)}\otimes I\|_{V^{\otimes n}}\xrightarrow[t'\downarrow t]{}0,
\]
where the operator norms are over $L(V,V)$ and $L(V^{\otimes n}, V^{\otimes n})$ respectively. Since uniform convergence on a short interval implies small $q$-variation for any $q\ge1$,
we may choose $q$ with $\frac{1}{p}+\frac{1}{q}>1$ and apply the local Young estimate \parencite[Proposition 6.4]{friz2009multidimensional}
 to obtain
\[
\int_t^{t'} F_{t'}(u)\,dX_u
=
F_{t'}(t)\,(X_{t'}-X_t)
+
o\big(|X|_{p,[t,t']}\big).
\]
Finally,
\[
F_{t'}(t)
=
D_A^{\theta_{t'}-\theta_t}S_{\mathbf{A}}(X)_{s,t}^{(n-1)}\otimes e^{-(\theta_{t'}-\theta_t)A}
\;\longrightarrow\;
S_{\mathbf{A}}(X)_{s,t}^{(n-1)}\otimes I
\quad\text{as }t'\downarrow t,
\]
and therefore
\[
(\textbf{I})
=
S_{\mathbf{A}}(X)_{s,t}^{(n-1)}\otimes (X_{t'}-X_t)
+
o\big(|X|_{p,[t,t']}\big).
\]
In differential notation, the contribution of Term (\textbf{I}) is thus $S_{\mathbf{A}}(X)_{s,t}^{(n-1)}\otimes dX_t$.

We now consider the contribution of term $(\textbf{II})$. Before doing so we recall that on each tensor level, the family $h \mapsto D^h_A$ is $C^1$ in operator norm and its generator is $\Lambda_A$. Hence, for every $h$, we have the operator equality
\[
\frac{d}{dh}D^h_A = - \Lambda_A D^h_A.
\]
Consequently, for any $h \in \R$ and $\epsilon >0$, we have the first order Taylor expansion (in operator norm)
\[
D_A^{h+\epsilon} = D_A^h - \epsilon \Lambda_A D_A^h + R(\epsilon ,h),
\]
where the remainder $R(\epsilon, h)$ satisfies the uniform bound 
\[
\sup_{h \in H} ||R(\epsilon, h)||_{\mathrm{op}} = o(\epsilon),
\]
for any compact set $H \subset \R$. Applying this to term $(\textbf{II})$ gives us 
\[
D_A^{\theta_{t'} - \theta_u} - D_A^{\theta_{t} - \theta_u} = D_A^{(\theta_{t} - \theta_u) + \delta\theta} - D_A^{\theta_{t} - \theta_u} = - \delta \theta \Lambda_A D^{\theta_t - \theta_u}_A+ R(\delta\theta, u).
\]

Substituting this into $(\textbf{II})$ gives us 
\begin{align*}
    (\textbf{II})^{(n)} &= \int^t_s \Bigl(- \delta \theta \Lambda_A D^{\theta_t - \theta_u}_A+ R(\delta\theta, u) \Bigr)(S_{\mathbf{A}}(X)_{s,u}^{(n-1)} \otimes dX_u) \\
    &=- \delta\theta \Lambda_A \Bigl(\int^t_s D_A^{\theta_t - \theta_u}(S_{\mathbf{A}}(X)_{s,u}^{(n-1)} \otimes dX_u)\Bigr) + \int^t_s R(\delta\theta, u)(S_{\mathbf{A}}(X)_{s,u}^{(n-1)} \otimes dX_u) \\
    &= -\delta\theta \Lambda_A (S_{\mathbf{A}}(X)_{s,t}^{(n)} - \mathbbm{1}) +  \int^t_s R(\delta\theta, u)(S_{\mathbf{A}}(X)_{s,u}^{(n-1)} \otimes dX_u)\\
    &= -\delta\theta \Lambda_A S_{\mathbf{A}}(X)_{s,t}^{(n)} +  \int^t_s R(\delta\theta, u)(S_{\mathbf{A}}(X)_{s,u}^{(n-1)} \otimes dX_u),
\end{align*}
where we can pull $\Lambda_A$ out of the integral as it is a bounded linear operator. The remainder integral has the operator norm bound
\[
\Bigl|\Bigl| \int^t_s R(\delta\theta,u)(S_{\mathbf{A}}(X)_{s,u}^{(n-1)} \otimes dX_u) \Bigr|\Bigr|_{V^{\otimes n}} \leq \sup_{u \in [s,t]} || R(\delta\theta,u)||_{\mathrm{op}} \Bigl| \Bigr| \int^t_s(S_{\mathbf{A}}(X)_{s,u}^{(n-1)} \otimes dX_u)\Bigl|\Bigr|_{V^{\otimes n}} = o(\delta\theta),
\]
since the Young integral $\int^t_s(S_{\mathbf{A}}(X)_{s,u}^{(n-1)} \otimes dX_u)$ is a fixed bounded tensor. This leaves us with 
\[
(\textbf{II}) = -\delta\theta\Lambda_A S_{\mathbf{A}}(X)_{s,t}^{(n)}   + o(\delta\theta).
\]
Letting $\delta \theta \rightarrow 0$ (i.e. $t' \downarrow t$) yields that the contribution of $(\textbf{II})$ to the increment is the differential $-\Lambda_A S_{\mathbf{A}}(X)_{s,t}^{(n)}d\theta_t$. Finally, combining the contributions from $(\textbf{I})$ and $(\textbf{II})$, and reassembling the level-wise equations, we get 
\[
dS_{\mathbf{A}}(X)_{s,t}
=
S_{\mathbf{A}}(X)_{s,t}\otimes dX_t
-
\Lambda_A S_{\mathbf{A}}(X)_{s,t}\,d\theta_t,
\]
with $S_{s,s}=\mathbbm1$. Uniqueness follows from standard theory for linear Young-controlled differential equations on $[s,t_N]$.
\end{proof}

\section{Further Interpretations of the EWS}

In addition to its definition via weighted iterated integrals, the EWS admits several useful interpretations that connect it to existing models in time-series analysis and signal processing. In this section we explore these perspectives, showing in particular that over a fixed interval, the EWS is the signature of a suitably re-weighted path (although this path is interval dependent), and that its first tensor level relates naturally to state space models and spectral filtering methods. Moreover, in a learning setting the EWS corresponds to a structured linear neural controlled differential equation (SLiCE) \parencite{walker2025structuredlinearcdesmaximally}, with a specific structure that provides an inductive bias for contextualising memory.

\subsection{Equivalence to the Signature of an Exponentially Weighted Path} \label{sec_sig_reweighted_path}
We now show that, over a fixed interval, EWS admits an equivalent representation as the classical signature of a linearly transformed path. We consider $X \in \mathcal{V}^p([t_0,t_N], V)$ and the bounded linear operator $A \in L(V,V)$ as before. We define the exponentially re-weighted path over $[s,t] \subseteq [t_0,t_N]$ by 
\begin{equation}
    Z^{[t]}_r :=\int_{s}^r e^{-(\theta_t-\theta_u)A}\,dX_u, \qquad r \in [s, t],
\end{equation}
where the integral is understood in the Young sense. Note that we must select a horizon denoted by $[\cdot]$. This construction defines a causal linear memory transform of the driving path $X$, with operator-valued kernel $h\mapsto e^{-hA}$. The main observation of this subsection is that the EWS of $X$ over the interval $[s,t]$ coincides exactly with the classical signature of the weighted path $Z^{[t]}$ over the same interval.

\begin{proposition}\label{prop_ews_sig_reweighted}
    Let $[s,t] \subseteq [t_0,t_N]$ and let $Z^{[t]}$ be defined as above. Then,
\begin{equation}
    S_{\mathbf{A}}(X)_{s,t} = S(Z^{[t]})_{s,t}
\end{equation}
as elements of the tensor algebra.
\end{proposition}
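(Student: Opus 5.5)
The plan is a direct level-by-level comparison of the iterated integrals, with $t$ held fixed throughout. Because $t$ is fixed, $e^{-(\theta_t-\theta_u)A}$ is a deterministic, operator-valued integrand in $u$. It is continuous and of bounded variation: $\theta$ has finite $p$-variation as $\ell(X)$ with $\ell$ bounded, and $h\mapsto e^{-hA}$ is smooth in operator norm. Hence $Z^{[t]}$ is a well-defined Young integral. Moreover, $Z^{[t]}$ lies in $\mathcal{V}^p([s,t],V)$, with $p$-variation controlled by $\sup_{u}\|e^{-(\theta_t-\theta_u)A}\|_{\mathrm{op}}$ and the variation of that integrand, times $|X|_{p}$. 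So $S(Z^{[t]})_{s,t}$ is defined in the Young sense, as in the definition of the signature.

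The key step is the associativity (substitution) rule for Young integrals: if $Z_r=\int_s^r G_u\,dX_u$ with $G$ of finite $q$-variation and $1/p+1/q>1$, then $\int_s^r Y_u\otimes dZ_u=\int_s^r Y_u\otimes G_u\,dX_u$ for any integrand $Y$ of finite $p$-variation. I will first establish this on Riemann--Stieltjes sums and then pass to the limit. The comparison of Riemann sums uses the local Young estimate \parencite[Proposition 6.4]{friz2009multidimensional}, which gives $Z_{u_{i+1}}-Z_{u_i}=G_{u_i}(X_{u_{i+1}}-X_{u_i})+O(\omega(u_i,u_{i+1})^{1/p+1/q})$ for a control $\omega$. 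The accumulated error vanishes because $1/p+1/q>1$. I expect this justification, uniform enough to apply at every tensor level, to be the main technical obstacle, although it is standard.

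With this rule in hand, I induct on the level $n$ using the inner-variable form of both objects. Define
\[
\Sigma^{(n)}_{r}:=\int_s^r\int_s^{t_n}\cdots\int_s^{t_2} e^{-(\theta_t-\theta_{t_1})A}dX_{t_1}\otimes\cdots\otimes e^{-(\theta_t-\theta_{t_n})A}dX_{t_n}, \qquad r\in[s,t].
\]
Here the apex in every exponential is the fixed $t$, not $r$, so that $\Sigma^{(n)}_t=S_{\mathbf{A}}(X)^{(n)}_{s,t}$ by Definition \ref{def_ews_iterated_integrals}. The recursion is $\Sigma^{(n)}_r=\int_s^r \Sigma^{(n-1)}_u\otimes e^{-(\theta_t-\theta_u)A}dX_u$. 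The inductive claim is $\Sigma^{(n)}_r=S(Z^{[t]})^{(n)}_{s,r}$ for all $r\in[s,t]$. The base case $n=1$ is the definition of $Z^{[t]}$. For the inductive step, substitution gives
\[
S(Z^{[t]})^{(n)}_{s,r}=\int_s^r S(Z^{[t]})^{(n-1)}_{s,u}\otimes dZ^{[t]}_u=\int_s^r \Sigma^{(n-1)}_u\otimes e^{-(\theta_t-\theta_u)A}dX_u=\Sigma^{(n)}_r.
\]
This step requires that $u\mapsto\Sigma^{(n-1)}_u$ have finite $p$-variation, which follows from the standard Young bounds on iterated integrals.

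Setting $r=t$ and using that the level-zero terms are both $1$, the two elements agree in every $V^{\otimes n}$, which proves the Proposition. It is worth noting explicitly why the result is tied to the horizon. Freezing $t$ is essential: the exponentials $e^{-(\theta_t-\theta_{t_k})A}$ in the EWS depend only on each integration variable and the fixed apex, so they factor through the single path $Z^{[t]}$. This is also why the identity holds only at the terminal time $t$ and not as an equality of processes in $t$.
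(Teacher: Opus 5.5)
Your proof is correct and is essentially the paper's argument. The paper simply substitutes $dZ^{[t]}_u = e^{-(\theta_t-\theta_u)A}\,dX_u$ into the level-$n$ iterated integral, and your induction on $n$ (with moving upper limit $r$ and frozen apex $t$), together with the associativity rule for Young integrals, is exactly the justification that this formal substitution leaves implicit; one small correction is that bounded variation of $u\mapsto e^{-(\theta_t-\theta_u)A}$ comes from the monotonicity of the clock $\theta$, not from boundedness of $\ell$ (which only gives finite $p$-variation), though finite $p$-variation with $p<2$ already suffices for your Young estimates.
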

\begin{proof}
    We show that this holds for the depth $n$ term by  substituting $dZ^{[t]}_u = e^{-(\theta_t-\theta_u)A}\,dX_u$ into the classical definition of the signature:
    \begin{align*}
        S(Z^{[t]})_{s,t}^{(n)} &= \int^t_s \int^{t_n}_s \cdots \int^{t_2}_s dZ^{[t]}_{t_1} \otimes \cdots \otimes dZ^{[t]}_{t_n}\\
        &=\int^t_s \int^{t_n}_s \cdots \int^{t_2}_s \Bigl(e^{-(\theta_t-\theta_{t_1})A}\,dX_{t_1} \Bigr) \otimes \Bigl(e^{-(\theta_t-\theta_{t_n})A}\,dX_{t_n}\Bigr)\\
        &= S_{\mathbf{A}}(X)_{s,t}^{(n)}.
    \end{align*}
\end{proof}

\begin{remark}
    The identity $S_{\mathbf{A}}(X)_{s,t} = S(Z^{[t]})_{s,t}$ holds for each fixed terminal time $t$. It is therefore a pointwise-in-time statement. It does not assert that the map $t \mapsto S_{\mathbf{A}}(X)_{s,t}$ can be written as $t \mapsto S(Z)_{s,t}$ for some single path $Z$. Thus, the EWS trajectory $t \mapsto S_{\mathbf{A}}(X)_{s,t}$ is not the classical signature of a single evolving path, but rather a family of classical signatures taken over a family of time-dependent re-weightings. This observation explains why the EWS does not satisfy the classical Chen identity in its usual form, and why a re-weighting term appears in the modified Chen identity established in Lemma \ref{lemma_ews_chen_p1}.
\end{remark}

The identity $S_{\mathbf{A}}(X)_{s,t} = S(Z^{[t]})_{s,t}$ is useful in analysis as it reduces some structural properties of the EWS to the corresponding classical signature results applied to the single path $Z^{[t]}$. Despite this theoretical reduction, the $Z^{[t]}$ viewpoint is of limited direct use for time-evolving implementations: the object computed at time $t$ is the signature of the path $Z^{[t]}$, while at time $t+h$ is is the signature of a different path $Z^{[t+h]}$. There is no simple algebraic relation between $S(Z^{[t]})_{s,t}$ and $S(Z^{[t]})_{s,t+h}$ that avoids the flow operator.

\subsection{Relationship to Neural Controlled Differential Equations}\label{sec_lncde}

We now show that the EWS fits within the structured linear neural controlled differential equation (SLiCE) framework of \textcite{walker2025structuredlinearcdesmaximally}. A CDE describes how a solution path evolves in response to increments of a driving control path, with the relationship governed by a vector field. The solution evolves continuously in time and depends on changes in the control rather than its values (see \ref{appendix_cde}). Neural controlled differential equations (NCDEs) are continuous-time time-series models that interpret observed data streams as samples from a control path, which in turn drives a CDE with a neural-network-parametrised vector field \parencite{kidger2020neuralcde, kidger2022neuraldifferentialequations}.

\begin{definition}[Linear Neural Controlled Differential Equations \parencite{walker2025structuredlinearcdesmaximally, kidger2020neuralcde}]

Let $\{(t_i,x_i)\}_{i=0}^N$ denote a set of observations from a multivariate time-series and $X:[t_0,t_N] \to \R^{d_X}$ be a continuous path representation of the observations such that $X_{t_i} = (t_i,x_i)$. NCDEs are defined as 
\begin{equation}
    h_{t_0} = \xi_{\phi}(t_0,x_0), \quad h_t = h_{t_0} + \int^t_{t_0} g_{\theta}(h_s)dX_s, \quad z_t = l_{\psi}(h_t),
\end{equation}
where $\xi_{\phi}: \R^{d_x} \to \R^{d_h}$ and $g_{\theta}:\R^{d_h} \to \R^{d_h \times d_X}$ are neural networks, and $l_{\psi}: \R^{d_h} \to \R^{d_z}$ is a linear map. Further, linear NCDEs (LNCDEs) take the form 
\begin{equation}
    h_t = h_{t_0} + \int^t_{t_0} \sum_{i=1}^{d_w} A^i_{\theta}h_sdw_s^{X,i} = h_{t_0} + \int^t_{t_0} \Bigl( \sum_{i=1}^{d_w} A^i_{\theta}dw^{X,i}_s\Bigr) h_s,
\end{equation}
where $w^X: [t_0,t_N] \to \R^{d_w}$ is a path which depends on the input and the $A^i_{\theta}$ are trainable matrices.
\end{definition}

Lemma \ref{lemma_ews_cde} shows that, just like the signature, the EWS is the solution to a linear CDE which takes values in the tensor algebra. However, to make contact with standard finite-dimensional models, we truncate the tensor algebra at depth $n$ and work with the truncated EWS. Applying the canonical grade–lexicographic flattening
\begin{equation}
    \Phi: T^n(\mathbb{R}^d)\xrightarrow{\;\cong\;}\mathbb{R}^D,
\qquad D=\sum_{k=0}^n d^k,
\end{equation}
the tensor-valued linear CDE satisfied by the EWS becomes a finite-dimensional linear CDE in $\R^D$. Writing $dX_t = \sum_{i=1}^D e_i dX^i_t$, one obtains the flattened CDE
\begin{equation}
    dS_{\mathbf{A}}(X)^{\leq n}_{s,t}
=
-\,L\, S_{\mathbf{A}}(X)^{\leq n}_{s,t}\, d\theta_t
+
\sum_{i=1}^d \rho(e_i)\, S_{\mathbf{A}}(X)^{\leq n}_{s,t}\, dX_t^i,
\end{equation}
where 
\begin{equation}L=\operatorname{blockdiag}\big(L^{(0)},L^{(1)},\dots,L^{(n)}\big),
\qquad L^{(0)}=[0],
\end{equation}
is the matrix representation of $\Lambda_A$, and each $\rho(e_i) \in \R^{D\times D}$ is the sparse lower-triangular matrix representing right tensor-multiplication by $e_i$. Equivalently, writing $M_i$ for the coefficient matrices, the system can be cast in standard linear CDE form as
\begin{equation}\label{eq:Mi_form}
dS_{\mathbf{A}}(X)^{\leq n}_{s,t} = \sum_{i=1}^d M_i S_{\mathbf{A}}(X)^{\leq n}_{s,t}dX^i_t,
\qquad M_i = \begin{cases}
    -L + \rho(e_1), & i=1 \\
    \rho(e_i), & i \in \{2,\dots,d\}.
\end{cases}
\end{equation}

This representation makes explicit that the truncated EWS evolves according to a linear CDE with a highly structured collection of coefficient matrices. Hence when the parameters of $\mathbf{A} = (A,B)$ are learnable, the truncated EWS defines a structured linear NCDE within the SLiCE framework that encodes a specific inductive bias. In particular, its hidden state has dimension $D$ which is determined by the truncation depth $n$ and the dimension of the transformed input path. We illustrate this structure with a simple example.

\begin{exmp}
We consider the case of a path $X \in \R^2$ and compute explicitly the CDE matrices for the EWS truncated at depth $n=2$ with $B$ the identity. Let $
A =
\begin{pmatrix}
a & b \\
c & d
\end{pmatrix}.
$ The truncated tensor algebra has dimension $D = 1 + 2 + 4 = 7$, with grade–lexicographic basis $
\{1,\; e_1,\; e_2,\; e_1 \otimes e_1,\; e_1 \otimes e_2,\; e_2 \otimes e_1,\; e_2 \otimes e_2\}.$ The derivation matrix $L = \operatorname{blockdiag}(L^{(0)}, L^{(1)}, L^{(2)})$ has graded blocks
\begin{equation*}
    L^{(0)} = [0],
\qquad
L^{(1)} =
\begin{pmatrix}
a & b \\
c & d
\end{pmatrix}, \qquad
    L^{(2)} =
\begin{pmatrix}
2a & b & b & 0 \\
c & a+d & 0 & b \\
c & 0 & a+d & b \\
0 & c & c & 2d
\end{pmatrix}.
\end{equation*}

Right tensor multiplication by $e_1$ and $e_2$ is represented by
\begin{equation*}
    \rho(e_1) =
\begin{pmatrix}
0 & 0 & 0 & 0 & 0 & 0 & 0 \\
1 & 0 & 0 & 0 & 0 & 0 & 0 \\
0 & 0 & 0 & 0 & 0 & 0 & 0 \\
0 & 1 & 0 & 0 & 0 & 0 & 0 \\
0 & 0 & 0 & 0 & 0 & 0 & 0 \\
0 & 0 & 1 & 0 & 0 & 0 & 0 \\
0 & 0 & 0 & 0 & 0 & 0 & 0
\end{pmatrix},
\qquad
\rho(e_2) =
\begin{pmatrix}
0 & 0 & 0 & 0 & 0 & 0 & 0 \\
0 & 0 & 0 & 0 & 0 & 0 & 0 \\
1 & 0 & 0 & 0 & 0 & 0 & 0 \\
0 & 0 & 0 & 0 & 0 & 0 & 0 \\
0 & 1 & 0 & 0 & 0 & 0 & 0 \\
0 & 0 & 0 & 0 & 0 & 0 & 0 \\
0 & 0 & 1 & 0 & 0 & 0 & 0
\end{pmatrix}.
\end{equation*}

The instantaneous linear operator appearing in the flattened EWS CDE, $dS_{\mathbf A}(X)^{\le 2}_{s,t}
=
\mathcal{M}_t \,
S_{\mathbf A}(X)^{\le 2}_{s,t},$ is therefore
\begin{equation*}
    \mathcal{M}_t
=
- L \, d\theta_t
+
\rho(e_1) \, dX_t^1
+
\rho(e_2) \, dX_t^2,
\end{equation*}
which explicitly equals
\begin{equation*}
    \begin{pmatrix}
0 & 0 & 0 & 0 & 0 & 0 & 0 \\[6pt]
dX_t^1 & -a\,d\theta_t & -b\,d\theta_t & 0 & 0 & 0 & 0 \\[6pt]
dX_t^2 & -c\,d\theta_t & -d\,d\theta_t & 0 & 0 & 0 & 0 \\[8pt]
0 & dX_t^1 & 0 & -2a\,d\theta_t & -b\,d\theta_t & -b\,d\theta_t & 0 \\[8pt]
0 & dX_t^2 & 0 & -c\,d\theta_t & -(a+d)\,d\theta_t & 0 & -b\,d\theta_t \\[8pt]
0 & 0 & dX_t^1 & -c\,d\theta_t & 0 & -(a+d)\,d\theta_t & -b\,d\theta_t \\[8pt]
0 & 0 & dX_t^2 & 0 & -c\,d\theta_t & -c\,d\theta_t & -2d\,d\theta_t
\end{pmatrix}.
\end{equation*}
\end{exmp}

\subsection{Relationship to State Space Models}

We now demonstrate that the EWS provides a principled extension of State Space Models (SSMs) to the rough path setting, while also introducing intrinsic non-linearity through its higher-order terms.

Let $x \in C^0([t_0,t_N], V)$ and consider the SSM defined by the evolution
\begin{equation} \label{eq_ssm}
    dh_t = -A h_t dt + B x_t dt, \quad h_{t_0} = 0,
\end{equation}
where $A \in L(W,W)$ and $B \in L(V,W)$ are bounded linear operators and $h_t \in W$ is the latent state at time $t \in [t_0,t_N]$ \parencite{kalman1960}. Note that in the literature, SSMs are usually defined using $+A$; our choice of $-A$ is purely a notational difference. The solution to this system at time $t$ is given by
\begin{equation}
    h_t = \int^t_{t_0} e^{-A(t-s)}Bx_s ds.
\end{equation}
To compare this solution to the definition of the re-weighted path in Section \ref{sec_sig_reweighted_path}, we define $X \in \mathcal{V}^1([t_0,t_N], V)$ as the integral path of the signal $x$, such that $X_t = \int_{t_0}^t x_s ds$. With the intrinsic clock $\theta_t = t$, the re-weighted path is given by $Z^{[t]}_r = \int^r_{t_0} e^{-A(t-s)}dX_s$. It is important to note that for $r < t$, the re-weighted path $Z^{[t]}_r$ does not coincide with the latent state trajectory $h_r$, as the latter decays relative to the moving time $r$. However, at the terminal time $r=t$, these two paths intersect. Specifically, the terminal state of the SSM is identically the endpoint of the re-weighted path:
\begin{equation}
h_t = Z^{[t]}_t.
\end{equation}

Given that $Z^{[t]}$ and $h$ are distinct paths, it follows that the depth one term of the EWS is not the signature of the latent state trajectory of an SSM. However, since both paths have the same start and end points, they have the same increment and thus the same depth one signature term. Therefore, although $S_{\mathbf{A}}(X)_{t_0,t} = S(Z^{[t]})_{t_0,t}\neq S(h)_{t_0,t}$, the equality holds at depth one. This can be seen more directly via definition \ref{def_ews_iterated_integrals} which immediately identifies the depth one term of the EWS of $X$ with the latent state of the SSM with the path's derivative as input:
\begin{equation}
    S_{\mathbf{A}}(X)^{(1)}_{t_0,t} = \int^t_{t_0}e^{-(t-s)A}dX_s = \int^t_{t_0}e^{-(t-s)A}x_sds = h_t.
\end{equation}
Hence, the paths $t \mapsto S_{\textbf{A}}(X)^{(1)}_{t_0,t}$ and $t \mapsto h_t$ are equal. This, identification confirms that the EWS framework naturally encapsulates the linear dynamics of traditional SSMs within its first level. The EWS then extends beyond SSMs in several ways. While modern SSM architectures often process input channels independently \parencite{gu2021efficiently, gu2024mamba}, the EWS is natively multi-dimensional, allowing higher-order terms to capture cross-channel behaviour. Additionally, while the latent state of an SSM is by definition a linear function of its input history, the EWS introduces non-linearity at higher depths through its iterated integrals. Beyond expressivity, the EWS generalizes the SSM to the rough path setting; whereas the ODE formulation in Equation (\ref{eq_ssm}) requires a well-defined derivative $x_t=\tfrac{d}{dt}X_t$, the EWS remains mathematically rigorous for any path $X$ with finite $p$-variation for $p < 2$, without requiring $X$ to be differentiable.

\subsection{The EWS Generalises the Fourier \& Laplace Transforms of the Path}

The EWS admits a natural interpretation in terms of spectral filtering and its non-linear extensions. At the first tensor level, the EWS reduces to a collection of exponentially weighted linear functionals of the path, determined by the spectrum of the weighting operator $A$. When the eigenvalues of $A$ are real, these functionals correspond to Laplace modes; when they are complex, which is only achievable when $A$ is non-diagonal, they correspond to Fourier-type modes. In this sense, depth one term of the EWS realises a finite-window, causal Laplace or Fourier transform of the path increments, evaluated at a finite collection of spectral parameters. The structure of $A$ governs how these spectral modes are shaped and mixed across channels, while the embedding $B$ determines how many such modes are present. Higher tensor levels then depart from purely spectral analysis: rather than introducing new frequencies, they encode multilinear interactions between the filtered components, extending linear time–frequency representations into a structured non-linear pathwise framework. We now make these connections precise, beginning with the first tensor level.

As before, letting $X \in \mathcal{V}^p([t_0,t_N], V)$ for $p <2$ and $\mathbf{A} = (A,B)$ with $A \in L(W,W)$ and $B \in L(V,W)$, we get that the depth one term of the EWS over $[t_0,t]$ is 
\begin{equation}
    S_{\mathbf{A}}(X)_{t_0,t}^{(1)} = \int^t_{t_0}e^{-(\theta_t - \theta_u)A}d\widehat{X}_u \in W,
\end{equation}
where $\widehat{X} \in \mathcal{V}^p([t_0,t_N], W)$ is the lifted path given by $\widehat{X}_t := B X_t$. For fixed $t \in [t_0,t_N]$, the assignment $X \mapsto S_{\mathbf{A}}(X)^{(1)}_{t_0,t}$ defines a linear functional of the path increments, and the spectral structure of the operator exponential of $A$ governs the type of filtering performed. We now detail how different structures of $A$ yield different families of filters, beginning with the diagonal case (EFM when the entries are in $\R^{+}$). To simplify the analysis, we let $V = \mathbb{R}^d$ and $W = \mathbb{R}^m$ (although with a bit more rigour the following arguments can be made for general Banach spaces $V$ and $W$). 

Suppose that $A = \mathrm{diag}(\lambda_1,\dotsm \lambda_m)$ for $\lambda_i \in \mathbb{R}$. Then the depth one term of the EWS becomes 
\begin{equation}
    S_{\mathbf{A}}(X)_{t_0,t}^{(1)} = \Bigl(\int^t_{t_0} e^{-\lambda_k(\theta_t - \theta_u)}d\widehat{X}_u\Bigr)_{k=1}^m,
\end{equation}
with the $k-$th coordinate denoted by $S_{\mathbf{A}}(X)_{t_0,t}^{(1),k}$. Thus, each coordinate may be interpreted as a finite–window Laplace transform evaluation of the lifted path. When $\widehat{X}$ has bounded variation, the Young integral coincides with integration against the signed measure determined by its increments, and the above expression is the Laplace transform of that measure evaluated at $\lambda_k$. More generally, for paths of finite $p$–variation with $p<2$, the coordinates resemble evaluations of the Laplace transform of a distribution in the sense of Schwartz \parencite{schwartz1950distributions, schwartz1951distributions2}. For this reason, we refer to $\int^t_{t_0}e^{-\lambda (\theta_t-\theta_u)}dX_u$ as the (finite window) Laplace transform of the path $X$ evaluated at $\lambda$. This terminology is natural in our rough path framework where paths are characterised not by pointwise densities but through the CDEs that they drive, and where integration against the path is the fundamental operation. Thus, the depth one term of the EWS in the diagonal case is a bank of Laplace modes of the path. Increasing the dimension of the latent space $W$ via the lift $B$ increases the number of such spectral probes, corresponding to evaluating the transform at additional parameter values $\lambda_k$.

We now consider the case in which $A$ is diagonalisable but not necessarily diagonal in the chosen basis. For the spectral analysis, we work over the complexification of $W$, in which $A$ admits a decomposition $A = P \Gamma P^{-1}$, where $\Gamma = \mathrm{diag}(\lambda_1,\dots,\lambda_m)$ with $\lambda_k \in \mathbb{C}$. Then the depth one term of the EWS can be written as 
\begin{align*}
    S_{\mathbf{A}}(X)_{t_0,t}^{(1)} = P \int_{t_0}^t
e^{-(\theta_t - \theta_u)\Gamma}
\, d(P^{-1}\widehat{X}_u).
\end{align*}
Thus, in the diagonalisable case, the depth one term is obtained by first applying a diagonal bank of Laplace or Fourier modes—corresponding to the spectral parameters $\lambda_k$—to the transformed path $P^{-1}\widehat{X}$, and then mixing the resulting components via the linear map $P$. In particular, each output coordinate is a fixed linear combination of scalar transform evaluations at the spectral parameters $\lambda_k$.

Consequently, at depth one, a diagonalisable operator $A$ does not increase the class of linear functionals representable beyond those obtainable from a diagonal bank followed by a linear layer, provided the diagonal bank is allowed arbitrary (possibly complex) spectral parameters. However, allowing general $A$ strictly enlarges the admissible class of spectral filters beyond the purely decaying modes of the diagonal case (EFM), since complex eigenvalues introduce oscillatory behaviour. The eigenvector structure further enables multiple spectral modes to contribute to each output coordinate, producing richer frequency responses per latent dimension than strictly channel–separable filtering. The mixing is built directly into the dynamical representation, rather than being imposed as a separate post-processing step.

We finally consider the case in which $A$ is not diagonalisable. Working again over the complexification of $W$, the operator admits a Jordan decomposition $A = P J P^{-1}$,
where $J$ is block diagonal with Jordan blocks corresponding to eigenvalues $\lambda_k \in \mathbb{C}$. For a Jordan block of size $r$ associated with eigenvalue $\lambda$, the matrix exponential takes the form
\begin{equation*}
    e^{-hJ}
=
e^{-\lambda h}
\Big(I + hN + \tfrac{h^2}{2!}N^2 + \dots + \tfrac{h^{r-1}}{(r-1)!}N^{r-1}\Big),
\end{equation*}
where $N$ is nilpotent. Substituting into the depth one term yields
\begin{align*}
S_{\mathbf{A}}(X)_{t_0,t}^{(1)}
=
P \int_{t_0}^{t}
e^{-(\theta_t - \theta_u)J}
\, d(P^{-1}\widehat{X}_u).
\end{align*}
In this case, the kernels are no longer purely exponential. Instead, each spectral mode $e^{-\lambda(\theta_t - \theta_u)}$ is accompanied by polynomial factors in $(\theta_t - \theta_u)$. Consequently, the depth one term consists of linear combinations of integrals against polynomial–exponential kernels of the form
\begin{equation}
    (\theta_t - \theta_u)^k e^{-\lambda(\theta_t - \theta_u)}.
\end{equation}
While the representation remains linear in the path, the admissible class of filters is strictly enlarged beyond purely exponential modes. In particular, non-diagonalisable operators allow polynomially modulated Laplace or Fourier modes, thereby increasing the expressivity of the depth one term of the EWS.

We now compare the depth one term of the EWS with the classical short-time Fourier transform (STFT). Recall that, for a real-valued signal $x$, the STFT at time $t$ and frequency $\omega$ is given by
\begin{equation}
    \mathrm{STFT}_x(t,\omega)
=
\int_{\mathbb{R}} x(u)\, w(u-t)\, e^{-i\omega u}\, du,
\end{equation}
where $w$ is a window function \parencite{mallat2009wavelet}. In practice, this corresponds to sliding a window along the signal and computing Fourier coefficients on each window, yielding a time–frequency representation. The depth one term of the EWS may be viewed as a continuous-time, causal analogue of the STFT. When $A$ has purely imaginary eigenvalues, the integrand consists of oscillatory Fourier-type modes; when the eigenvalues have negative real parts, these modes are exponentially damped. Unlike the classical STFT, which employs a sliding compact window, the EWS integrates over the entire past $[t_0,t]$ with exponential temporal weighting. In this sense, depth one term of the EWS defines a causal short-time Fourier transform of the path, with memory determined dynamically by $A$ rather than by an externally imposed window. Viewing $t \mapsto S_{\mathbf{A}}(X)^{(1)}_{t_0,t}$ as a path in $W$, the depth one EWS produces a time-indexed family of spectral coefficients analogous to a spectrogram, but evolving continuously and without repeated windowing \parencite{mallat2009wavelet}. Increasing the dimension of the latent space via $B$ corresponds to enlarging the frequency bank, exactly as in classical time–frequency analysis.

However, the analogy with the STFT holds only at depth one. Higher tensor levels of the EWS take the form
\begin{equation}
    \int^t_{t_0} \!\!\int^{t_2}_{t_0}
e^{-(\theta_t-\theta_{t_1})A} d\widehat{X}_{t_1}
\otimes
e^{-(\theta_t-\theta_{t_2})A} d\widehat{X}_{t_2},
\end{equation}
and higher-order analogues. These terms encode multilinear interactions between the exponentially filtered components of the path. They do not introduce new spectral frequencies; rather, they encode products and cross-interactions between spectral modes across time and across channels. In particular, for a $d$–dimensional path, a classical vector-valued STFT treats each channel independently and any cross-channel interaction must be introduced by a subsequent mixing layer. By contrast, the EWS incorporates channel mixing intrinsically through both the operator $A$ and the tensor algebra structure at higher depths. Thus, the EWS generalises the STFT in two senses: temporally, by replacing sliding windows with continuous exponential memory, and structurally, by extending linear time–frequency analysis to include multilinear interactions between spectral components. 

\subsubsection{Benefits of Non-Diagonal Operators: A Duffing Oscillator Example}

In the previous section, we observed that the Jordan structure of $A$ 
generates polynomial-exponential memory kernels at depth one, extending 
the class of temporal filters beyond the purely exponential modes of the 
EFM. Here we make this advantage precise through a concrete prediction 
task: given observations of a path $(s,u_s,x_s)_{s \in [t_0,t]}$, predict 
the future state $x_{t+h}$ using a linear map on a truncated signature 
transform. We show that representing the non-linear forcing terms appearing 
in the integral formulation of the Duffing oscillator requires constant 
truncation depth $3$ for the EWS with Jordan-structured $A$, compared to 
depth $3(K+1)$ for the EFM, for the same factorial-in-$K$ approximation 
accuracy. This saving grows linearly with the desired accuracy and arises 
directly from the Jordan structure encoding polynomial memory at depth one 
rather than through iterated integration.

Let $u\in C^0([t_0,t_N], \R)$ be a scalar forcing signal and consider the 
Duffing oscillator
\begin{equation}
    \ddot x_t + \alpha \dot x_t + \beta x_t + \gamma x_t^3 = \delta u_t,
\end{equation}
with parameters $(\alpha,\beta,\gamma,\delta)\in\mathbb R$ and initial 
conditions $(x_{t_0},\dot x_{t_0})$ \parencite{kovacic2011duffing}. Setting 
$X_t := (t,u_t,x_t)$ and introducing $v = \dot{x}$, the velocity admits 
the integral formulation
\begin{equation}\label{eq_v_integral}
    v_t = e^{-\alpha (t-t_0)} v_{t_0}
    + \int_{t_0}^t e^{-\alpha (t-s)}\bigl(-\beta x_s - \gamma x_s^3 
    + \delta u_s \bigr)\,ds.
\end{equation}
To construct the EWS representation, fix $K \geq 0$ and define 
$B \in \R^{(2K+3) \times 3}$ by 
\begin{equation}
    \widehat{X}_t := BX_t = (t,\, x_t, 0,\dots,0,\, u_t, 0,\dots,0) 
    \in \R^{2K+3},
\end{equation}
where $x_t$ occupies the first slot of the $x$-block and $u_t$ occupies 
the first slot of the $u$-block, with the remaining $K$ slots in each block 
set to zero. We take $A = \mathrm{diag}(\lambda_t, \widetilde{A}_x, 
\widetilde{A}_u)$ where $\lambda_t > 0$ and $\widetilde{A}_x, \widetilde{A}_u 
\in \mathbb{R}^{(K+1)\times(K+1)}$ are Jordan blocks of the form
\begin{equation}\label{eq_duffing_A}
    \widetilde{A} = \begin{pmatrix}
\lambda & 0 & \cdots & 0 \\
-1 & \lambda & \cdots & 0 \\
\vdots & \ddots & \ddots & \vdots \\
0 & \cdots & -1 & \lambda
\end{pmatrix}
\in \mathbb{R}^{(K+1)\times(K+1)}, \qquad \lambda>0,
\end{equation}
with parameters $\lambda_x, \lambda_u > 0$ respectively.

\begin{proposition}\label{prop_jordan_chain}
Let $S_{\mathbf{A}}(X)_{t_0,t} \in T((\R^{2K+3}))$ be the EWS of $X$ with 
parameters $\mathbf{A}=(A,B)$ and clock $\theta_t = t$, with first-level 
coordinates $S_{\mathbf{A}}(X)^{(1)}_{t_0,t} = (S_t^t, S^{x,0}_t, 
\dots, S^{x,K}_t, S^{u,0}_t, \dots, S^{u,K}_t)$. Then for each 
$m = 0,\dots,K$,
\begin{equation}
S_t^{x,m} = \int_{t_0}^t e^{-\lambda_x(t-s)}\frac{(t-s)^m}{m!}\,dx_s,
\qquad
S_t^{u,m} = \int_{t_0}^t e^{-\lambda_u(t-s)}\frac{(t-s)^m}{m!}\,u_s\,ds,
\end{equation}
where the integral is understood in the Riemann--Stieltjes sense.
\end{proposition}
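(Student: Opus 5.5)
The plan is to work entirely at the first tensor level. There, by Definition~\ref{def_ews_iterated_integrals} with $\theta_t=t$, we have $S_{\mathbf{A}}(X)^{(1)}_{t_0,t}=\int_{t_0}^t e^{-(t-s)A}\,d\widehat X_s$ with $\widehat X=BX$. Since $A=\mathrm{diag}(\lambda_t,\widetilde A_x,\widetilde A_u)$ is block diagonal, $e^{-hA}$ is block diagonal too. The integral therefore decouples into three independent blocks: the time channel, the $x$-block driven by $(x_s,0,\dots,0)$, and the $u$-block driven by $(u_s,0,\dots,0)$. So it suffices to compute the first column of $e^{-h\widetilde A}$ for a single Jordan-type block of the form \eqref{eq_duffing_A}.

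For this I will write $\widetilde A=\lambda I-N$, where $N$ is the subdiagonal shift with $Ne_k=e_{k+1}$ and $Ne_{K+1}=0$, so that $N^{K+1}=0$. Since $\lambda I$ commutes with $N$,
\[
e^{-h\widetilde A}=e^{-\lambda h}e^{hN}=e^{-\lambda h}\sum_{k=0}^{K}\frac{h^k}{k!}N^k .
\]
Because $N^k e_1=e_{k+1}$, the $(m+1,1)$ entry of $e^{-h\widetilde A}$ is exactly $e^{-\lambda h}h^m/m!$ for $m=0,\dots,K$. In the block, only the first slot of the driving increment is nonzero, namely $d\widehat X=(dx_s,0,\dots,0)$. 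Hence the $m$-th coordinate equals $\int_{t_0}^t e^{-\lambda_x(t-s)}\frac{(t-s)^m}{m!}\,dx_s$. Here the integrand is continuous and of bounded variation in $s$, so the Young integral reduces to a Riemann--Stieltjes integral, which is the sense stated.

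The same computation on the $u$-block gives $\int_{t_0}^t e^{-\lambda_u(t-s)}\frac{(t-s)^m}{m!}\,dU_s$, where $U$ is whatever path occupies that channel. The only delicate point, and the one I expect to need care, is the form $u_s\,ds$ in the statement. As in the state-space-model discussion, the forcing channel should be read as the integrated signal $U_t=\int_{t_0}^t u_r\,dr$; this is the convention under which the forcing term of \eqref{eq_v_integral} appears. Then $dU_s=u_s\,ds$, since $U$ is $C^1$ with derivative $u\in C^0$, and the Riemann--Stieltjes integral becomes the stated Lebesgue integral.

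I will state this identification explicitly. If the channel literally carried $u_t$, the formula would instead hold with $du_s$, provided $u$ has bounded variation. Apart from this bookkeeping, the proof is just the nilpotent expansion above, so no analytic obstacle is expected.
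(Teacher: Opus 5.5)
Your proof is correct, but it takes a different route from the paper. The paper's appendix (Propositions \ref{prop_duffing_chain} and \ref{prop_duffing_integral}) proceeds in three steps:
\begin{itemize}
\item it projects the EWS CDE of Lemma \ref{lemma_ews_cde} onto level one, giving $dS^{(1)}_t=-AS^{(1)}_t\,dt+d\widehat X_t$;
\item it uses the Jordan structure to read off the cascade $dS^{x,0}_t=-\lambda S^{x,0}_t\,dt+dx_t$ and $dS^{x,m}_t=-\lambda S^{x,m}_t\,dt+S^{x,m-1}_t\,dt$;
\item it solves this cascade by integrating factors and induction on $m$, using Fubini at each step to swap the $dr$ and $dx_s$ integrals.
\end{itemize}

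You instead work directly from Definition \ref{def_ews_iterated_integrals}. You compute $e^{-h\widetilde A}=e^{-\lambda h}\sum_{k\le K}\frac{h^k}{k!}N^k$ in closed form, so the polynomial--exponential kernel appears in one line as the first column of the propagator.

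\textbf{What your route buys.} It needs neither the CDE lemma, nor the induction, nor Fubini. It treats the $x$- and $u$-blocks identically. Because the kernel is smooth, the Riemann--Stieltjes integral exists for any continuous driver, so you do not need $x$ to have bounded variation, as the paper's Fubini step implicitly does.

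\textbf{What the paper's route buys.} It exposes the recursive structure: each $S^{x,m}$ is an exponentially filtered time-integral of $S^{x,m-1}$. That is the dynamical picture the Duffing discussion relies on.

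\textbf{The forcing channel.} Your caveat here is well founded. The paper's appendix works with a single scalar forcing coordinate and derives $dS^u_t=-\lambda_u S^u_t\,dt+du_t$, which is an integral against $du_s$. It never produces the Jordan-chain $u$-block formula with $u_s\,ds$ that appears in the statement. That form really does require the integrated-signal convention $U_t=\int_{t_0}^t u_r\,dr$ that you adopt, so stating it explicitly is the right call.
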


This follows by projecting the EWS CDE onto the first tensor level, using 
the Jordan structure of $\widetilde{A}$, and solving via integrating factors; 
proofs are given in Appendices~\ref{prop_duffing_chain} 
and~\ref{prop_duffing_integral}. The coordinates $(S^{x,m}_t)_{m=0}^K$ thus 
provide a family of polynomial-exponential memory functionals of the path. 
Using the expansion $1 = e^{-\lambda(t-s)}\sum_{m=0}^\infty 
\frac{\lambda^m(t-s)^m}{m!}$, one obtains the following approximation result.

\begin{proposition}\label{prop_duffing_remainder}
    Let $x \in \mathcal{V}^{1}([t_0,t_N], \R)$ and let $(S^{x,m}_t)_{m=0}^K$ 
    be defined as above. Then for all $t \in [t_0,t_N]$,
    \begin{equation}
        x_t - x_{t_0} = \sum^K_{m=0}\lambda_x^m S^{x,m}_t + \mathcal{R}_t^{K+1},
        \qquad |\mathcal{R}_t^{K+1}| \leq \|x\|_{1,[t_0,t]} 
        \frac{(\lambda_x(t-t_0))^{K+1}}{(K+1)!}.
    \end{equation}
\end{proposition}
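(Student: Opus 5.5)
The plan is to use Proposition \ref{prop_jordan_chain} to write each $S^{x,m}_t$ as a Stieltjes integral against $dx_s$, and then sum the series $\sum_m \lambda_x^m S^{x,m}_t$ inside the integral. Concretely,
\[
\sum_{m=0}^K \lambda_x^m S^{x,m}_t = \int_{t_0}^t e^{-\lambda_x(t-s)} \sum_{m=0}^K \frac{(\lambda_x(t-s))^m}{m!}\,dx_s .
\]
Interchanging the finite sum with the Riemann--Stieltjes integral is immediate by linearity. Since $x\in\mathcal{V}^1$, the integral against $dx$ is a Riemann--Stieltjes integral against a bounded-variation integrator with continuous integrand.

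Next, I use the identity $1 = e^{-\lambda_x(t-s)}e^{\lambda_x(t-s)}$ together with $x_t - x_{t_0} = \int_{t_0}^t 1\,dx_s$. This lets me write
\[
\mathcal{R}^{K+1}_t := x_t - x_{t_0} - \sum_{m=0}^K\lambda_x^m S^{x,m}_t = \int_{t_0}^t e^{-\lambda_x(t-s)}\, r_K(\lambda_x(t-s))\,dx_s,
\]
where $r_K(y) := e^{y} - \sum_{m=0}^K y^m/m! = \sum_{m\ge K+1} y^m/m!$ is the exponential Taylor tail. This is simply the definition of the remainder, expressed as a single integral.

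The only real step is the pointwise kernel bound. For $y\ge 0$, I need
\[
0\le e^{-y} r_K(y) \le \frac{y^{K+1}}{(K+1)!}.
\]
I would prove this with the Lagrange or integral form of Taylor's theorem: $r_K(y) = \frac{y^{K+1}}{(K+1)!}e^{\xi}$ for some $\xi\in[0,y]$. Multiplying by $e^{-y}$ gives $e^{\xi - y}\le 1$, which is the bound. Alternatively, one can write $e^{-y} r_K(y) = \int_0^y e^{-(y-z)} \frac{z^K}{K!}\,dz \le \frac{y^{K+1}}{(K+1)!}$.

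Since $\lambda_x>0$ and $0\le t-s\le t-t_0$, monotonicity in $y$ gives the uniform bound $\frac{(\lambda_x(t-t_0))^{K+1}}{(K+1)!}$ on the kernel over $[t_0,t]$.

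Finally, I apply the standard Stieltjes estimate $\bigl|\int f\,dx\bigr| \le \sup|f|\,\|x\|_{1,[t_0,t]}$ to conclude. I do not expect a genuine obstacle; the only care needed is that Proposition \ref{prop_jordan_chain} gives exactly the kernel $(t-s)^m/m!$ without extra signs. The sub-diagonal $-1$ entries in $\widetilde{A}$ are designed precisely so that $e^{-h\widetilde{A}}$ has positive entries $h^m/m!\,e^{-\lambda h}$, and I take this as given.
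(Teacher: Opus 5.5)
Your proposal is correct and follows the paper's proof essentially step for step. Both arguments write $x_t-x_{t_0}=\int_{t_0}^t e^{-\lambda_x(t-s)}e^{\lambda_x(t-s)}\,dx_s$, split off the exponential Taylor tail, bound the kernel $e^{-y}r_K(y)\le y^{K+1}/(K+1)!$ uniformly on $[t_0,t]$, and finish with the total-variation Stieltjes estimate; your Lagrange-remainder argument usefully justifies the kernel bound, which the paper only asserts. One small slip: your alternative integral identity should read $e^{-y}r_K(y)=\int_0^y e^{-(y-z)}\frac{(y-z)^K}{K!}\,dz=\int_0^y e^{-w}\frac{w^K}{K!}\,dw$, since your version with $z^K$ is not equal to $e^{-y}r_K(y)$ (e.g.\ for $K=1$), though the resulting bound is unaffected.
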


The proof is given in Appendix~\ref{prop_duffing_remainder}. Hence there 
exists a linear functional $\ell_x$ supported on level one such that 
$x_t \approx \langle \ell_x, S_{\mathbf{A}}(X)_{t_0,t}\rangle$. By the 
shuffle identity (Lemma~\ref{ews_shuffle_linearisation}),
\begin{equation}
    x_t^3 \approx \langle \ell_x^{\shuffle 3}, S_{\mathbf{A}}(X)_{t_0,t}\rangle,
\end{equation}
where $\ell_x^{\shuffle 3}$ is supported on tensor levels at most three. 
The same construction applied to the $u$-block yields $\ell_u$ at depth one. 
Setting $\lambda_t = \alpha$, the initial condition term 
$e^{-\alpha(t-t_0)}v_{t_0}$ is captured by a functional $\ell_0$ at depths 
$0$ and $1$. Combining, and writing $\ell := -\beta\ell_x - \gamma 
\ell_x^{\shuffle 3} + \delta\ell_u$, Equation~\eqref{eq_v_integral} gives
\begin{equation}\label{eq_integral_linear_func}
    v_t \approx \langle \ell_0, S_{\mathbf{A}}(X)_{t_0,t}\rangle 
    + \int_{t_0}^t e^{-\alpha(t-s)}\langle\ell, 
    S_{\mathbf{A}}(X)_{t_0,s}\rangle\,ds.
\end{equation}
The depth comparison between EWS and EFM of the integrand term is established at this stage, before accounting for the convolution. In the EWS case, the Jordan structure 
yields $\ell$ supported at depth $\leq 3$ independently of $K$. In the EFM 
case, the depth-one coordinates consist only of purely exponential kernels 
$\int e^{-\lambda(t-s)}dx_s$, which do not generate polynomial factors in 
$(t-s)$; such factors can only arise from iterated integrations against the 
$x$-channel, requiring depth $m+1$ for degree $m$. Hence approximating $x_t$ 
with polynomial degree $K$ requires depth $K+1$, and $x_t^3$ requires depth 
$3(K+1)$. The EWS therefore represents the integrand at constant depth $3$, 
compared to depth $3(K+1)$ for the EFM --- a saving of $3K$ levels. The remaining convolution against $e^{-\alpha(t-s)}ds$ introduces 
additional representational cost equally to both settings, so this 
saving persists in the full representation of $v_t$. Then, given an approximation of $v_t$, one recovers $x_{t+h}$ by solving 
\begin{equation}
    \begin{cases}
    v_t= \langle l_0, S_{\mathbf{A}}(X)_{t_0,t} \rangle + z_t, \\
    \dot{z}_t = - \alpha z_t + \langle\ell, S_{\mathbf{A}}(X)_{t_0,t} \rangle, \\
    z_{t_0} = 0,\\
    \dot{x}_t = v_t.
    \end{cases}
\end{equation}
More broadly, this construction extends to any dynamical system whose 
vector field is polynomial in the state: the Jordan structure encodes 
polynomial memory at depth one, and the shuffle identity lifts degree-$p$ 
non-linearities to depth $p$, yielding a representation of constant depth 
independent of the approximation accuracy $K$.

\section{Analytic Well-posedness}\label{section_existence}

We now establish analytic well-posedness of the EWS for paths $X \in \mathcal{V}^p([t_0,t_N], V)$ with $p<2$, meaning that the defining tensor series converges. Since the EWS coincides with the classical signature of a
suitably re-weighted path, well-posedness for general $p<2$ follows directly from standard signature theory. For completeness, we demonstrate the factorial decay bound explicitly in the finite variation case, which makes the convergence mechanism and the dependence on the operator $A$ transparent.

\begin{lemma}\label{lemma_ews_factorial_decay}
    For $X \in \mathcal{V}^1([t_0,t_N])$, the EWS is well-defined, and for all $[s,t] \subseteq [t_0,t_N]$, the following bound holds
    \begin{equation}
        ||S_{\mathbf{A}}(X)_{s,t}^{(n)}||_{V^{\otimes n}} \leq \frac{(C_{A,t_N} || X||_{1,[s,t]})^n}{n!},
    \end{equation}
    where $E(h) = e^{-hA}$ as before and $C_{A,t_N} = \sup_{0 \leq h \leq t_N - t_0}||E(h)||_{\mathrm{op}}< \infty$ since $A$ is bounded. 
\end{lemma}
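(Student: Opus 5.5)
The plan is to reduce to the classical factorial decay estimate for the signature of a bounded variation path, using Proposition~\ref{prop_ews_sig_reweighted}. Fix $[s,t]\subseteq[t_0,t_N]$ and consider the re-weighted path $Z^{[t]}_r=\int_s^r e^{-(\theta_t-\theta_u)A}\,dX_u$ for $r\in[s,t]$. Since $X$ has bounded variation, this is a Riemann--Stieltjes integral against a continuous operator-valued integrand. Hence $Z^{[t]}$ is itself of bounded variation. By Proposition~\ref{prop_ews_sig_reweighted}, $S_{\mathbf{A}}(X)^{(n)}_{s,t}=S(Z^{[t]})^{(n)}_{s,t}$, so it suffices to bound the latter.

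First I will control the $1$-variation of $Z^{[t]}$. For any partition $s\le r_0<\dots<r_k\le t$, each increment satisfies
\[
\|Z^{[t]}_{r_{i+1}}-Z^{[t]}_{r_i}\|=\Bigl\|\int_{r_i}^{r_{i+1}}E(\theta_t-\theta_u)\,dX_u\Bigr\|\le \sup_{u\in[s,t]}\|E(\theta_t-\theta_u)\|_{\mathrm{op}}\,\|X\|_{1,[r_i,r_{i+1}]}.
\]
Summing and using superadditivity of the $1$-variation gives $\|Z^{[t]}\|_{1,[s,t]}\le C\,\|X\|_{1,[s,t]}$. Here $C$ is the supremum of $\|E(h)\|_{\mathrm{op}}$ over the range of $h=\theta_t-\theta_u$.

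Here I must be careful. The statement writes $C_{A,t_N}$ as a supremum over $0\le h\le t_N-t_0$, which matches the canonical clock $\theta_t=t$. For a general clock, $\theta$ is monotone and continuous, so $h$ ranges over $[0,\theta_{t_N}-\theta_{t_0}]$. I will read $C_{A,t_N}$ as the supremum over that compact range. It is finite because $h\mapsto e^{-hA}$ is continuous in operator norm (indeed $\|e^{-hA}\|\le e^{|h|\|A\|}$).

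Next I apply the standard estimate for bounded variation paths. With $\omega(a,b):=\|Z^{[t]}\|_{1,[a,b]}$, which is a control, one has $\|S(Z)^{(n)}_{s,t}\|\le \omega(s,t)^n/n!$ for admissible norms. I can prove this by induction on $n$: assume the bound at level $n-1$ for all subintervals $[s,u]$. Then
\[
\|S(Z)^{(n)}_{s,t}\|\le\int_s^t \frac{\omega(s,u)^{n-1}}{(n-1)!}\,d\omega(s,u)=\frac{\omega(s,t)^n}{n!},
\]
using admissibility, $\|a\otimes b\|\le\|a\|\|b\|$, and the chain rule for the Stieltjes integral against the nondecreasing continuous function $u\mapsto\omega(s,u)$. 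Combining this with the $1$-variation bound gives the claimed inequality. Well-definedness follows: each level is a well-defined Riemann--Stieltjes iterated integral, and the series $\sum_n\|S^{(n)}\|$ converges to at most $e^{C\|X\|_1}$.

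The main obstacle is the dependence of $Z^{[t]}$ on the horizon $t$. The reduction must be done separately for each fixed $t$, and the constant must be uniform in $[s,t]$, which is exactly why the supremum is taken over the whole horizon. I would also justify the step $\|\int F\,dX\|\le\sup\|F\|\,\|X\|_1$ for operator-valued integrands via Riemann sums, and check continuity of $u\mapsto\omega(s,u)$ so that the change of variables in the induction is valid.
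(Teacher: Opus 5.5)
Your proof is correct, but it takes a different route from the paper's. At $p=1$ the paper does not use $Z^{[t]}$; it reserves that reduction for $1<p<2$. Instead it argues directly:
\begin{itemize}
\item it writes $S_{\mathbf{A}}(X)^{(n)}_{s,t}$ as a simplex integral of $\bigotimes_{k=1}^n E(\theta_t-\theta_{u_k})\dot X_{u_k}$;
\item it bounds each factor by $C_{A,t_N}\|\dot X_{u_k}\|_V$ using sub-multiplicativity of the admissible norms;
\item it uses the symmetry of $\prod_k\|\dot X_{u_k}\|_V$ and Fubini to evaluate the simplex integral as $\frac{1}{n!}\bigl(\int_s^t\|\dot X_u\|_V\,du\bigr)^n$.
\end{itemize}
This is a single computation in which each of the $n$ factors of $C_{A,t_N}$ is visibly attached to one weighted increment, all anchored at the same apex $t$.

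You instead absorb all the weights into one path. You pay a single constant in $\|Z^{[t]}\|_{1,[s,t]}\le C\|X\|_{1,[s,t]}$ and then run the classical control-based induction for the signature of that fixed path. This gains you two things the paper's argument glosses over.

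First, the paper works with $\dot X$, which tacitly assumes $X$ is absolutely continuous. Your Riemann--Stieltjes and control argument covers every continuous bounded variation path.

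Second, the paper asserts $\theta_t-\theta_{u_k}\in[0,t_N-t_0]$. This holds for the canonical clock $\theta_t=t$ but not for a general clock $\theta=\ell(X)$. Your reading of $C_{A,t_N}$ as a supremum over $[0,\theta_{t_N}-\theta_{t_0}]$ is what the statement needs in general. For example, take $V=\R$, $\ell$ the identity, $X_t=\theta_t=10t$ on $[0,1]$ and $A=-1$. Then level one equals $e^{10}-1$, while the stated bound is $10e$.

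Your route also matches the $Z^{[t]}$-based argument the paper uses immediately afterwards for $1<p<2$. The cost is reliance on Proposition~\ref{prop_ews_sig_reweighted} and a separate reduction for each terminal time $t$, which you correctly flagged.
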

\begin{proof}
    Since $X$ is of finite variation on $[t_0,t_N]$, the level-$n$ iterated integral defining the $EWS$ on $[s,t]$ can be written as
    \[  
    S_{\mathbf{A}}(X)^{(n)}_{s,t} = \int^t_s \int^{t_n}_s \cdots \int^{t_2}_s \bigotimes_{k=1}^n(E(\theta_t - \theta_{u_k})\dot{X}_{u_k})du_1 \cdots du_n.
    \]
    Via the sub-multiplicative property of  admissible norms on $V^{\otimes n}$, we get 
    \[
    ||S_{\mathbf{A}}(X)^{(n)}_{s,t}||_{V^{\otimes n}} \leq \int^t_s \int^{t_n}_s \cdots \int^{t_2}_s \prod^n_{k=1} || E(\theta_t - \theta_{u_k}) \dot{X}_{u_k}||_{V} du_1 \cdots du_k.
    \]
    For each $k$ we have
    \[
    ||E(\theta_t - \theta_{u_k}) \dot{X}_{u_k}||_V \leq ||E(\theta_t - \theta_{u_k})||_{\mathrm{op}}||\dot{X}_{u_k}||_V.
    \]
    Since $\theta_t - \theta_{u_k} \in [0, t_N-t_0]$ for all $u_k \in [s,t]$, we have $||E(\theta_t - \theta_{u_k})||_{\mathrm{op}} \leq C_{A,t_N}$. Therefore,
    \[
    ||S_{\mathbf{A}}(X)^{(n)}_{s,t}||_{V^{\otimes n}} \leq C_{A,t_N}^n\int^t_s \int^{t_n}_s \cdots \int^{t_2}_s  \prod_{k=1}^n ||\dot{X}_{u_k}|| du_1\cdots du_k.
    \]
    The remaining integral over the simplex can be evaluated explicitly just as in the proof for the classical signature setting. By Fubini's theorem, 
    \[
    \int^t_s \int^{t_n}_s \cdots \int^{t_2}_s  \prod_{k=1}^n ||\dot{X}_{u_k}|| du_1\cdots du_k = \frac{1}{n!} \int_{[s,t]^n} \prod_{k=1}^n ||\dot{X}_{u_k}|| du_1\cdots du_k = \Bigl(\int^t_s||\dot{X}||_V du \Bigr)^n = ||X||_{1,[s,t]}^n.
    \]
    Combining the above estimates yields the result.
\end{proof}

At present, the factorial decay argument provided in Lemma \ref{lemma_ews_factorial_decay} establishes the existence of the EWS for paths of finite variation (p=1). To extend this result to the case of $1 < p < 2$, we adopt an approach based on the equivalence between the EWS and the classical signature of the re-weighted path $Z^{[t]}_r = \int^r_{t_0} e^{-(\theta_t - \theta_u)A}dX_u$, where the integral is understood in the Young sense and $t \in [t_0, t_N]$ is fixed. Assuming that the continuous map $u \mapsto e^{-(\theta_t - \theta_u)A}$ is of finite $q$-variation such that $\frac{1}{p} + \frac{1}{q} > 1$, the existence of $Z^{[t]}$ is guaranteed by the Young estimate 
\[
||Z^{[t]}||_{p, [t_0,t]} \leq C_{\theta, A} ||X||_{p, [t_0,t]},
\]
where the constant $C_{\theta, A}$ depends on $\sup_{0 \leq h \leq t_N - t_0}||e^{-hA}||_{\mathrm{op}}$ and on the $q$-variation of the map $u \mapsto e^{-(\theta_t - \theta_u)A}$. Since the EWS satisfies $S_{\mathbf{A}}(X)_{t_0,t} = S(Z^{[t]})_{t_0,t}$, the existence of the EWS for $p<2$ follows directly from the existence of the signature of Young paths.

Note that when $B \in L(V,W)$ is non-trivial, the statement becomes $||S_{\mathbf{A}}(X)^{(n)}_{s,t}||_{W^{\otimes n}} \leq \frac{(C_{A,t_N} || B||_{\mathrm{op}}||X||_{1,[s,t]})^n}{n!}$.

\section{Algebraic Properties of the EWS}

We now show that the EWS satisfies the key algebraic properties of the classical
path signature. In particular, it is multiplicative with respect to path concatenation (Chen’s identity), and linear functionals of the EWS are closed under multiplication, forming a commutative algebra via the shuffle product.

\subsection{Chen's Identity}

Chen's identity expresses the fact that the signature of a path over an interval can be constructed as the product of the signature over a set of sub-intervals. In the EWS setting, we satisfy an altered version of Chen's identity that reflects the action of the linear flow $D_A$. The proof for this mirrors the structure of the proof in the classical signature setting from \textcite{lyons2007differential} in that we first establish the claim for $X \in \mathcal{V}^1([t_0,t_N],V)$ and then extend it to the $p \in (1,2)$ regime.

\begin{lemma}\label{lemma_ews_chen_p1}
Let $X \in \mathcal{V}^p([t_0, t_N], V)$ for $p = 1$. Then for any $t_0 \le s \le u \le t \le t_N$, the exponentially-weighted signature satisfies
\begin{equation}
    S_{\mathbf A}(X)_{s,t}
=
\bigl(D_A^{\theta_t-\theta_u} S_{\mathbf A}(X)_{s,u}\bigr)
\otimes
S_{\mathbf A}(X)_{u,t}.
\end{equation}
\end{lemma}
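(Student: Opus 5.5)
The plan is to reduce the statement to the classical Chen identity via Proposition \ref{prop_ews_sig_reweighted}, taking care that the re-weighted path depends on the terminal time. Fix $s\le u\le t$ and write $Z:=Z^{[t]}$, with increments $dZ_r=e^{-(\theta_t-\theta_r)A}dX_r$ on $[s,t]$. For $p=1$, $Z$ is of bounded variation, since $r\mapsto e^{-(\theta_t-\theta_r)A}$ is bounded in operator norm on $[s,t]$. Proposition \ref{prop_ews_sig_reweighted} gives $S_{\mathbf A}(X)_{s,t}=S(Z)_{s,t}$. The classical Chen identity for bounded variation paths (Appendix \ref{appendix_signatures}) then gives
\[
S(Z)_{s,t}=S(Z)_{s,u}\otimes S(Z)_{u,t}.
\]
It remains to identify the two factors in terms of the EWS.

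\emph{Second factor.} On $[u,t]$ the increments of $Z$ are exactly those of the re-weighted path with horizon $t$ started at $u$. Applying Proposition \ref{prop_ews_sig_reweighted} on $[u,t]$ therefore gives $S(Z)_{u,t}=S_{\mathbf A}(X)_{u,t}$.

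\emph{First factor.} On $[s,u]$ we use the group property of the operator exponential:
\[
e^{-(\theta_t-\theta_r)A}=e^{-(\theta_t-\theta_u)A}e^{-(\theta_u-\theta_r)A}.
\]
Hence $dZ^{[t]}_r=e^{-(\theta_t-\theta_u)A}\,dZ^{[u]}_r$ for $r\in[s,u]$, so $Z^{[t]}$ on $[s,u]$ is the image of $Z^{[u]}$ under the fixed bounded linear map $M:=e^{-(\theta_t-\theta_u)A}$. The classical signature is equivariant under linear maps: $S(MY)=\widetilde M\,S(Y)$, where $\widetilde M$ is the unique algebra homomorphism of $T((V))$ extending $M$. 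At level $n$ this is immediate from the iterated integrals, because $M$ is constant and can be pulled out of each factor, giving $M^{\otimes n}$. By definition of the flow operator, $\widetilde M=D_A^{\theta_t-\theta_u}$. So
\[
S(Z^{[t]})_{s,u}=D_A^{\theta_t-\theta_u}S(Z^{[u]})_{s,u}=D_A^{\theta_t-\theta_u}S_{\mathbf A}(X)_{s,u},
\]
where the last equality is Proposition \ref{prop_ews_sig_reweighted} on $[s,u]$ with horizon $u$. Combining the two factors yields the claimed identity.

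The main point needing care is the equivariance step, namely that constant operators pass through the (here Riemann–Stieltjes) iterated integrals. At level $n$ this is $\int M\,dY_{t_1}\otimes\cdots\otimes M\,dY_{t_n}=M^{\otimes n}\int dY_{t_1}\otimes\cdots\otimes dY_{t_n}$. This follows from linearity and continuity of $M^{\otimes n}$ on $V^{\otimes n}$, using admissibility of the norms, by passing to the limit in Riemann sums. Alternatively, one could verify the identity level-wise directly from Definition \ref{def_ews_iterated_integrals}. In that approach one splits the simplex $\{s\le t_1\le\dots\le t_n\le t\}$ according to how many $t_k$ lie below $u$, and factors each weight as $e^{-(\theta_t-\theta_{t_k})A}=e^{-(\theta_t-\theta_u)A}e^{-(\theta_u-\theta_{t_k})A}$ for $t_k\le u$. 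This is the classical Fubini proof with the extra operator carried along, and it serves as a fallback.
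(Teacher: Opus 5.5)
Your proof is correct, but your main route differs from the paper's. The paper's proof is essentially your fallback. It works level by level from Definition~\ref{def_ews_iterated_integrals}. It splits the simplex $\{s\le t_1\le\cdots\le t_n\le t\}$ into three kinds of piece: the part lying in $[s,u]$, the part lying in $[u,t]$, and the mixed parts with exactly $k$ times below $u$. Each mixed piece is factored by Fubini into a lower and an upper iterated integral. The paper then pulls $(e^{-(\theta_t-\theta_u)A})^{\otimes k}$ out of the lower factor using $e^{-(\theta_t-\theta_{t_j})A}=e^{-(\theta_t-\theta_u)A}e^{-(\theta_u-\theta_{t_j})A}$.

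You instead use the classical Chen identity as a black box and isolate the EWS-specific content in two structural facts. First, on $[u,t]$ the increments of $Z^{[t]}$ are those of the re-weighted path for $[u,t]$. Second, on $[s,u]$ one has $Z^{[t]}=e^{-(\theta_t-\theta_u)A}Z^{[u]}$, so the twist by $D_A^{\theta_t-\theta_u}$ is just linear equivariance of the signature. This is shorter, avoids the simplex bookkeeping, and explains where the flow operator comes from.

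More substantively, nothing in your argument uses $p=1$. Proposition~\ref{prop_ews_sig_reweighted}, the classical Chen identity, and the commutation of a constant bounded operator with Riemann-sum (Young) limits all hold for $p<2$. So the same proof gives the identity for every $p<2$ at once. The paper's Fubini argument, by contrast, is confined to $p=1$ and must be extended by the continuity/density argument built on Lemma~\ref{lemma_trunc_ews_sol} and Corollary~\ref{corollary_ews_trunc_cont}. In exchange, the paper's route is self-contained at the level of the defining iterated integrals and does not depend on the re-weighting proposition.

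One minor correction to your justification of the equivariance step. Boundedness of $M^{\otimes n}$ on the completed $V^{\otimes n}$ is not among the admissibility axioms listed in Appendix~\ref{appendix_tensor_algebra}; it is the separate uniform cross-norm property. The paper tacitly assumes it too, both in defining $D_A^h$ and in its term $(\mathbf{IV})$. It holds automatically when $V$ is finite dimensional or the norms are projective or injective.
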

\begin{proof}
     We will show that the statement holds at an arbitrary depth $n$. For all $s \leq u \leq t$, we can split the region of integration as
        \begin{align*}
            S_{\mathbf{A}}(X)_{s,t}^{(n)} &= \int^t_s \int^{t_n}_s \cdots \int^{t_2}_s \Bigl( e^{-(\theta_t - \theta_{t_1})A}dX_{t_1} \Bigr) \otimes \cdots \otimes \Bigl( e^{-(\theta_t - \theta_{t_n})A}dX_{t_n}\Bigr)\\
            &=  \underbrace{\int^u_s \int^{t_n}_s \cdots \int^{t_2}_s \Bigl( e^{-(\theta_t - \theta_{t_1})A}dX_{t_1} \Bigr) \otimes \cdots \otimes \Bigl( e^{-(\theta_t - \theta_{t_n})A}dX_{t_n}\Bigr)}_{(\mathbf{I})} \\
            & \quad+ \underbrace{\int^t_u \int^{t_n}_u \cdots \int^{t_2}_u \Bigl( e^{-(\theta_t - \theta_{t_1})A}dX_{t_1} \Bigr) \otimes \cdots \otimes \Bigl( e^{-(\theta_t - \theta_{t_n})A}dX_{t_n}\Bigr)}_{(\mathbf{II})} \\
            & \quad+ \underbrace{\sum_{k=1}^{n-1} \int^t_s \int^{t_n}_s \cdots \int^{t_{k+1}}_s \int^u_s \int^{t_{k-1}}_s \cdots \int^{t_2}_s \Bigl( e^{-(\theta_t - \theta_{t_1})A}dX_{t_1} \Bigr) \otimes \cdots \otimes \Bigl( e^{-(\theta_t - \theta_{t_n})A}dX_{t_n}\Bigr)}_{(\mathbf{III})}
    \end{align*}
    where $(\mathbf{I}) = D^{A}_{\theta_t - \theta_u}S_{\mathbf{A}}(X)_{s,u}^{(n)}$ and $(\mathbf{II}) = S_{\mathbf{A}}(X)_{u,t}^{(n)}$. We then expand $(\mathbf{III})$ as
    \begin{align*}
        (\mathbf{III}) &= \sum^{n-1}_{k=1} \underbrace{\Bigl(\int^u_s \int^{t_k}_s \cdots \int^{t_2}_s \Bigl(e^{-(\theta_t - \theta_{t_1})A}dX_{t_1}\Bigr) \otimes \cdots \otimes \Bigl(e^{-(\theta_t - \theta_{t_k})A}dX_{t_k}\Bigr) \Bigr)}_{(\mathbf{IV})} \\
            & \qquad \otimes \underbrace{\Bigl(\int^t_u \int^{t_n}_u \cdots \int^{t_{k+1}}_u \Bigl(e^{-(\theta_t - \theta_{t_{k+1}})A}dX_{t_1}\Bigr) \otimes \cdots \otimes \Bigl(e^{-(\theta_t - \theta_{t_n})A}dX_{t_n}\Bigr) \Bigr)}_{(\mathbf{V})},
    \end{align*}
    where $(\mathbf{V}) = S_{\mathbf{A}}(X)_{u,t}^{(n-k)}$. For $(\mathbf{IV})$, it remains to introduce the correct factors involving $u$:
    \begin{align*}
        (\mathbf{IV}) &= \int^u_s \int^{t_k}_s \cdots \int^{t_2}_s e^{-(\theta_t - \theta_u)A}e^{-(\theta_u - \theta_{t_1})A}dX_{t_1} \otimes \cdots \otimes e^{-(\theta_t - \theta_u)A}e^{-(\theta_u - \theta_{t_k})A}dX_{t_k} \\
        &= (e^{- (\theta_t - \theta_u)A})^{\otimes k} \int^u_s \int^{t_k}_s \cdots \int^{t_2}_s e^{-(\theta_u - \theta_{t_1})A}dX_{t_1} \otimes \cdots \otimes e^{-(\theta_u - \theta_{t_k})A}dX_{t_k} \\
        &= D^A_{\theta_t - \theta_u}S_{\mathbf{A}}(X)_{s,u}^{(k)}.
    \end{align*}
    Putting everything together, we get 
    \begin{align*}
        S_{\mathbf{A}}(X)^{(n)}_{s,t} = \sum_{k=0}^n D^A_{\theta_t - \theta_u} S_{\mathbf{A}}(X)_{s,u}^{(k)} \otimes S_{\mathbf{A}}(X)_{u,t}^{(n-k)}.
    \end{align*}
\end{proof}

The above proof will not hold for $p \in (1,2)$ since it makes use of Fubini's theorem. We will extend the statement by using the fact that the truncated EWS is the solution to a controlled differential equation, which provides continuity in the $p$-variation topology and allows us to pass to the limit from bounded variation approximations.

\begin{lemma}\label{lemma_trunc_ews_sol}
    Let $X \in \mathcal{V}^{p}([t_0,t_N], V)$ for $p<2$ and fix an integer $n \geq 1$. Define $f:T^{(n)}(V) \to L(V,T^{(n)})$ by 
    \[
    f(a)v = - \Lambda_A( a) \cdot \ell(v) + \pi_n(a \otimes v), 
    \]
    where $\ell:V \to \R$ is the clock functional and $\pi_n: T((V)) \to T^{(n)}(V)$ denotes projection onto the first $n+1$ levels (including the $0$-th level). Then the unique solution to the CDE 
    \[
    dS_t = f(S_t)dX_t, \qquad S_s = (1,0,\dots,0),
    \]
    is the truncated EWS $S_t = \pi_n(S_{\mathbf{A}}(X)_{s,t})$.
\end{lemma}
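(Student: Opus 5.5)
The plan has two parts: show that the truncated EWS solves the CDE, then show that the solution is unique. Existence comes from Lemma~\ref{lemma_ews_cde}, projected onto the first $n+1$ levels. Uniqueness comes from the linear, bounded, graded structure of $f$.

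\textbf{Existence.} First check that $f$ is a bounded linear map from $T^{(n)}(V)$ into $L(V,T^{(n)}(V))$. The derivation $\Lambda_A$ preserves tensor degree and acts on $V^{\otimes k}$ as $\sum_{j} I^{\otimes (j-1)}\otimes A\otimes I^{\otimes (k-j)}$, so under an admissible norm its norm there is at most $k\|A\|_{\mathrm{op}}$. Hence $\Lambda_A$ restricts to a bounded operator on $T^{(n)}(V)$ that commutes with $\pi_n$. The map $a\mapsto \pi_n(a\otimes \cdot)$ is bounded by the admissibility of the norms, and $\ell$ is bounded, so $\|f(a)\|\le C(\|A\|,\|\ell\|,n)\,\|a\|$.

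Next apply $\pi_n$ to the Young integral identity in Lemma~\ref{lemma_ews_cde}. Because $\pi_n$ is linear and bounded, it commutes with the Young integrals. Because $\Lambda_A$ preserves degree, $\pi_n\Lambda_A S=\Lambda_A\pi_n S$. Finally, $\pi_n(S\otimes dX)=\pi_n(\pi_n S\otimes dX)$: the level-$k\le n$ component of $S\otimes v$ involves only the level-$(k-1)$ component of $S$. Using $d\theta_u=\ell(dX_u)$, i.e.\ $\int g_u\,d\theta_u=\int g_u\,\ell(dX_u)$ by linearity of $\ell$ and of Young integration, we obtain
\[
\pi_n S_{\mathbf A}(X)_{s,t} = \mathbbm 1 + \int_s^t f\big(\pi_n S_{\mathbf A}(X)_{s,u}\big)\,dX_u .
\]
So $S_t=\pi_n(S_{\mathbf A}(X)_{s,t})$ is a solution. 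The integrand $u\mapsto f(\pi_n S_{\mathbf A}(X)_{s,u})$ has finite $p$-variation, since $f$ is bounded linear and the EWS has finite $p$-variation in $u$ (via the well-posedness discussion in Section~\ref{section_existence}). Since $2/p>1$, the Young integral is therefore well defined.

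\textbf{Uniqueness.} Because $f$ is linear and bounded, hence globally Lipschitz and smooth, this is a linear Young CDE, so one could quote standard uniqueness results (e.g.\ Friz--Victoir). I would instead argue by induction on levels, which avoids rough-path machinery. Let $R=S-S'$ be the difference of two solutions. Then
\[
R_t=\int_s^t\big(-\Lambda_A R_u\,\ell(dX_u)+\pi_n(R_u\otimes dX_u)\big),
\qquad R_s=0 .
\]
At level $0$, $R^{(0)}\equiv 0$, since neither term contributes to degree $0$: $\Lambda_A$ vanishes on $\mathbbm 1$ and $R\otimes dX$ has no degree-$0$ part. Now suppose $R^{(k-1)}\equiv 0$. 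The level-$k$ equation reduces to
\[
R^{(k)}_t = -\int_s^t \Lambda_A R^{(k)}_u\,d\theta_u ,
\]
a linear equation driven by the scalar path $\theta$, which has finite $p$-variation. A Young--Gronwall estimate on short intervals then gives $R^{(k)}\equiv 0$. Concretely, on an interval $J$ with $\|\theta\|_{p,J}$ small, the Young estimate bounds the $p$-variation norm of $R^{(k)}$ on $J$ by $c\,\|\theta\|_{p,J}$ times itself, which forces it to vanish. One then chains finitely many such intervals across $[s,t_N]$.

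\textbf{Main obstacle.} The delicate step is this last Young--Gronwall argument when $\theta$ is only of finite $p$-variation rather than bounded variation. If $\theta$ has bounded variation, which is the typical case of a strictly monotone clock, the equation reduces to the classical Riemann--Stieltjes linear ODE and is solved explicitly by $e^{-(\theta_t-\theta_s)\Lambda_A}$ applied to the initial value $0$. The general case needs the local contraction estimate in $p$-variation.
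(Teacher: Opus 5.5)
Your verification step is the same as the paper's: you project the CDE of Lemma~\ref{lemma_ews_cde} with $\pi_n$, and use that $\Lambda_A$ preserves degree, that $\pi_n(S\otimes dX)=\pi_n(\pi_n S\otimes dX)$, and that $d\theta=\ell(dX)$.

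Where you genuinely differ is uniqueness. The paper handles existence and uniqueness in one line by citing Theorem~\ref{theoreom_linear_cde}, i.e.\ well-posedness of linear Young CDEs via Picard iteration. You instead argue by induction on tensor degree. Since the $dX$-part of $f$ strictly raises degree, the difference $R$ of two solutions has $R^{(0)}\equiv 0$. Once $R^{(k-1)}\equiv 0$, the level-$k$ component solves the degree-preserving equation $R^{(k)}_t=-\int_s^t \Lambda_A R^{(k)}_u\,d\theta_u$ with zero initial value. A local Young contraction plus chaining then forces it to vanish. This is correct.

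The paper's route is shorter, and it reuses the same theorem that later supplies the Picard bounds in Corollary~\ref{corollary_ews_trunc_cont}. Your route is self-contained and does not lean on the appendix theorem, whose Banach-space case the paper only sketches. It also isolates the real analytic content. Because the $X$-driven part is nilpotent on $T^{(n)}(V)$, uniqueness reduces to a linear equation driven by a single scalar path. Existence needs no general theorem at all, since the projection already supplies a solution.

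Your ``main obstacle'' is milder than you suggest. Under the paper's standing assumption the clock $\theta$ is strictly monotone, hence of bounded variation. Each level equation is then a Riemann--Stieltjes Volterra equation, and the classical Gronwall inequality suffices. The $p$-variation contraction is only needed if you want the lemma for non-monotone $\ell(X)$.

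Two minor points:
\begin{itemize}
\item The bound $\|\Lambda_A|_{V^{\otimes k}}\|\le k\|A\|_{\mathrm{op}}$ needs the tensor norms to be uniform crossnorms (e.g.\ projective). This does not follow from the admissibility axioms as stated, although the paper tacitly assumes boundedness of $\Lambda_A$ too.
\item Finite $p$-variation of $u\mapsto S_{\mathbf A}(X)_{s,u}$ comes from its Young-integral representation in Lemma~\ref{lemma_ews_cde}. It does not come from the fixed-$t$ well-posedness argument of Section~\ref{section_existence}.
\end{itemize}
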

\begin{proof}
    Since $p<2$ by Theorem \ref{theoreom_linear_cde},  existence and uniqueness hold. To verify that $\pi_n(S_{\mathbf{A}}(X)_{s,t})$ solves this equation, we project the EWS dynamics from Lemma \ref{lemma_ews_cde}. Applying $\pi_n$ to both sides and using $d\theta_t = \ell(dX_t)$ gives
    \[
    d\pi_n(S_{\mathbf{A}}(X)_{s,t}) = -\Lambda_A \pi_n(S_{\mathbf{A}}(X)_{s,t})\ell(dX_t) + \pi_n(S_{\mathbf{A}}(X)_{s,t} \otimes dX_t).
    \]
    As the tensor product $S_{\mathbf{A}}(X)_{s,t} \otimes dX_t$ only affects levels up to $n$ through components of $S_{\mathbf{A}}(X)_{s,t}$ at levels up to $n-1$, and $\Lambda_A$ preserves tensor degrees (it acts level-wise), we have
    \[
    \pi_n(S_{\mathbf{A}}(X)_{s,t} \otimes dX_t) = \pi_n(\pi_n(S_{\mathbf{A}}(X)_{s,t}) \otimes dX_t).
    \]
    Thus, $\pi_n(S_{\mathbf{A}}(X)_{s,t})$ satisfies the CDE with vector field $f$ and initial condition $\pi_n(S_{\mathbf{A}}(X)_{s,s}) = \pi_n((1,0,\dots,0)) = (1,0,\dots,0)$. By uniqueness, it is the solution.
    \end{proof}

    \begin{remark}
        We now denote the truncated EWS analogously to the truncated signature by $S_{\mathbf{A}}(X)_{s,t}^{\leq n}:=\pi_n(S_{\mathbf{A}}(X)_{s,t})$.
    \end{remark}

    \begin{corollary} \label{corollary_ews_trunc_cont}
        For each $p \in [1,2)$, and each integer $n \geq 0$, the truncated EWS defines a continuous mapping 
        \[
        \pi_n \circ S_{\mathbf{A}}: \mathcal{V}^p([t_0,t_N], V) \to T^{(n)}(V)
        \]
        with respect to the $p$-variation topology on the domain.
    \end{corollary}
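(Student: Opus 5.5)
By Lemma \ref{lemma_trunc_ews_sol}, the truncated EWS $\pi_n(S_{\mathbf{A}}(X)_{s,t})$ is the unique solution of the finite-dimensional CDE $dS_t = f(S_t)\,dX_t$ with $S_s=(1,0,\dots,0)$. The plan is to read continuity off the stability theory for linear Young differential equations. The vector field
\[
f(a)v = -\Lambda_A(a)\,\ell(v) + \pi_n(a\otimes v)
\]
is \emph{linear} in $a$: it is a bounded linear map $T^{(n)}(V)\to L(V,T^{(n)}(V))$, with operator norm controlled by $\|\Lambda_A\|_{\mathrm{op}}\|\ell\| + 1$. Boundedness of $\Lambda_A$ on $T^{(n)}(V)$ follows from its level-wise formula, which gives norm at most $n\|A\|_{\mathrm{op}}$ on the first $n$ levels under admissible norms. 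The first step is therefore to record that the CDE has the linear form $dS_t = \mathcal{M}(dX_t)S_t$, where $\mathcal{M}\in L(V, L(T^{(n)}(V),T^{(n)}(V)))$ is fixed and depends only on $A$, $\ell$ and $n$.

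The second step is to invoke the continuity statement for linear Young CDEs, i.e.\ the result referenced as Theorem \ref{theoreom_linear_cde} together with the standard Lipschitz stability estimate (e.g.\ Friz--Victoir, Young ODE stability). The estimate I would aim for is: for $X,Y\in\mathcal V^p$ with $\|X\|_{p},\|Y\|_p\le R$,
\[
\sup_{t}\|S^X_t - S^Y_t\| + \|S^X-S^Y\|_{p,[s,t_N]} \le C(R,\|\mathcal M\|,p)\,\|X-Y\|_{p\text{-var}}.
\]
If only the local-Lipschitz statement for general $\mathrm{Lip}^\gamma$ vector fields is available, the plan is to handle the linear case directly, since the vector field is not globally bounded. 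One option is to note that the solution of a linear equation stays in a ball of radius $\exp(C\|X\|_p^p)$; one can then modify $f$ outside that ball to a bounded $\mathrm{Lip}^\gamma$ field without changing the solutions, and apply the general theorem.

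Alternatively, a self-contained route uses the Picard/Neumann series. By Proposition \ref{prop_ews_sig_reweighted}, level $k$ equals $S(Z^{[t]})^{(k)}_{s,t}$. Each level is a finite sum of iterated Young integrals, each continuous in $X$ in $p$-variation by the Young--Lo\`eve estimate. The map $X\mapsto Z^{[t]}$ is linear and bounded in $p$-variation, because $u\mapsto e^{-(\theta_t-\theta_u)A}$ has finite $q$-variation, controlled by that of $\theta=\ell(X)$. Some care is needed there, since the kernel itself depends on $X$ through $\theta$. Continuity of each fixed-level iterated integral in its driving path then completes the argument level by level.

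The main obstacle is exactly this $X$-dependence of the clock $\theta=\ell(X)$, which makes the naive ``linear transform then signature'' argument bilinear rather than linear. This is why I would prefer the CDE route: in Lemma \ref{lemma_trunc_ews_sol} the clock enters only through $dX$ via $\ell$, so the equation is genuinely driven by $X$ alone with a fixed linear vector field. The remaining work is then the standard Gronwall-type Young stability estimate, together with the observation that convergence in $p$-variation implies uniform convergence of the solutions, and hence continuity of $X\mapsto S^{\le n}_{\mathbf A}(X)_{s,t}$ into $T^{(n)}(V)$.
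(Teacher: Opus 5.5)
Your proposal is correct, and its first step is the paper's. Both rest on Lemma~\ref{lemma_trunc_ews_sol}, which makes the truncated EWS the solution of the linear equation $dS_t=\mathcal{M}(dX_t)S_t$ with $\mathcal{M}(v)a=-\ell(v)\Lambda_A a+\pi_n(a\otimes v)$. The clock then enters only through $\ell(dX)$, so the $X$-dependence of the kernel that troubles the $Z^{[t]}$ route disappears.

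The routes diverge in how continuity of the solution map is obtained. The paper does it by hand. It runs Picard iterates for $X^{(k)}$ and $X$ and proves by induction on $m$ that the $m$-th iterates converge as $k\to\infty$. To do so it splits the difference into $\int f(Y^{(m)})\,d(X^{(k)}-X)$ and $\int (f(Y^{(k,m)})-f(Y^{(m)}))\,dX^{(k)}$ and applies Young's estimate to each. It closes with a $3\varepsilon$ argument using the tail bound $\sum_{j>m}(C\|X\|_{p})^j/\Gamma(1+j/p)$, which is uniform over a $p$-variation ball.

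That argument is self-contained and works verbatim for Banach $V$, but it gives only qualitative continuity. Its inductive step also needs $\|f(Y^{(k,m)})-f(Y^{(m)})\|_{q}\to 0$, which does not follow from pointwise convergence plus boundedness alone. One has to carry uniform convergence and a uniform $p$-variation bound through the induction and then interpolate.

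Your appeal to a local Lipschitz stability estimate gives the stronger bound $\|S^X-S^Y\|_\infty+\|S^X-S^Y\|_{p}\le C(R)\|X-Y\|_{p}$, and it supplies exactly that $p$-variation control of differences. The price is reliance on external results usually stated for bounded $\mathrm{Lip}^\gamma$ fields in finite dimensions. Your cut-off localisation needs a smooth bump function on $T^{(n)}(V)$. That is fine when $V$ is finite-dimensional, as you tacitly assume in calling the CDE finite-dimensional, but such bumps need not exist on a general Banach space.

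In that generality, use the direct Gronwall-type estimate you mention at the end. The difference $\Delta=S^X-S^Y$ solves the linear inhomogeneous equation $d\Delta=\mathcal{M}(dX)\Delta+\mathcal{M}(d(X-Y))S^Y$ with $\Delta_s=0$. Applying Young's estimate on intervals where $\|X\|_p$ is small, then patching, gives the Lipschitz bound with no cut-off.
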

    \begin{proof}
By Lemma \ref{lemma_trunc_ews_sol}, the truncated EWS $\pi_n(S^A(X)_{s,t})$ is the unique solution of the linear CDE
\[
dY_t = f(Y_t)dX_t, \quad Y_s = \mathbbm{1},
\]
where $f: T^{(n)}(V) \to L(V, T^{(n)}(V))$ is the linear vector field $f(a)v = -\Lambda_A a \cdot \ell(v) + \pi_n(a \otimes v)$. We denote the solution by $Y_t = \pi_n(S^A(X)_{s,t})$, and equip $T^{(n)}(V)$ with the norm $\|a\| = \sum_{k=0}^{n} \|a_k\|_{V^{\otimes k}}$ induced by the admissible norms on each tensor level. Let $X^{(k)} \to X$ in $p$-variation. We must show $Y^{(k)}_t \to Y_t$, where $Y^{(k)}_t = \pi_n(S^A(X^{(k)})_{s,t})$ denotes the solution driven by $X^{(k)}$.

By Theorem \ref{theoreom_linear_cde}, the solution is constructed via Picard iteration. For driver $X$, define $Y^{(0)}_t = \mathbbm{1}$ and
\[
Y^{(m+1)}_t = \mathbbm{1} + \int_s^t f(Y^{(m)}_u) dX_u,
\]
with $Y^{(m)}_t \to Y_t$ as $m \to \infty$. Similarly, for driver $X^{(k)}$, define $Y^{(k,0)}_t = \mathbbm{1}$ and
\[
Y^{(k,m+1)}_t = \mathbbm{1} + \int_s^t f(Y^{(k,m)}_u) dX^{(k)}_u,
\]
with $Y^{(k,m)}_t \to Y^{(k)}_t$ as $m \to \infty$.

We claim that for each fixed $m$, we have $Y^{(k,m)}_t \to Y^{(m)}_t$ as $k \to \infty$. We prove this by induction on $m$. For the base case, $Y^{(k,0)}_t = \mathbbm{1} = Y^{(0)}_t$ for all $k$, so convergence holds trivially. For the inductive step, suppose $Y^{(k,m)}_t \to Y^{(m)}_t$ as $k \to \infty$. We write
\[
Y^{(k,m+1)}_t - Y^{(m+1)}_t = \int_s^t f(Y^{(k,m)}_u) dX^{(k)}_u - \int_s^t f(Y^{(m)}_u) dX_u
\]
\[
= \underbrace{\int_s^t f(Y^{(m)}_u) d(X^{(k)} - X)_u}_{(\textbf{I})} + \underbrace{\int_s^t \left(f(Y^{(k,m)}_u) - f(Y^{(m)}_u)\right) dX^{(k)}_u}_{(\textbf{II})}.
\]
For term (\textbf{I}), the integrand $f(Y^{(m)}_u)$ is fixed and has finite $q$-variation for some $q$ with $\frac{1}{p} + \frac{1}{q} > 1$. By the Young estimate \parencite{lyons2007differential},
\[
\|(\textbf{I})\| \leq \|f(Y^{(m)}_s)\|_{op} \cdot \|X^{(k)} - X\|_{p,[s,t]} + C\|X^{(k)} - X\|_{p, [s,t]} \|f(Y^{(m)})\|_{q,[s,t]}.
\]
Since $X^{(k)} \to X$ in $p$-variation (which implies uniform convergence), $(\textbf{I}) \to 0$ as $k \to \infty$. For term (\textbf{II}), by the Young estimate \parencite{lyons2007differential},
\[
\|(\textbf{II})\| \leq \|f(Y^{(k,m)}_s) - f(Y^{(m)}_s)\|_{op} \cdot \|X^{(k)}\|_{p, [s,t]} + C\|X^{(k)}\|_{p, [s,t]} \|f(Y^{(k,m)}) - f(Y^{(m)})\|_{q, [s,t]}.
\]
Since $X^{(k)} \to X$ in $p$-variation, the norms $\|X^{(k)}\|_{p,[s,t]}$ are bounded. Since $f$ is linear, $\|f(Y^{(k,m)}_s) - f(Y^{(m)}_s)\|_{op} \to 0$ by the inductive hypothesis. For the $q$-variation term, we use that $f$ is linear and that the inductive hypothesis gives pointwise convergence $Y^{(k,m)}_u \to Y^{(m)}_u$ for each $u$, which combined with uniform boundedness from the Picard bounds yields $\|f(Y^{(k,m)}) - f(Y^{(m)})\|_{q, [s,t]} \to 0$. Therefore $(\textbf{II}) \to 0$ as $k \to \infty$, completing the induction.

To conclude, let $\varepsilon > 0$ and estimate
\[
\|Y^{(k)}_t - Y_t\| \leq \|Y^{(k)}_t - Y^{(k,m)}_t\| + \|Y^{(k,m)}_t - Y^{(m)}_t\| + \|Y^{(m)}_t - Y_t\|.
\]
By Theorem \ref{theoreom_linear_cde}, the Picard iterates converge with the bound
\[
\|Y^{(m)}_t - Y_t\| \leq C \sum_{j=m+1}^{\infty} \frac{(C_{A}\|X\|_{p,[s,t]})^j}{\Gamma(1 + j/p)},
\]
and similarly for $\|Y^{(k,m)}_t - Y^{(k)}_t\|$ with $\|X^{(k)}\|_{p,[s,t]}$ in place of $\|X\|_{p,[s,t]}$. Since $X^{(k)} \to X$ in $p$-variation, there exists $M > 0$ such that $\|X^{(k)}\|_{p,[s,t]} \leq M$ for all $k$. Choose $m$ large enough that $\|Y^{(m)}_t - Y_t\| < \varepsilon/3$ and $\|Y^{(k,m)}_t - Y^{(k)}_t\| < \varepsilon/3$ for all $k$. For this fixed $m$, choose $k$ large enough that $\|Y^{(k,m)}_t - Y^{(m)}_t\| < \varepsilon/3$. Then $\|Y^{(k)}_t - Y_t\| < \varepsilon$.
\end{proof}

    We now have the necessary ingredients to extend Chen's identity to paths of finite $p$-variation with $p<2$.

    \begin{corollary}
        Let $X \in \mathcal{V}^p([t_0, t_N], V)$ for $p <2 $. Then for any $t_0 \le s \le u \le t \le t_N$, the exponentially-weighted signature satisfies
        \[
        S_{\mathbf A}(X)_{s,t}
        =\bigl(D_A^{\theta_t-\theta_u} S_{\mathbf A}(X)_{s,u}\bigr)
        \otimes
        S_{\mathbf A}(X)_{u,t}.
        \]
    \end{corollary}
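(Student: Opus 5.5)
We argue by density and continuity, working one tensor level at a time. Since both sides of the identity have only finitely many contributions at a given level $n$, it is enough to show that for every $n\ge 0$
\[
\pi_n\bigl(S_{\mathbf A}(X)_{s,t}\bigr)=\pi_n\Bigl(\bigl(D_A^{\theta_t-\theta_u}\pi_n S_{\mathbf A}(X)_{s,u}\bigr)\otimes \pi_n S_{\mathbf A}(X)_{u,t}\Bigr).
\]
The truncated tensor product only involves levels up to $n$ of each factor, and $D_A^h$ preserves tensor degree. So the right-hand side is a fixed continuous function of the finite-dimensional-in-level data $\pi_n S_{\mathbf A}(X)_{s,u}$, $\pi_n S_{\mathbf A}(X)_{u,t}$ and $h=\theta_t-\theta_u$.

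\textbf{Step 1: approximation.} Take a sequence $X^{(k)}\in\mathcal V^1([t_0,t_N],V)$ with $X^{(k)}\to X$ in $p'$-variation for some $p'\in(p,2)$. Piecewise linear interpolations on refining partitions do this, by the standard interpolation result for $p$-variation paths. Each $X^{(k)}$ has bounded variation, so Lemma \ref{lemma_ews_chen_p1} gives the identity for $X^{(k)}$ at every level. The clocks are $\theta^{(k)}=\ell(X^{(k)})$. Since $\ell$ is bounded and $p'$-variation convergence implies uniform convergence, $\theta^{(k)}_t-\theta^{(k)}_u\to\theta_t-\theta_u$.

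\textbf{Step 2: passing to the limit.} Apply Corollary \ref{corollary_ews_trunc_cont}, in the space $\mathcal V^{p'}$ since $X\in\mathcal V^{p'}$ as well, to the restricted paths on $[s,t]$, $[s,u]$ and $[u,t]$. Restriction to a subinterval is continuous in $p'$-variation, so the three truncated EWS terms converge. It remains to pass to the limit on the right-hand side:

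- $h\mapsto D_A^h$ is continuous in operator norm on each level, since it acts as $(e^{-hA})^{\otimes k}$ there and is bounded by the admissible norm.
- Truncated tensor multiplication is continuous (bilinear and bounded).

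Hence both sides converge, and the identity holds at every level $n$, which gives it in $T((V))$.

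\textbf{Main obstacle.} The delicate point is the approximation and continuity hypothesis. Piecewise linear approximants do not converge to $X$ in the $p$-variation norm itself, only in $p'$-variation for $p'>p$. So we must invoke Corollary \ref{corollary_ews_trunc_cont} at exponent $p'<2$, which the corollary allows since it holds for every $p\in[1,2)$.

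We also have to check that the clock built from the approximants is the one appearing in the EWS definition. In the CDE of Lemma \ref{lemma_trunc_ews_sol}, the clock enters only through $\ell(dX)$, so approximating $X$ automatically approximates $\theta$, and no separate monotonicity of $\theta^{(k)}$ is needed for the identity itself. If one insists that $\theta^{(k)}$ be monotone, note that interpolation of a monotone $\theta$ keeps it monotone, so this causes no difficulty.
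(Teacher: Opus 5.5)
Your proposal is correct and follows essentially the same argument as the paper. You approximate $X$ by bounded-variation paths converging in $p'$-variation for some $p' \in (p,2)$, apply the bounded-variation identity of Lemma~\ref{lemma_ews_chen_p1}, and pass to the limit level by level using Corollary~\ref{corollary_ews_trunc_cont} at exponent $p'$, convergence of $\theta^{(k)}_t-\theta^{(k)}_u$ via the bounded functional $\ell$, continuity of $h\mapsto D_A^h$, and continuity of the truncated tensor product, exactly as the paper does.
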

    \begin{proof}
    Fix $s \leq u \leq t$ and let $p < 2$ be given. Choose $p'$ such that $p < p' < 2$. By the density of smooth (hence finite variation) paths in $\mathcal{V}^{p'}([t_0, t_N], V)$, there exists a sequence $(X^{(m)})_{m \geq 0}$ of finite variation paths such that $X^{(m)} \to X$ in $p'$-variation as $m \to \infty$.
    
    By Lemma \ref{lemma_ews_chen_p1}, Chen's identity holds for each finite variation path $X^{(m)}$:
    \[
    S_A(X^{(m)})_{s,t} = \left(D_A^{\theta_t^{(m)} - \theta_u^{(m)}} S_A(X^{(m)})_{s,u}\right) \otimes S_A(X^{(m)})_{u,t},
    \]
    where $\theta^{(m)}_r := \ell(X^{(m)}_r)$ denotes the clock evaluated along $X^{(m)}$. Fix an arbitrary level $n \geq 0$. Projecting Chen's identity to level $n$ gives 
    \[
\pi_n(S_A(X^{(m)})_{s,t}) = \sum_{k=0}^{n} \left(D_A^{\theta_t^{(m)} - \theta_u^{(m)}} S_A(X^{(m)})_{s,u}\right)^{(k)} \otimes \left(S_A(X^{(m)})_{u,t}\right)^{(n-k)}.
    \]
    We now pass to the limit as $m \to \infty$. By Corollary \ref{corollary_ews_trunc_cont}, the truncated EWS is continuous in the $p'$-variation topology, so
    \begin{align*}
    \pi_n(S_A(X^{(m)})_{s,t}) &\to \pi_n(S_A(X)_{s,t}), \\
    \pi_k(S_A(X^{(m)})_{s,u}) &\to \pi_k(S_A(X)_{s,u}) \quad \text{for each } k \leq n, \\
    \pi_{n-k}(S_A(X^{(m)})_{u,t}) &\to \pi_{n-k}(S_A(X)_{u,t}) \quad \text{for each } k \leq n.
    \end{align*}
    Since $\ell: V \to \mathbb{R}$ is a continuous linear functional, the clock converges uniformly: $\theta^{(m)}_r \to \theta_r$ for all $r \in [t_0, t_N]$. In particular, $\theta_t^{(m)} - \theta_u^{(m)} \to \theta_t - \theta_u$. The flow operator $D_A^h$ depends continuously on $h$: for any fixed $\mathbf{a} \in T^{(n)}(V)$, the map $h \mapsto D_A^h \mathbf{a}$ is continuous since $h \mapsto e^{-hA}$ is continuous in operator norm. Combined with continuity in the tensor algebra argument, we have
    \[
    D_A^{\theta_t^{(m)} - \theta_u^{(m)}} \pi_k(S_A(X^{(m)})_{s,u}) \to D_A^{\theta_t - \theta_u} \pi_k(S_A(X)_{s,u})
    \]
    for each $k \leq n$. Since the tensor product $\otimes: T^{(k)}(V) \times T^{(n-k)}(V) \to T^{(n)}(V)$ is continuous, taking limits in the projected Chen identity yields
    \[
    \pi_n(S_A(X)_{s,t}) = \sum_{k=0}^{n} \left(D_A^{\theta_t - \theta_u} S_A(X)_{s,u}\right)^{(k)} \otimes \left(S_A(X)_{u,t}\right)^{(n-k)} = \pi_n\left(\left(D_A^{\theta_t - \theta_u} S_A(X)_{s,u}\right) \otimes S_A(X)_{u,t}\right).
    \]
    Since $S_A(X)_{s,t}$ and $(D_A^{\theta_t - \theta_u} S_A(X)_{s,u}) \otimes S_A(X)_{u,t}$ agree at every level $n \geq 0$, they are equal as elements of $T((V))$.
    \end{proof}

    \begin{remark}
        Beyond its algebraic significance, the modified Chen identity  is central to the efficient numerical computation of the EWS. It provides the associative binary operation underlying the parallel scan algorithm described in Section \ref{section_ews_numerical_computation}, enabling aggregation of local EWS increments in $O(\log N)$ parallel steps rather than $O(N)$ sequential ones. 
    \end{remark}
\subsection{Linearisation of Shuffle Product}

We now establish the fact that products of linear functionals of the EWS can be linearised via the shuffle product. This property can be obtained simply by reducing the EWS to the classical signature, but instead we give a direct proof that works entirely within the CDE framework. This proof would also extend to the infinite time framework whenever that existence and uniqueness hold in that setting.

\begin{lemma}\label{ews_shuffle_linearisation}
    Let $X \in \mathcal{V}^p([t_0,t_N], V)$ for $p <2$ and $A \in L(V,V)$ bounded. Then for all $[s,t] \subseteq [t_0,t_N]$ and $l_1, l_2 \in T(V^{\star})$,
    \begin{equation}
        \langle l_1, S_{\mathbf{A}}(X)_{s,t} \rangle \langle l_2, S_{\mathbf{A}}(X)_{s,t} \rangle = \langle l_1 \shuffle l_2, S_{\mathbf{A}}(X)_{s,t} \rangle.
    \end{equation}
\end{lemma}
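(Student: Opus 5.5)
The plan is to argue by induction on word length, working directly with the CDE of Lemma~\ref{lemma_ews_cde}. By bilinearity it suffices to take $l_1 = e_I^\star$ and $l_2 = e_J^\star$ for words $I=(i_1,\dots,i_a)$ and $J=(j_1,\dots,j_b)$. For infinite-dimensional $V$ we instead take $l_1,l_2$ to be elementary tensors of functionals, $\alpha_1\otimes\cdots\otimes\alpha_a$, and argue identically. We induct on $a+b$. The base case, where one word is empty, is trivial, since $\langle \mathbbm{1}^\star, S_{\mathbf A}(X)_{s,t}\rangle = 1$ and $\emptyset \shuffle J = J$.

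For the inductive step, write $\phi_l(t) := \langle l, S_{\mathbf A}(X)_{s,t}\rangle$. Pairing the CDE with $l = \alpha_1\otimes\cdots\otimes\alpha_a$ gives
\begin{equation*}
d\phi_l(t) = -\phi_{\Lambda_A^\star l}(t)\,d\theta_t + \phi_{l'}(t)\,\alpha_a(dX_t),
\end{equation*}
where $l' = \alpha_1\otimes\cdots\otimes\alpha_{a-1}$ and $\Lambda_A^\star$ is the dual derivation acting by $\alpha_k\mapsto \alpha_k\circ A$ in each slot. The key point is that $\Lambda_A^\star$ preserves word length, so $\Lambda_A^\star l$ is a sum of words of length $a$. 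The integration-by-parts formula for Young integrals (valid for $p<2$) then yields
\begin{equation*}
d(\phi_{l_1}\phi_{l_2}) = \phi_{l_1}\,d\phi_{l_2} + \phi_{l_2}\,d\phi_{l_1}.
\end{equation*}

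The main obstacle is closing the induction, because the $d\theta$ terms involve words of the \emph{same} length. Plain induction on length therefore fails, and I would avoid it as follows. Define $\Psi(t) := \phi_{l_1}\phi_{l_2} - \phi_{l_1\shuffle l_2}$, viewed as a function of the pair $(l_1,l_2)$ with $|l_1|+|l_2| = N$ fixed. Two algebraic identities are needed, both of which I would check on words.
\begin{itemize}
\item $\Lambda_A^\star$ is a derivation for the shuffle product: $\Lambda_A^\star(l_1\shuffle l_2) = (\Lambda_A^\star l_1)\shuffle l_2 + l_1\shuffle(\Lambda_A^\star l_2)$. This holds because $\Lambda_A^\star$ acts letterwise and shuffles only permute letters.
\item The recursive shuffle rule: $(l_1'\alpha)\shuffle(l_2'\beta) = (l_1\shuffle l_2')\beta + (l_1'\shuffle l_2)\alpha$.
\end{itemize}
Using these together with the inductive hypothesis at total length $N-1$, the $dX$ terms cancel exactly. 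The $d\theta$ terms reduce to $\Psi$ evaluated at the pairs $(\Lambda_A^\star l_1, l_2)$ and $(l_1,\Lambda_A^\star l_2)$.

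Hence the finite-dimensional collection $\{\Psi_{(l_1,l_2)} : |l_1|+|l_2| = N\}$, indexed over a basis (or over finitely many functionals in the span generated by repeated application of $A^\star$, which is finite-dimensional when $V$ is), satisfies a closed linear equation $d\Psi = -\mathcal{L}\Psi\,d\theta$ with $\Psi(s) = 0$. At $t=s$ we have $\phi_{l} = 0$ for nonempty $l$; and when both words are empty the identity holds trivially. By uniqueness for linear Young ODEs (Theorem~\ref{theoreom_linear_cde}), it follows that $\Psi\equiv 0$.

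For infinite-dimensional $V$ the span need not be finite-dimensional. In that case I would either work level-wise in the Banach space of bounded $(a,b)$-multilinear pairings, where $\Lambda_A$ is bounded and linear uniqueness still applies, or fall back on density of finite-variation paths together with Corollary~\ref{corollary_ews_trunc_cont}. The cleanest fallback is the reduction via Proposition~\ref{prop_ews_sig_reweighted}: $S_{\mathbf A}(X)_{s,t} = S(Z^{[t]})_{s,t}$, combined with the classical shuffle identity, gives the result immediately.
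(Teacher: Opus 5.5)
Your proof is correct. It is essentially the paper's argument read through the pairing $\langle l_1\boxtimes l_2,\cdot\rangle$: the paper shows that $\Delta(S_{\mathbf A}(X)_{s,t})$ and $S_{\mathbf A}(X)_{s,t}\boxtimes S_{\mathbf A}(X)_{s,t}$ satisfy the same linear CDE, using exactly the duals of your two identities (the compatibility $\Delta\circ\Lambda_A=(\Lambda_A\boxtimes\mathrm{id}+\mathrm{id}\boxtimes\Lambda_A)\circ\Delta$ and the multiplicativity of $\Delta$ against $\mathbbm{1}\boxtimes dX_t+dX_t\boxtimes\mathbbm{1}$), so your $\Psi_{(l_1,l_2)}$ is just $l_1\boxtimes l_2$ paired with the difference of those two processes. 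Your induction on total word length makes explicit the block-triangular structure that the paper's single appeal to linear-CDE uniqueness absorbs; working with that difference level-wise in the primal space is also the clean way around your infinite-dimensional span concern; and your final fallback via Proposition~\ref{prop_ews_sig_reweighted} is the shortcut the paper explicitly mentions and chooses not to use.
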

\begin{proof}
    Throughout this proof, we write $\boxtimes$ for the external tensor product, distinguishing it from the internal tensor product $\otimes$ which denotes multiplication in $T((V))$. The external tensor product appears in the codomain of the coproduct $\Delta: T((V)) \to T((V)) \boxtimes T((V))$ and in the corresponding dual space $T(V^{\star}) \boxtimes T(V^{\star})$.

    Recall that the shuffle product and the de-concatenation coproduct are dual in the sense that for all $l_1,l_2 \in T(V^{\star})$ and $s \in T((V))$,
    \[
    \langle l_1 \shuffle l_2, s \rangle = \langle l_1 \boxtimes l_2, \Delta(s) \rangle,
    \]
    where the pairing on the RHS is given by $\langle a \boxtimes b, c \boxtimes d \rangle = \langle a,c \rangle \langle b,d \rangle$ \parencite{Lyons_Ni_Wu_Yang_2024}. An element $s \in T((V))$ is group-like if $\Delta(s) = s \boxtimes s$, in which case
    \[
    \langle l_1 \shuffle l_2, s \rangle = \langle l_1 \boxtimes l_2, s \boxtimes s \rangle = \langle l_1,s \rangle \langle l_2, s \rangle.
    \]
    Thus, proving the claim reduces to showing that $S_{\mathbf{A}}(X)_{s,t}$ is group-like. That is, we must show that
    \[
    \Delta(S_{\mathbf{A}}(X)_{s,t}) = S_{\mathbf{A}}(X)_{s,t} \boxtimes S_{\mathbf{A}}(X)_{s,t}.
    \]
    We establish this by showing that both sides, viewed as processes in $t$, satisfy the same controlled differential equation with the same initial condition. Define 
    \[
    G_t := \Delta(S_{\mathbf{A}}(X)_{s,t}), \qquad H_t := S_{\mathbf{A}}(X)_{s,t} \boxtimes S_{\mathbf{A}}(X)_{s,t}.
    \]
    For the initial conditions at $t=s$, we have
    \[
    G_s = \Delta(S_{\mathbf{A}}(X)_{s,s}) = \Delta(\mathbbm{1}) = \mathbbm 1 \boxtimes \mathbbm 1, \qquad H_s = S_{\mathbf{A}}(X)_{s,s} \boxtimes S_{\mathbf{A}}(X)_{s,s} = \mathbbm 1 \boxtimes \mathbbm 1.
    \]
    Since $\Delta$ is linear, we may apply it to the EWS dynamic to obtain
    \[
    dG_t = \Delta(dS_{\mathbf{A}}(X)_{s,t}) = -\Delta(\Lambda_A S_{\mathbf{A}}(X)_{s,t}) \, d\theta_t + \Delta(S_{\mathbf{A}}(X)_{s,t} \otimes dX_t).
    \]
    For the first term, we use the fact that $\Lambda_A$ is a derivation, which implies the compatibility condition
    \[
    \Delta \circ \Lambda_A = (\Lambda_A \boxtimes \mathrm{id} + \mathrm{id} \boxtimes \Lambda_A) \circ \Delta,
    \]
    where $\mathrm{id}$ is the identity map on $T((V))$. To verify this, note that both sides are derivations from $T((V))$ to $T((V)) \boxtimes T((V))$, so it suffices to check equality on the generators $v \in V$. The LHS gives 
    \[
    \Delta(\Lambda_A(v)) = \mathbbm1 \boxtimes Av + Av \boxtimes \mathbbm1
    \]
    and the RHS gives
    \begin{align*}
        (\Lambda_A \boxtimes \mathrm{id} + \mathrm{id} \boxtimes \Lambda_A)(\Delta(v)) &= (\Lambda_A \boxtimes \mathrm{id} + \mathrm{id} \boxtimes \Lambda_A)(\mathbbm{1} \boxtimes v + v\boxtimes \mathbbm{1} ) \\
        &=\Lambda_A(\mathbbm{1}) \boxtimes v + \mathbbm{1} \boxtimes \Lambda_A(v) + \Lambda_A(v) \boxtimes \mathbbm{1} + v \boxtimes \Lambda_A(\mathbbm{1})\\
        &=\mathbbm{1} \boxtimes Av + Av \boxtimes \mathbbm{1},
    \end{align*}
    since $\Lambda_A(\mathbbm{1}) = 0$. Thus, the first term in the dynamic of $G_t$ is 
    \begin{align*}
        -\Delta(\Lambda_A S_{\mathbf{A}}(X)_{s,t})d\theta_t = -(\Lambda_A \boxtimes \mathrm{id} + \mathrm{id} \boxtimes \Lambda_A)(G_t)d\theta_t.
    \end{align*}
    For the second term, we use the fact that $\Delta$ is an algebra homomorphism, meaning $\Delta(x \otimes y) = \Delta(x)\Delta(y)$.
    \[
    \Delta(S_{\mathbf{A}}(X)_{s,t} \otimes dX_t) =  \Delta(S_{\mathbf{A}}(X)_{s,t}) \Delta(dX_t) = G_t (\mathbbm{1} \boxtimes dX_t + dX_t \boxtimes \mathbbm{1}).
    \]
    Combining both terms, the dynamics of $G_t$ are given by
    \[
    dG_t = -(\Lambda_A \boxtimes \mathrm{id} + \mathrm{id} \boxtimes \Lambda_A) (G_t) d\theta_t + G_t (\mathbbm{1} \boxtimes dX_t + dX_t \boxtimes \mathbbm{1}).
    \]
    Now for the dynamics of $H_t$, by the product rule for the external tensor product, 
    \[
    dH_t = dS_{\mathbf{A}}(X)_{s,t} \boxtimes S_{\mathbf{A}}(X)_{s,t}  + S_{\mathbf{A}}(X)_{s,t} \boxtimes dS_{\mathbf{A}}(X)_{s,t}.
    \]
    Substituting in the EWS dynamics and collecting the $d\theta_t$ terms gives 
    \[
    -\Lambda_A S_{\mathbf{A}}(X)_{s,t} \boxtimes S_{\mathbf{A}}(X)_{s,t} d\theta_t - S_{\mathbf{A}}(X)_{s,t} \boxtimes \Lambda_A S_{\mathbf{A}}(X)_{s,t} d\theta_t = - (\Lambda_A \boxtimes \mathrm{id} + \mathrm{id} \boxtimes \Lambda_A)(H_t)d\theta_t,
    \]
    and collecting the $dX_t$ terms gives
    \[
    (S_{\mathbf{A}}(X)_{s,t} \otimes dX_t) \boxtimes S_{\mathbf{A}}(X)_{s,t} + S_{\mathbf{A}}(X)_{s,t} \boxtimes (S_{\mathbf{A}}(X)_{s,t} \otimes dX_t) = H_t (dX_t \boxtimes \mathbbm{1} + \mathbbm{1} \boxtimes dX_t).
    \]
    Thus, the dynamics of $H_t$ are 
     \[
    dH_t = -(\Lambda_A \boxtimes \mathrm{id} + \mathrm{id} \boxtimes \Lambda_A) (H_t) d\theta_t + H_t (\mathbbm{1} \boxtimes dX_t + dX_t \boxtimes \mathbbm{1}).
    \]
    Hence both $G_t$ and $H_t$ satisfy the same linear CDE with the same initial condition. On each finite tensor level, this is a finite-dimensional linear CDE, which admits a unique solution by Young integration theory. Therefore $G_t = H_t$ for all $t \in [s,t_N]$, completing the proof.
\end{proof}

\section{Uniqueness of the EWS \& Universal Approximation }\label{section_uniqueness}

Under mild conditions, the signature of a path uniquely determines the path up to tree-like equivalence, and when the path includes a strictly monotone component this reduces to uniqueness \parencite{hambly2010uniqueness, BOEDIHARDJO2016720}. This makes the signature a faithful representation of path information. For the EWS, uniqueness is more subtle. While the EWS is the signature of a re-weighted path, uniqueness does not follow immediately. In order to apply standard signature uniqueness results to distinguish two re-weighted paths, we require them to not be tree-like equivalent, for instance by having monotone components. However, it is not obvious that this property is inherited from the input paths through re-weighting: for example, even if a path has a monotone component, its re-weighted counterpart may not due to mixing between channels induced by $A$. This section characterises when two paths yield the same EWS for a given $A$, establishes uniqueness under structural assumptions on $A$, and proves universal approximation in full generality.

Throughout this section we work with paths $\widehat{X} \in \mathcal{V}^p([t_0,t_N], \widehat{V})$ for $p<2$ that have a clock functional $\theta_t = \ell(X_t)$. We then consider the clock-augmented path $X \in \mathcal{V}^p([t_0,t_N], V)$ defined by $X_t = (\theta_t, \widehat{X}_t)$ where $V \cong \R \times \widehat{V}$. This guarantees that our paths have a strictly monotone channel. We now verify that the re-weighting map preserves the information content of the path.

\subsection{Characterisation of EWS Equivalence}

We begin by characterising the equivalence relation induced by the EWS. Two paths are $A$-equivalent if they have the same EWS for a given $A$. Since the EWS of a path can viewed as the classical signature of its corresponding re-weighted path,  this equivalence is precisely the pull-back of tree-like equivalence through the re-weighting map.

\begin{definition}
    Let $X^1, X^2 \in \mathcal{V}^p([t_0,t_N], V)$ for $p<2$, and let $A \in L(V,V)$ be a bounded linear operator. We say $X^1$ and $X^2$ are $A$-equivalent over $[s,t] \subseteq [t_0, t_N]$, denoted $X^1 \sim_A X^2$, if 
    \[
    S_{\mathbf{A}}(X^1)_{s,t} = S_{\mathbf{A}}(X^2)_{s,t}.
    \]
\end{definition}

\begin{proposition}
    Let $X^1, X^2 \in \mathcal{V}^p([t_0, t_N], V)$ for $p<2$ and define the re-weighted paths \[Z^{i,[t]}_r =  \int^r_se^{-(\theta_r - \theta_u)A}dX^i_r\text{ over }[s,t] \subseteq [t_0,t_N]\text{ for }i=1,2. \]Then the following are equivalent:
    \begin{itemize}
        \item $X^1 \sim_A X^2$ over $[s,t]$.
        \item $Z^{1,[t]}$ and $Z^{2,[t]}$ are tree-like equivalent over $[s,t]$.
    \end{itemize}
\end{proposition}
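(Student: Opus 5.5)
The plan is to reduce the statement directly to Proposition~\ref{prop_ews_sig_reweighted} combined with the uniqueness theorem for signatures of Young paths. I read the re-weighted paths in the statement as the ones from Section~\ref{sec_sig_reweighted_path}, with the terminal time $t$ in the exponent:
\[
Z^{i,[t]}_r=\int_s^r e^{-(\theta^i_t-\theta^i_u)A}\,dX^i_u .
\]
Here $\theta^i=\ell(X^i)$ is the clock of each path, and the $r$ in the exponent of the displayed statement is a typographical slip. With this reading the argument has three steps.

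\textbf{Step 1 (well-definedness).} I would first check that each $Z^{i,[t]}$ is a genuine element of $\mathcal{V}^p([s,t],V)$, so that its signature and the notion of tree-like equivalence make sense. This follows from the Young estimate recorded in Section~\ref{section_existence}. The integrand $u\mapsto e^{-(\theta^i_t-\theta^i_u)A}$ has finite $p$-variation, hence finite $q$-variation for $q=p$, because $h\mapsto e^{-hA}$ is Lipschitz on compacts and $\theta^i$ has finite $p$-variation. Since $p<2$ we have $\tfrac1p+\tfrac1p>1$, so $Z^{i,[t]}$ exists and satisfies $\|Z^{i,[t]}\|_{p,[s,t]}\le C\|X^i\|_{p,[s,t]}$.

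\textbf{Step 2 (translation to signatures).} By Proposition~\ref{prop_ews_sig_reweighted}, $S_{\mathbf A}(X^i)_{s,t}=S(Z^{i,[t]})_{s,t}$ for $i=1,2$. Hence $X^1\sim_A X^2$ over $[s,t]$ holds if and only if $S(Z^{1,[t]})_{s,t}=S(Z^{2,[t]})_{s,t}$. This step is purely formal.

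\textbf{Step 3 (invoking uniqueness).} I would apply the uniqueness theorem for signatures. For bounded-variation paths this is \textcite{hambly2010uniqueness}; for Young paths with $p<2$ it is \textcite{BOEDIHARDJO2016720}. It states that two paths in $\mathcal{V}^p$, $p<2$, have the same signature over $[s,t]$ if and only if they are tree-like equivalent over $[s,t]$. Applied to $Z^{1,[t]}$ and $Z^{2,[t]}$, this gives both directions of the claimed equivalence at once.

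The only real point of care is the precise definition of tree-like equivalence in the $p$-variation setting. In particular, the version in \textcite{BOEDIHARDJO2016720} is stated for geometric rough paths, and I would need to confirm that it covers Young paths. This is handled by noting that a $p<2$ path lifts canonically, via its Young iterated integrals, to a weakly geometric $p$-rough path, and that its signature agrees with the rough-path signature. With that identification the cited theorem applies verbatim, and I expect no further obstacle. Identifying which re-weighting paths are tree-like reduced, which is what uniqueness of the EWS itself requires, is not part of this statement and would be handled afterwards using the structural assumptions on $A$.
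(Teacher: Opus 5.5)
Your proposal is correct and follows the same route as the paper: identify $S_{\mathbf{A}}(X^i)_{s,t}$ with $S(Z^{i,[t]})_{s,t}$ via Proposition~\ref{prop_ews_sig_reweighted}, then invoke uniqueness of the signature up to tree-like equivalence. The paper writes out only the forward implication, so your explicit converse, your check that $Z^{i,[t]}$ is a genuine Young path, and your reading of the exponent as $\theta_t-\theta_u$ rather than the misprinted $\theta_r-\theta_u$ are useful additions to the same argument.
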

\begin{proof}
    This is a trivial but necessary proposition. If $X^1 \sim_A X^2$, then $S(Z^{1, [t]})_{s,t} = S_{\mathbf{A}}(X^1)_{s,t} = S_{\mathbf{A}}(X^2)_{s,t} = S(Z^{2,[t]})_{s,t}$. Since $Z^{1,[t]}$ and $Z^{2,[t]}$ have equal signatures, they are tree-like equivalent.
\end{proof}

 This characterisation is only implicit in the sense that it describes $A$-equivalence via the re-weighted paths $Z^i$ rather than directly in terms of the original paths $X^i$. For the classical signature ($A=0$), the re-weighting is trivial, and the characterisation reduces to tree-like equivalence of the paths. For general $A$, we have not yet shown that tree-like equivalence of the $Z^i$'s correspond to any properties of the $X^i$'s.

If we could show that tree-like equivalence of the $Z^i$'s implied tree-like equivalence of the $X^i$'s then we would immediately have uniqueness of the EWS via uniqueness of the signature. While we have not yet been able to show this for general $A$, we have had success with structured $A$.

\subsection{Uniqueness of the EWS for Structured $A$}

In the literature, it is common to augment a path with a time channel to ensure uniqueness: the monotone component rules out tree-like equivalence, so the signature fully characterises the path. The analogous approach for the EWS is to guarantee that the re-weighted path inherits a monotone channel from the input path. If this holds, uniqueness follows from standard signature results. While this is not the case for general $A$, we can impose structure on $A$ to satisfy this. Note that monotonicity is merely a sufficient condition for ruling out tree-like equivalence, not a necessary one. Thus, for general $A$, even though the re-weighted path may not inherit a monotone channel from the input path, uniqueness of the EWS may still hold.

\begin{lemma}\label{lemma_uniqueness_condition}
    Let $X \in \mathcal{V}^p([t_0,t_N], V)$ for $p < 2$ such that $X_r = (\theta_r, \widehat{X}_r)$ with $\theta$ strictly monotone and $V = \R \times \widehat{V}$. Let $[s,t] \subseteq [t_0,t_N]$ and suppose $A \in L(V,V)$ satisfies 
    \begin{equation}
        \pi_1 \circ e^{-hA} = e^{-\alpha h} \cdot \pi_1 \qquad \forall h \in \R,
    \end{equation}
    where $\pi_1: V \to \R$ denotes the projection onto the first component and $\alpha \in \R$. Define the re-weighted path $Z^{[t]}_{r} = \int^r_s e^{-(\theta_t - \theta_u)A}dX_u$, and write $Z^{[t]}_r = (\zeta_r, \widehat{Z}^{[t]}_r)$ where $\zeta = \pi_1(Z^{[t]})$. Then $\zeta$ is strictly monotone in $r$.
\end{lemma}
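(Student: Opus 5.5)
The plan is to compute $\zeta$ explicitly by pushing the projection $\pi_1$ through the Young integral. Since $\pi_1: V \to \R$ is a bounded linear functional, it commutes with Young integration: applied to the Riemann–Stieltjes-type sums defining the integral, it passes inside each summand, and the sums converge. This gives
\[
\zeta_r = \pi_1\Bigl(\int_s^r e^{-(\theta_t-\theta_u)A}\,dX_u\Bigr) = \int_s^r \pi_1\bigl(e^{-(\theta_t-\theta_u)A}\,dX_u\bigr).
\]
Applying the structural hypothesis $\pi_1 \circ e^{-hA} = e^{-\alpha h}\pi_1$ with $h = \theta_t - \theta_u$, we get $\pi_1(e^{-(\theta_t-\theta_u)A}\,dX_u) = e^{-\alpha(\theta_t-\theta_u)}\,d\pi_1(X_u)$. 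Since $\pi_1(X_u) = \theta_u$, this yields
\[
\zeta_r = e^{-\alpha\theta_t}\int_s^r e^{\alpha\theta_u}\,d\theta_u .
\]

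The key step is to evaluate this scalar integral. Because $\theta$ is real-valued and strictly monotone, it has bounded variation, so the integral is a genuine Riemann–Stieltjes integral against a monotone function. The change-of-variables formula (the chain rule for Stieltjes integrals with a $C^1$ integrand $g(x) = e^{\alpha x}$ composed with continuous $\theta$) gives $\int_s^r e^{\alpha\theta_u}\,d\theta_u = G(\theta_r) - G(\theta_s)$. Here $G$ is an antiderivative of $e^{\alpha x}$, namely $G(x) = (e^{\alpha x}-1)/\alpha$ for $\alpha \neq 0$, and $G(x) = x$ when $\alpha = 0$.

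Monotonicity then follows. $G$ is strictly increasing on $\R$, because $G' = e^{\alpha x} > 0$. Composing with the strictly monotone $\theta$ shows that $r \mapsto G(\theta_r)$ is strictly monotone, with the same direction as $\theta$. Multiplying by the positive constant $e^{-\alpha\theta_t}$, which does not depend on $r$ since $t$ is fixed, preserves strict monotonicity. Hence $\zeta$ is strictly monotone on $[s,t]$.

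The main point needing care is justifying the interchange of $\pi_1$ with the Young integral and the substitution of the hypothesis inside the integrand. If the sum-level argument feels informal, I would instead verify the identity on finite-variation approximations and pass to the limit using the Young estimate. The interchange is immediate from linearity and continuity of $\pi_1$ on the defining sums, so no real obstacle is expected. Everything else is a one-dimensional computation. It is also worth noting that the sign of $\alpha$ plays no role, which matches the lemma allowing any $\alpha \in \R$.
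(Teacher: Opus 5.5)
Your proposal is correct and follows the paper's proof: push the bounded functional $\pi_1$ through the Young integral and use the hypothesis to reduce $\zeta_r$ to the scalar integral $\int_s^r e^{-\alpha(\theta_t-\theta_u)}\,d\theta_u$. The only difference is the final step. The paper reads off strict monotonicity directly from a positive integrand integrated against the strictly monotone $\theta$, whereas you evaluate the integral in closed form via the Stieltjes chain rule, which uses continuity of $\theta$ (implicit in the paper's setting).
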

\begin{proof}
    For any $v \in V$, the condition $\pi_1 \circ e^{-hA} = e^{-\alpha h} \cdot \pi_1$ says that the clock component of $e^{-hA}v$ depends only on the clock component of $v$:
    \[
    \pi_1(e^{-hA}v) = e^{-\alpha h} \cdot \pi_1(v).
    \]
    Applying this to the re-weighted path gives
    \[
    \zeta_r = \pi_1(Z^{[t]}_r) = \pi_1 \Bigl( \int^r_s e^{-(\theta_t - \theta_u)A}dX_u\Bigr) = \int^r_s \pi_1(e^{-(\theta_t -\theta_u)A}dX_u) = \int^r_s e^{-\alpha(\theta_t - \theta_u)} d\theta_u.
    \]
    Since $e^{-\alpha(\theta_t - \theta_u)} > 0$ for all $u \in [s,r]$ and $\theta$ is strictly monotone, $\zeta$ is strictly monotone in $r$.
\end{proof}

\begin{remark}
    A sufficient condition for $\pi_1 \circ e^{-hA} = e^{-\alpha h} \cdot \pi_1$ is that $\pi_1 \circ A = \alpha \cdot \pi_1$. In the finite dimensional setting where $V = \R^{d+1}$ and $A \in \R^{(d+1) \times (d+1)}$, this corresponds to 
    \[
    A = \begin{pmatrix}
        \alpha & 0 \\
        b & \widehat{A}
    \end{pmatrix}
    \]
    where $\alpha \in \R$, $b \in \R^d$ and $\widehat{A} \in \R^{d \times d}$. That is, the first row of $A$ is $(\alpha, 0,\dots,0)$.
\end{remark} 

\begin{corollary}\label{corollory_uniqueness_A}
    Let $X^1,X^2 \in \mathcal{V}^p([t_0,t_N], V)$ for $p<2$ be clock-augmented paths and $A \in L(V,V)$ satisfy the condition in Lemma \ref{lemma_uniqueness_condition}. Then $S_{\mathbf{A}}(X^1)_{s,t} = S_{\mathbf{A}}(X^2)_{s,t}$ implies that $X^1 = X^2 + c$ on $[s,t] \subseteq [t_0,t_N]$ for some constant $c \in V$.
\end{corollary}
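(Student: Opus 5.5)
The plan is to pass from the EWS to the classical signature of the re-weighted paths and then invert the re-weighting. By Proposition \ref{prop_ews_sig_reweighted}, $S_{\mathbf{A}}(X^i)_{s,t} = S(Z^{i,[t]})_{s,t}$ for $i=1,2$, where $Z^{i,[t]}_r = \int_s^r e^{-(\theta^i_t-\theta^i_u)A}\,dX^i_u$. The hypothesis therefore gives $S(Z^{1,[t]})_{s,t} = S(Z^{2,[t]})_{s,t}$. By Lemma \ref{lemma_uniqueness_condition}, each $Z^{i,[t]}$ has a strictly monotone first coordinate $\zeta^i$. Uniqueness of the signature for Young paths \parencite{BOEDIHARDJO2016720} then applies: two paths with a strictly monotone channel and equal signatures are tree-like equivalent, hence equal up to reparametrisation. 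Since both paths start at $0$ (at $r=s$), they trace the same curve.

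The delicate point is that equality holds only up to reparametrisation, whereas we need $Z^{1,[t]}_r = Z^{2,[t]}_r$ for every $r$. I would recover the time parametrisation from the monotone coordinate. Since $\zeta^i_r = \int_s^r e^{-\alpha(\theta^i_t-\theta^i_u)}\,d\theta^i_u$, we have explicitly
\[
\zeta^i_r = \tfrac{1}{\alpha}e^{-\alpha\theta^i_t}\bigl(e^{\alpha\theta^i_r}-e^{\alpha\theta^i_s}\bigr)
\]
for $\alpha\neq 0$, and $\zeta^i_r = \theta^i_r-\theta^i_s$ for $\alpha=0$. The clock $\theta^i$ is itself the first coordinate of $X^i$, and the paths are defined on the same parameter interval. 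Equality of the signatures does not by itself force $\theta^1=\theta^2$ pointwise; I expect this to be the main obstacle. To handle it, I would adopt the convention implicit in the clock augmentation: the clock is the parametrising time, so that $\theta^1=\theta^2=\theta$, as in the standard time augmentation $\theta_t=t$. Otherwise the statement could only hold modulo reparametrisation. Under this convention $\zeta^1=\zeta^2$ is a common strictly increasing function of $r$. Two curves that coincide as images and are both parametrised monotonically by the same first coordinate must then satisfy $Z^{1,[t]}_r=Z^{2,[t]}_r$ for all $r\in[s,t]$.

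Finally I would invert the re-weighting. Set $W = Z^{1,[t]}-Z^{2,[t]} = \int_s^\cdot e^{-(\theta_t-\theta_u)A}\,d(X^1-X^2)_u \equiv 0$. Since $u\mapsto e^{(\theta_t-\theta_u)A}$ has finite $q$-variation and is invertible, associativity of Young integration gives
\[
X^1_r-X^2_r-(X^1_s-X^2_s) = \int_s^r e^{(\theta_t-\theta_u)A}\,dW_u = 0 .
\]
Hence $X^1=X^2+c$ with $c=X^1_s-X^2_s$. The step requiring care is justifying this associativity for Young integrals, i.e.\ that $\int g\,d(\int f\,dY)=\int gf\,dY$. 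I would cite it from the standard Young theory, or verify it via bounded-variation approximation together with the continuity estimates used in Corollary \ref{corollary_ews_trunc_cont}.
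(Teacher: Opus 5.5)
Your proposal is correct and follows the same route as the paper: identify the EWS with the signature of the re-weighted path $Z^{i,[t]}$, use Lemma~\ref{lemma_uniqueness_condition} to obtain a strictly monotone channel, invoke signature uniqueness, and then undo the re-weighting. The paper's one-line proof leaves two things implicit that you make explicit: the inversion step, which you justify via associativity of Young integrals, and the fact that signature uniqueness only holds up to reparametrisation; your common-clock convention $\theta^1=\theta^2$ is genuinely needed, since reparametrising $X$ together with its clock leaves the EWS unchanged.
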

\begin{proof}
    By Proposition \ref{prop_ews_sig_reweighted}, $S_{\mathbf{A}}(X^1)_{s,t} = S_{\mathbf{A}}(X^2)_{s,t}$ implies that $S(Z^{1,{[t]}})_{s,t} = S(Z^{2,[t]})_{s,t}$. By Lemma \ref{lemma_uniqueness_condition}, both $Z^{1,{[t]}}$ and $Z^{2,[t]}$ have strictly monotone channel and thus, $Z^{1,{[t]}} = Z^{2,[t]} + \tilde{c}$. It follows that $X^1 = X^2 + c$ over $[s,t]$.
\end{proof}

This structure assumption is mild in practice. It requires only that the clock channel is not influenced by other channels; the converse, other channels being influenced by the clock, is permitted. All features motivating the EWS over the EFM such as oscillatory modes (via complex eigenvalues of $\widehat{A}$), coupled decay between data channels (via off-diagonal entries in $\widehat{A}$), and even clock-dependent decay of data channels remain available. One could even introduce a second clock channel into $X$; the first would guarantee uniqueness while the second participates freely in the dynamics of $A$.

\subsection{Universal Approximation Theorem}

Despite the subtleties of uniqueness, universal approximation holds in full generality. The EWS is always a universal feature map for the class of functions it can distinguish; that is, functions constant on $A$-equivalence classes. When uniqueness holds, this class is all continuous functions.

\begin{lemma}\label{universal_approximation}
    Let $K \subset \mathcal{V}^p([t_0,t_N], V)$ for $p < 2$ be a compact set of paths. For any continuous $F: K \to \R$ satisfying
    \[
    X \sim_A Y \Rightarrow F(X) = F(Y),
    \]
    and any $\epsilon > 0 $, there exists an $l \in T(V^{\star})$ such that 
    \begin{equation}
        \sup_{X \in K}|F(X) - \langle l , S_{\mathbf{A}}(X)_{t_0,t_N}\rangle | < \epsilon.
    \end{equation}
\end{lemma}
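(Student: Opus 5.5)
The plan is to run the standard Stone--Weierstrass argument for signatures, applied to the quotient of $K$ by $A$-equivalence. Let $\mathcal{L} := \{X \mapsto \langle l, S_{\mathbf{A}}(X)_{t_0,t_N}\rangle : l \in T(V^{\star})\}$, a space of real-valued functions on $K$. I would proceed in four steps.

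\textbf{Step 1: continuity.} Each element of $\mathcal{L}$ is continuous on $K$. Since $l \in T(V^\star)$ has finite support, $\langle l, S_{\mathbf{A}}(X)_{t_0,t_N}\rangle$ depends only on the truncation $\pi_n(S_{\mathbf{A}}(X)_{t_0,t_N})$ for some $n$. By Corollary~\ref{corollary_ews_trunc_cont}, this truncation is continuous in the $p$-variation topology.

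\textbf{Step 2: subalgebra.} $\mathcal{L}$ is a subalgebra containing the constants.
\begin{itemize}
\item Linearity in $l$ makes it a vector space.
\item The constant $1$ arises from the empty word, since level zero of the EWS is $1$.
\item Closure under products is exactly Lemma~\ref{ews_shuffle_linearisation}: $\langle l_1,S\rangle\langle l_2,S\rangle=\langle l_1\shuffle l_2,S\rangle$, and $l_1\shuffle l_2\in T(V^\star)$.
\end{itemize}

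\textbf{Step 3: pass to the quotient.} Let $\widetilde{K} := K/\!\sim_A$ with the quotient topology, and let $q:K\to\widetilde K$ be the projection.
\begin{itemize}
\item $\widetilde K$ is compact as the continuous image of a compact set.
\item Every element of $\mathcal{L}$ is constant on classes by definition of $\sim_A$, so it descends to a continuous function on $\widetilde K$.
\item $F$ likewise descends to a continuous $\widetilde F$, because continuity on a quotient is tested by composing with $q$.
\item The descended algebra $\widetilde{\mathcal L}$ separates points of $\widetilde K$. If $[X]\ne[Y]$, then $S_{\mathbf{A}}(X)_{t_0,t_N}\neq S_{\mathbf{A}}(Y)_{t_0,t_N}$, so they differ at some level $n$. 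That level $n$ component lies in $V^{\otimes n}$, and some $l\in (V^\star)^{\otimes n}$ distinguishes them.
\end{itemize}

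\textbf{Step 4: conclude.} Apply Stone--Weierstrass on $\widetilde K$ to get $l$ with $\sup_{\widetilde K}|\widetilde F-\langle l,\cdot\rangle|<\epsilon$, then pull back to $K$ via $q$.

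\textbf{Main obstacle: point separation and Hausdorffness.} Stone--Weierstrass needs $\widetilde K$ to be compact Hausdorff. Hausdorffness does follow from point separation by continuous functions: if $f([X])\neq f([Y])$, preimages of disjoint open intervals under $f$ are disjoint open sets. The delicate part is separation when $V$ is infinite-dimensional. There the dual $(V^\star)^{\otimes n}$ must separate points of the completed $V^{\otimes n}$. This holds for admissible norms such as the projective or injective norms, where the algebraic tensor product of the duals is weak-$*$ dense enough to separate points. I would invoke this from Appendix~\ref{appendix_tensor_algebra}, or restrict to finite-dimensional $V$, where it is immediate.

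An alternative that avoids the quotient space altogether is to apply the Stone--Weierstrass version for algebras on $K$ itself. That version says the closure of $\mathcal{L}$ consists of all continuous functions that are constant wherever every element of $\mathcal{L}$ is constant. These level sets are exactly the $\sim_A$ classes, and $F$ is constant on them by hypothesis, so this gives the statement directly.
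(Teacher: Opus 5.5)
Your proposal is correct and follows the paper's proof essentially step for step. The shared steps are continuity from Corollary~\ref{corollary_ews_trunc_cont}, constants from the empty word, closure under products from Lemma~\ref{ews_shuffle_linearisation}, separation of points of $K/\!\sim_A$ by a functional at a level where the two EWS differ, and then Stone--Weierstrass; your check that the quotient is compact Hausdorff supplies a detail the paper leaves implicit.

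On your infinite-dimensional worry, the paper simply takes the Hahn--Banach functional $\phi\in(V^{\otimes n})^{\star}$ itself as the level-$n$ component of $l$. This is consistent with how shuffles of elements of $(V^{\otimes n})^{\star}$ are defined in Appendix~\ref{appendix_signatures}, and it is the cleaner fix. Your alternative claim, that algebraic tensors of dual elements separate points, holds for the injective norm but can fail for the projective norm when $V$ lacks the approximation property.
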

\begin{proof}
    Define the algebra of linear signature functionals:
    \[
    \mathcal{A} := \left\{ X \mapsto \langle l, S_{\mathbf{A}}(X)_{s,t} \rangle : l \in T(V^*) \right\} \subset C(K, \mathbb{R}).
    \]
    We show that $\mathcal{A}$ satisfies the hypotheses of the Stone--Weierstrass theorem on $K/\!\sim_A$. 
    
    First, we show continuity. By Corollary \ref{corollary_ews_trunc_cont}, the truncated EWS $\pi_n \circ S_{\mathbf{A}}: \mathcal{V}^p([s,t], V) \to T^{(n)}(V)$ is continuous in the $p$-variation topology for each $n \geq 0$. For any $l \in T(V^*)$, the pairing $\langle l, \cdot \rangle$ involves only finitely many tensor levels, so the map $X \mapsto \langle l, S_{\mathbf{A}}(X)_{s,t} \rangle$ is continuous. Thus $\mathcal{A} \subset C(K, \mathbb{R})$.
    
    Second, we show that $\mathcal{A}$ contains constant functions. For any signature, the level-zero term is $S_{\mathbf{A}}(X)^{(0)}_{s,t} = 1$. Letting $\varnothing$ denote the empty word, we have $\langle \varnothing, S_{\mathbf{A}}(X)_{s,t} \rangle = 1$ for all $X \in K$. Hence for any $c \in \mathbb{R}$, the functional $l = c \cdot \varnothing$ satisfies $\langle l, S_{\mathbf{A}}(X)_{s,t} \rangle = c$, so $\mathcal{A}$ contains all constant functions.
    
    Third, we show that $\mathcal{A}$ is closed under multiplication. By the shuffle product linearisation, Lemma \ref{ews_shuffle_linearisation}, for any $l_1, l_2 \in T(V^*)$,
    \[
    \langle l_1, S_{\mathbf{A}}(X)_{s,t} \rangle \cdot \langle l_2, S_{\mathbf{A}}(X)_{s,t} \rangle = \langle l_1 \shuffle l_2, S_{\mathbf{A}}(X)_{s,t} \rangle.
    \]
    Since $l_1 \shuffle l_2 \in T(V^*)$, products of functions in $\mathcal{A}$ remain in $\mathcal{A}$. Together with closure under addition and scalar multiplication, $\mathcal{A}$ is a subalgebra of $C(K, \mathbb{R})$.
    
    Fourth, we show that $\mathcal{A}$ separates points of $K/\!\sim_A$. Let $X, Y \in K$ with, $S_{\mathbf{A}}(X)_{s,t} \neq S_{\mathbf{A}}(Y)_{s,t}$. Then there exists $n \geq 1$ such that $S_{\mathbf{A}}(X)^{(n)}_{s,t} \neq S_{\mathbf{A}}(Y)^{(n)}_{s,t}$ as elements of $V^{\otimes n}$. Since $V^{\otimes n}$ is a Banach space, the Hahn--Banach theorem (see e.g. \textcite{Rudin1991}) guarantees the existence of a continuous linear functional $\phi \in (V^{\otimes n})^*$ such that
    \[
    \phi\left( S_{\mathbf{A}}(X)^{(n)}_{s,t} \right) \neq \phi\left( S_{\mathbf{A}}(Y)^{(n)}_{s,t} \right).
    \]
    Embedding $\phi$ into $T(V^*)$ by placing it at level $n$ and zero elsewhere yields $l_\phi \in T(V^*)$ with $\langle l_\phi, S_{\mathbf{A}}(X)_{s,t} \rangle \neq \langle l_\phi, S_{\mathbf{A}}(Y)_{s,t} \rangle$. Thus, $\mathcal{A}$ separates points.
    
    By the Stone--Weierstrass theorem, $\mathcal{A}$ is dense in $C(K/\!\sim_A, \mathbb{R})$ with respect to the uniform topology. In particular, for any continuous $F: K \to \mathbb{R}$ and any $\epsilon > 0$, there exists $l \in T(V^*)$ such that $\sup_{X \in K} |F(X) - \langle l, S_{\mathbf{A}}(X)_{s,t} \rangle| < \epsilon$.
\end{proof}

\begin{corollary}
    Let $A \in L(V,V)$ satisfy the condition in Lemma \ref{lemma_uniqueness_condition} and let $K \subset \mathcal{V}^p([t_0,t_N], V)$ be a compact set of clock-augmented paths. For any continuous $F: K \to \R$ and any $\epsilon > 0$, there exists $l \in T(V^{\star})$ such that
    \[
    \sup_{X \in K}|F(X) - \langle l, S_{\mathbf{A}}(X)_{t_0,t_N}\rangle | < \epsilon.
    \]
\end{corollary}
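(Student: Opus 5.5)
The plan is to combine Lemma \ref{universal_approximation} with Corollary \ref{corollory_uniqueness_A}. Since $A$ satisfies the condition of Lemma \ref{lemma_uniqueness_condition}, Corollary \ref{corollory_uniqueness_A} tells us that $X \sim_A Y$ over $[t_0,t_N]$ forces $X = Y + c$ for some constant $c \in V$. So $A$-equivalence on $K$ collapses to equality up to translation. The universal approximation lemma then applies to every continuous $F$ that is invariant under such translations, which gives the claim.

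\textbf{Step 1 (the equivalence relation).} I will verify that the hypothesis $X \sim_A Y \Rightarrow F(X)=F(Y)$ of Lemma \ref{universal_approximation} reduces to translation invariance. Because the EWS depends only on increments of $X$ (the clock increments $\theta_t-\theta_u$ and $dX$ are both translation invariant), translates are $A$-equivalent. Conversely, Corollary \ref{corollory_uniqueness_A} shows that $A$-equivalent clock-augmented paths are translates. Hence $\sim_A$ on $K$ is exactly ``equal up to an additive constant''.

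\textbf{Step 2 (the translation issue).} This is the main obstacle: as literally stated, the corollary asks for arbitrary continuous $F$, while no functional of the EWS can separate $X$ from $X+c$. I will handle this in the same way as the classical convention already stated in the paper (``injective on the space of paths (up to a translation constant)''). I will either read $K$ as a compact set of paths with a fixed starting point, e.g.\ $X_{t_0}=0$ (the usual normalisation), or read $F$ as translation invariant. On such a $K$, Step 1 shows $\sim_A$ is the identity relation, so every continuous $F:K\to\R$ trivially satisfies the invariance hypothesis. I will state this normalisation explicitly at the start of the proof.

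\textbf{Step 3 (conclude).} I then invoke Lemma \ref{universal_approximation} with this $F$ and $\epsilon$ to obtain $l \in T(V^\star)$ with the required uniform bound. Nothing further is needed: continuity of the truncated EWS (Corollary \ref{corollary_ews_trunc_cont}), the shuffle property (Lemma \ref{ews_shuffle_linearisation}) and point separation are all already inside the lemma. The only point to check is that the quotient $K/\!\sim_A$ is just $K$ itself, so compactness and Hausdorffness are immediate.
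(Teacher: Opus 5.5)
Your proposal is correct and is essentially the paper's own proof: Corollary \ref{corollory_uniqueness_A} collapses $\sim_A$ to translation, the translation ambiguity is removed by fixing the starting point (or by quotienting by translations), and then every continuous $F$ on $K$ satisfies the hypothesis of Lemma \ref{universal_approximation}. One caveat applies equally to the paper's argument: when the clock $\theta=\ell(X)$ may differ between paths, the EWS is also reparametrisation invariant ($S_{\mathbf{A}}(X\circ\phi)_{t_0,t_N}=S_{\mathbf{A}}(X)_{t_0,t_N}$ for any increasing bijection $\phi$ of $[t_0,t_N]$), so your Step 1 conclusion that $\sim_A$ is ``exactly translation'' needs the clock channel to be common to all paths in $K$ (e.g.\ $\theta_t=t$), and you should add this to your normalisation.
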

\begin{proof}
    Let $X,Y \in K$ with $S_{\mathbf{A}}(X)_{t_0,t_N} = S(Y)_{t_0,t_N}$. By Corollary \ref{corollory_uniqueness_A}, $X=Y+c$ for some constant $c \in V$. For paths with a fixed starting point, or after quotienting by translations, the EWS is injective on $K$. Hence, every continuous function on $K$ satisfies $X \sim_A Y \Rightarrow F(X) = F(Y)$, and the result follows from Lemma \ref{universal_approximation}.
\end{proof}

The EWS thus provides a complete feature representation for any function that respects $A$-equivalence, and linear functionals suffice to approximate such functions arbitrarily well. Uniqueness results, such as the block diagonal case established above, identify when this approximation property extends to all continuous functions on path space.

The EWS framework acts as a natural generalisation of the classical path signature. When the operator $A$ is set to zero, the EWS reduces exactly to the classical signature. Consequently, the general EWS framework inherits the theoretical universality of the signature: since any continuous real-valued function on a compact set of paths can be approximated by linear functionals of the signature, the same must hold for the EWS framework when $A$ is treated as a learnable parameter. In a deep learning context, this is significant because NCDEs derive their power from the fact that their vector fields can represent the signature. By adopting this generalised structure, the EWS provides an interpretable inductive bias while maintaining the same fundamental guarantee of universal approximation.

 \section{Numerical Computation}\label{section_ews_numerical_computation}

In practice, the EWS of the path $X: [t_0,t_N] \to \R^d$ is computed at finite truncation depth, $S_{\mathbf{A}}(X)^{\leq n} \in T^n(\R^d)$. A direct approach direct approach is to use the finite-dimensional linear CDE formulation from Section \ref{sec_lncde}, identifying $T^n(\R^d) \cong \R^D$ for $D = \sum_{k=0}^n d^k$, and to solve resulting system using standard linear CDE solvers. Typically, one computes the flow over each increment via a matrix exponential, and then composes them via a parallel associative scan (see \textcite{cirone2024deepSSM}); this is precisely the depth-one log-ODE method of \textcite{walker2025structuredlinearcdesmaximally}. While this method is conceptually straightforward, it becomes impractical as the truncation depth increases. Indeed, the dimension $D $ grows exponentially in $n$, and the computation requires exponentiating $D \times D$ matrices. This approach treats the EWS as a generic linear CDE in $\R^D$ and applies standard solution techniques, similar to the approach taken by the Volterra signature \parencite[Proposition 2.42]{hager2026volterrasignature}, which reduces to a system of ODEs in the truncated tensor algebra. However, the EWS is group-like, enabling a reduction to the classical signature of a re-weighted path that can be leveraged for more efficient computation. We therefore adopt a method that exploits this structure.

Given a piecewise linear path $X$ with knots at $\{t_0,\dots,t_N\}$, our goal is to compute the EWS over the entire path, $S_{\mathbf{A}}(X)_{t_0,t_N}$. We do so by evaluating the EWS on each sub-interval $[t_i,t_{i+1}]$ (in parallel) and then aggregating these local results via the modified Chen’s identity from Lemma \ref{lemma_ews_chen_p1} (in a parallel associative scan). The computational challenge is thus shifted to the efficient evaluation of the EWS on a single linear segment $[t_i, t_{i+1}]$. By Proposition \ref{prop_ews_sig_reweighted}, this is equivalent to computing the classical signature of a re-weighted path $Z^{[t_{i+1}]}$ defined as 
\begin{equation*}
    Z_r^{[t_{i+1}]} = \int^r_{t_i} e^{-(\theta_{t_{i+1}} - \theta_u)A}dX_u, \qquad r \in (t_i,t_{i+1}].
\end{equation*}
Over the interval $[t_i,t_{i+1}]$, the path and intrinsic clock evolve with constant velocities $v_i = \frac{\Delta X_i}{\Delta t_i}$  and $\kappa_i = \frac{\Delta \theta_i}{\Delta t_i}$, where $\Delta X_i = X_{t_{i+1}} - X_{t_i}$, $\Delta \theta_i = \theta_{t_{i+1}} - \theta_{t_i}$ and $\Delta t_i = t_{i+1} - t_i$. Applying the change of variables $s = u-t_i$, we get
\begin{equation}
    Z_r^{[t_{i+1}]} = e^{-\Delta \theta_i A} \int_0^{r-t_i} e^{ \kappa_i s A} v_i ds.
\end{equation}
The integral $\mathcal{I}(\tau) = \int^{\tau}_0 e^{\kappa_i s A} v_i ds$ can be can be evaluated exactly using the Van Loan identity \parencite{vanloan1978computing}. We define an augmented block matrix $\mathcal{M}_i \in \R^{(d+1) \times (d+1)}$ given by
\begin{equation} \label{eq_van_loan_velocity}
\mathcal{M}_i = \begin{pmatrix} \kappa_i A & v_i \\ 0 & 0 \end{pmatrix},
\end{equation}
where the solution to the integral is contained in the upper-right block of the matrix exponential. Specifically,
\begin{equation}
\exp(\tau \mathcal{M}_i) = \begin{pmatrix} e^{\tau \kappa_i A} & \int_0^{\tau} e^{(\tau - s) \kappa_i A} v_i ds \\ 0 & 1 \end{pmatrix}.
\end{equation}
By the change of variables $u = \tau - s$, it is easily verified that the top-right block is identical to $\mathcal{I}(\tau)$. However, Equation \eqref{eq_van_loan_velocity} is numerically ill-conditioned as $\Delta t_i \to 0$, since both $\kappa_i$ and $v_i$ involve a division by $\Delta t_i$.  To resolve the numerical instability as $\Delta t_i \to 0$, we instead exponentiate $\Psi_i = \delta \mathcal{M}_i$ and as such, the factors of $\Delta t_i$ cancel exactly:
\begin{equation} \label{eq_displacement_form}
\Psi_i = \frac{\Delta t_i}{M} \begin{pmatrix} \frac{\Delta \theta_i}{\Delta t_i} A & \frac{\Delta X_i}{\Delta t_i} \\ 0 & 0 \end{pmatrix} = \begin{pmatrix} \frac{\Delta \theta_i}{M} A & \frac{\Delta X_i}{M} \\ 0 & 0 \end{pmatrix}.
\end{equation}
While the original path is linear on $[t_i,t_{i+1}]$, the re-weighted $Z^{[t_{i+1}]}$ is not. Thus, in practice, in order to compute its signature accurately, we must evaluate $Z^{[t_{i+1}]}$ at $M$ sub-discretised points $r_j = t_i + j\delta$ for $j=1, \dots, M$. Letting $E_i = \exp(\Psi_i)$, the state of the augmented system after $j$ sub-steps is given by the $j$-th power of this operator:
\begin{equation}
E_i^j = \exp(j \Psi_i) = \exp(j \delta \mathcal{M}_i) = \begin{pmatrix} e^{j \delta \kappa_i A} & \int_0^{j \delta} e^{(j\delta - s) \kappa_i A} v_i ds \\ 0 & 1 \end{pmatrix}.
\end{equation}
The integral $\mathcal{I}(j\delta)$ required for the re-weighted path is precisely the top-right block of $E_i^j$. Consequently, the knots of the discretised re-weighted path are obtained by:
\begin{equation}
Z_{r_j}^{[t_{i+1}]} = e^{-\Delta \theta_i A} \left[ E_i^j \right]_{1:d, d+1}, \qquad j=1, \dots, M,
\end{equation}
where $[\cdot]_{1:d, d+1}$ denotes the upper-right $d \times 1$ block. To compute the full sequence of powers $\{E_i^1, \dots, E_i^M\}$ efficiently, we employ a parallel associative scan.

\section{Numerical Experiments}

We now present preliminary numerical experiments designed to validate the theoretical advantages of the EWS over both the classical signature and the EFM-signature. The experiments are structured around two objectives. First, we establish that the EWS framework is strictly more expressive than EFM by demonstrating that no choice of diagonal decay rates can approximate temporal weighting structures with complex eigenvalues or growth modes. Second, we examine whether the EWS provides improved regression performance when the underlying dynamics are oscillatory or exhibit coupling between channels. 

\subsection{Expressivity Gap between EWS \& EFM}

To establish that the EWS, EFM, and classical signature represent strictly different model classes at a fixed truncation depth, we design a controlled experiment in which each learning target is itself a depth-$2$ signature transform with known parameters. By fixing the depth across all learners and targets, any performance gap is attributable to representational capacity rather than to differences in the number of features or model parameters.

We consider time-augmented $2$D Brownian motion $X_t = (t, W^1_t, W^2_t)$. For each of the three target classes, we fix a generating operator $A_{\star}$ (we let $B$ be the identity in all cases), and define the scalar regression target as the component of the corresponding depth-$2$ truncated signature transform associated with the pair $(W^1,W^2)$. That is, the target is the cross channel iterated integral
\begin{equation}
    S_{A_{\star}}(X)^{2,3}_{0,t} = \int^t_0 \int^s_0 e^{-(t-u)A_{\star}}dW^2_u \otimes e^{-(t-s)A_{\star}}dW^1_s,
\end{equation}
which depends jointly on both Brownian channels. The learning task is same-time regression: at each time $t$, given the path $X_{[0,t]}$, the objective is to output the value of the target functional $S_{A_{\star}}(X)^{2,3}_{0,t}$. Concretely, each model computes a truncated signature transform of the observed path, parameterised by a learnable operator $A$ and produces a scalar output via a linear readout. Both the operator $A$ (where applicable) and the readout are trained jointly from data. The three target operators are:
\begin{itemize}
    \item \textbf{EWS Target:} a full $3 \times 3$ matrix constructed from eigenvalues $\{-0.5 \pm 5.2i, 0.8\}$. The complex conjugate pair introduces oscillatory memory dynamics, and the positive real eigenvalue introduces a growth mode; both are structurally inaccessible to any diagonal operator, so this target lies strictly outside the hypothesis class of the EFM.

    \item \textbf{EFM Target:} the diagonal matrix $A_{\star} = \mathrm{diag}(0.5,0.3,0.8)$, representing the fading memory special case of the EWS.

    \item \textbf{Signature Target:} $A_{\star} = 0$ corresponding to the classical signature.
\end{itemize}
Against each target class we train the parameters of three learner classes via gradient descent using AdamW \parencite{loshchilov2019decoupled} with a linear warmup followed by cosine decay \parencite{loshchilov2017sgdr} at the same truncation depth: the EWS learner (an unconstrained $3 \times 3$ matrix initialised randomly), the EFM learner (a $3 \times 3$ diagonal matrix constrained to positive values initialised randomly) and the classical signature (no learning as $A=0$ is fixed). Each combination receives an independent Optuna hyperparameter search \parencite{akiba2019optuna} (up to $50$ trials, $15,000$ steps each), after which the  best configuration is retrained across $10$ independent seeds for $30,000$ steps. We use $750$ Brownian trajectories over $[0,5]$ with $10,000$ discretisation steps, a $70:15:15$ train/validation/test split, and report the RMSE (mean $\pm$ std) using the best validation checkpoint.

\begin{table}[h]
\centering\small
\begin{tabular}{l|ccc}
\diagbox{Target}{Learner} & EWS & EFM & Signature \\
\midrule
EWS          & $\mathbf{4.96 \times 10^{-4} \pm 1.53 \times 10^{-4}}$ & $2.43 \times 10^{-2} \pm 2.57 \times 10^{-3}$ & $2.42 \times 10^{-2} \pm 2.52 \times 10^{-3}$ \\
EFM          & $6.52 \times 10^{-5} \pm 2.54 \times 10^{-5}$ & $\mathbf{4.84 \times 10^{-5} \pm 1.60 \times 10^{-5}}$ & $4.25 \times 10^{-2} \pm 5.45 \times 10^{-3}$ \\
Signature & $4.55 \times 10^{-5} \pm 2.08 \times 10^{-5}$ & $5.11 \times 10^{-5} \pm 2.97 \times 10^{-5}$ & $\mathbf{3.39 \times 10^{-5} \pm 1.10 \times 10^{-5}}$ \\
\end{tabular}
\caption{Test RMSE (mean $\pm$ std across 10 seeds) for each (target, learner) pair. All learners use depth-2 truncation; raw Brownian paths; targets normalised to unit variance over the training set.}
\label{tab_target_learner}
\end{table}

The results are consistent with the expected representational distinctions between the three model classes at depth $2$. For the EWS target, the EWS learner achieves substantially lower error ($4.96\times 10^{-4}$) than both the EFM ($2.43\times 10^{-2}$) and the classical signature ($2.42\times10^{-2}$), which perform nearly identically to each other. Since the target was constructed to have oscillatory and growth modes that no diagonal operator can represent at any depth, the failure of the EFM and classical signature learners is not a consequence of optimisation difficulty but of a fundamental structural limitation: no choice of diagonal decay rates can generate the required temporal weighting. Both constrained learners converge stably to the best hypothesis in their class, which is simply insufficient to approximate the target. We note that the EWS learner's error on this target is somewhat larger than on the EFM and classical signature targets; this reflects the harder optimisation landscape associated with recovering a matrix with complex eigenvalues from gradient-based search, rather than any deficiency in expressivity. For the EFM target, both the EWS and EFM learners achieve comparable low error, while the classical signature learner fails. The slight advantage of the EFM learner is consistent with its having the correct inductive bias: restricting $A$ to be diagonal eliminates unnecessary degrees of freedom and simplifies the optimisation landscape. Finally, with regards to the signature target, all three learners succeed with errors of similar order of magnitude. The EWS and EFM learners both recover $A \approx 0$, consistent with the targets.

\subsection{Coupled Oscillatory SDE Regression}

We consider a two-dimensional stochastic system governed by the coupled SDE with oscillatory dynamics:
\begin{align}
dX^1_t &= (\alpha \sin(\omega X^2_t) - \beta X^1_t) dt + \sigma \, dW^1_t, \\
dX^2_t &= (\alpha \cos(\omega X^1_t) - \beta X^2_t) dt + \sigma \, dW^2_t,
\end{align}
where $W^1$, $W^2$ are independent Brownian motions. Throughout, we fix $\alpha = 3.0$, $\omega = 1.0$, $\beta =0.5$ and $\sigma = 0.4$, with initial condition $X_0 = (0.5,0.5)$. The system exhibits non-linear coupling through the trigonometric interaction terms, together with mean-reverting drift and additive noise. The learning task is again same-time regression: for each time $t$, given the driving path $(t,W^1,W^2)_{[0,t]}$, the objective is to output the current state $X^1_t$.

We train the parameters of the three model classes via gradient descent using AdamW \parencite{loshchilov2019decoupled} with a linear warmup followed by cosine decay \parencite{loshchilov2017sgdr} at fixed truncation depth: the EWS model (full $3 \times 3$ matrix $A$), the EFM model (diagonal $A$), and the classical signature (fixed $A=0$). Each model is trained on $750$ simulated trajectories over $[0,4]$ generated via the Euler–Maruyama scheme with $10,000$ discretisation steps, using a $70:15:15$ train/validation/test split. For each model class, hyperparameters are tuned independently using Optuna \parencite{akiba2019optuna} (up to $100$ trials, $15,000$ training steps per trial), after which the best configuration is retrained across $10$ independent seeds. Inputs are given by the driving path $(t,W^1,W^2)$, with all channels normalised to $[0,1]$ using training-set statistics and a base-point of $(0,0,0)$ prepended; targets $X^1_t$ are normalised independently. We report RMSE (mean $\pm$ std) in normalised space using the best validation checkpoint.

\begin{table}[h]
\centering
\begin{tabular}{l|cc}
Method & Val RMSE & Test RMSE \\
\midrule
EWS & $\mathbf{2.63 \times 10^{-2} \pm 2.10 \times 10^{-3}}$ & $\mathbf{2.61 \times 10^{-2} \pm 2.60 \times 10^{-3}}$ \\
EFM & $9.45 \times 10^{-2} \pm 5.30 \times 10^{-3}$ & $9.39 \times 10^{-2} \pm 4.80 \times 10^{-3}$ \\
Signature & $1.275 \times 10^{-1} \pm 4.90 \times 10^{-3}$ & $1.268 \times 10^{-1} \pm 5.10 \times 10^{-3}$ \\
\end{tabular}
\caption{Validation and test RMSE (mean $\pm$ std across 10 seeds) for each model class on the SDE task. All models use depth-2 truncation; inputs are time-augmented Brownian paths; targets are normalised to $[0,1]$.}
\label{tab_sde_results}
\end{table}

Table \ref{tab_sde_results} reports the regression performance of each model class. The EWS achieves a mean test RMSE of $2.63 \times 10^{-2}$, substantially outperforming both the EFM ($9.45 \times 10^{-2}$) and the classical signature $1.28 \times 10^{-1}$, with well-separated uncertainty intervals. The same ordering holds on the validation set, indicating stable generalisation across all methods. This magnitude in the performance gap reflects the nature of the underlying dynamics. The system exhibits strongly coupled and oscillatory behaviour through the trigonometric interaction terms, which cannot be captured by representations restricted to channel-wise decaying memory or fixed temporal weighting. As a result, both the EFM and the classical signature incur a substantial approximation error in this setting.

To further understand this behaviour, we examine the learned operators $A$; summary statistics of the learned spectra are reported in Table \ref{tab_eigs}. Across all runs, the EWS learns matrices whose spectra contain a dominant complex conjugate pair together with a single real mode. Aggregating across runs, the complex pair has mean real part $1.76 \pm 0.24$ and imaginary magnitude $9.53 \pm 1.42$, while the remaining real eigenvalue is small, $0.31 \pm 0.41$. This indicates that the model consistently captures oscillatory temporal dynamics, with frequency determined by the imaginary component and growth/decay by the real component. In addition, the learned matrices exhibit substantial off-diagonal structure, confirming that cross-channel interactions between the driving signals play a key role. In contrast, the EFM is restricted to diagonal $A$ and therefore admits only real eigenvalues corresponding to purely decaying modes. Across all runs, the learned spectra consist of positive real values of varying magnitude together with a small or near-zero mode. Aggregating across runs, the eigenvalues are approximately $\lambda_1 = 8.45 \pm 0.43$, $\lambda_2 = 2.82 \pm 0.21$, and $\lambda_3 = 0.15 \pm 0.31$. This reflects that the model assigns different decay rates to each channel, but cannot represent oscillatory behaviour or cross-channel coupling. Consequently, while the EFM can capture some local temporal structure, it fails to model the coupled oscillatory dynamics of the system, consistent with its substantially higher error. Figure \ref{fig:t} shows representative test trajectories, illustrating that the EWS tracks the underlying dynamics closely, while the EFM and classical signature exhibit systematic deviations, consistent with the quantitative results.

\begin{table}[H]
\centering
\begin{tabular}{l|cc}
 & EWS & EFM \\
\midrule
Real part (complex pair) & $1.76 \pm 0.24$ & -- \\
Imaginary magnitude & $9.53 \pm 1.42$ & -- \\
Real eigenvalues & $0.31 \pm 0.41$ & $8.45 \pm 0.43,\; 2.82 \pm 0.21,\; 0.15 \pm 0.31$ \\
\end{tabular}
\caption{Summary statistics of learned eigenvalues across runs for the EWS and EFM models.}
\label{tab_eigs}
\end{table}
\begin{figure}[H]
\centering
\includegraphics[width=0.6\textwidth]{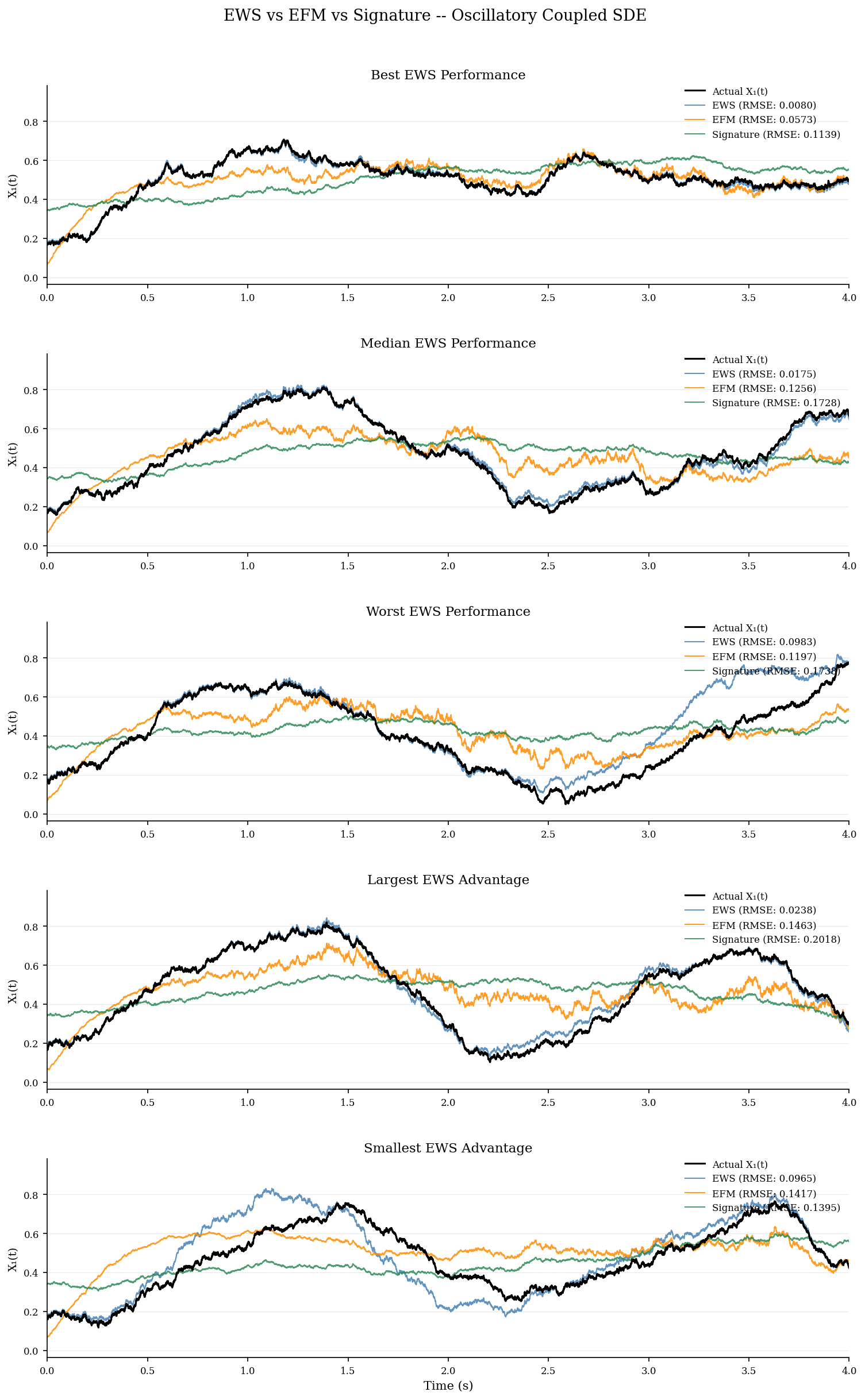}
\caption{Five representative test trajectories for the coupled oscillatory SDE task. Ground truth (black) is shown alongside predictions from the EWS (blue), EFM (orange), and classical signature (green).}
\label{fig:t}
\end{figure}
\section*{Acknowledgements}

The authors would like to thank Sam Morley and J\'er\^{o}me Tomezyk for engaging and
insightful discussions regarding efficient numerical methods.

The authors acknowledge support from His Majesty's Government in the development of this research. Samuel N. Cohen\ acknowledges the support of the UKRI Prosperity Partnership Scheme (FAIR) under EPSRC Grant EP/V056883/1, and EPSRC Grant EP/Y028872/1 (Mathematical Foundations of Intelligence: An Erlangen Programme for AI). Terry Lyons is supported by UK Research and Innovation (UKRI) through the Engineering and Physical Sciences Research Council (EPSRC) via Programme Grants [Grant No.\ UKRI1010: High order mathematical and computational infrastructure for streamed data that enhance contemporary generative and large language models], [Grant No.\ EP/S026347/1: Unparameterised multi-model data, high order signatures and the mathematics of data science], [Grant No.\ EP/Y028872/1: Mathematical Foundations of Intelligence: An Erlangen Programme for AI], and the UKRI AI for Science award [Grant No.\ UKRI2385: Creating Foundational Benchmarks for AI in Physical and Biological Complexity]. Terry Lyons is also supported by The Alan Turing Institute under the Defence and Security Programme (funded by the UK Government) and through the provision of research facilities; by the UK Government; and through CIMDA@Oxford, part of the AIR@InnoHK initiative funded by the Innovation and Technology Commission, HKSAR Government. Benjamin Walker is supported by UK Research and Innovation (UKRI) through the Engineering and Physical Sciences Research Council (EPSRC) via Programme Grant [Grant No.\ UKRI1010: High order mathematical and computational infrastructure for streamed data that enhance contemporary generative and large language models] and CIMDA@Oxford, part of the AIR@InnoHK initiative funded by the Innovation and Technology Commission, HKSAR Government.

\newpage
\printbibliography

\appendix

\section{Tensor Algebra}\label{appendix_tensor_algebra}

We first define the tensor algebra, the space in which the signature is defined. This requires recalling the definition of the tensor product of two vector spaces and establishing what norms are considered `admissible" on these spaces.

\begin{definition}[Tensor Product \parencite{lang2002algebra}]
    Let $U$ and $V$ be vector spaces over a field $\mathbb{F}$. A tensor product of $U$ and $V$ is defined as a vector space $U \otimes V$ equipped with a bilinear map $\tau:U \times V \rightarrow U \otimes V$. This space is uniquely characterised (up to isomorphism) by the following universal property: for any $\mathbb{F}$-vector space $W$ and any bilinear map $\kappa: U \times V \mapsto W$, there exists a unique linear map $\iota: U \otimes V \to W $ such that $\kappa = \iota \circ \tau$.
\end{definition}

The tensor product allows us to treat bilinear operators as linear ones. We denote the image of the pair $(u,v)$ under $\tau$ as $u \otimes v$.

\begin{exmp}
    Consider $U = \R^3$ and $V = \R^2$ with their respective standard bases $\{e_1,e_2,e_3\}$ and $\{f_1,f_2\}$. The tensor product $U \otimes V$ is a $6$-dimensional space spanned by the basis set:
    \[
    \mathcal{B}_{U \otimes V} = \{e_i \otimes f_j| 1 \leq i \leq 3, 1 \leq j \leq 2\}.
    \]
    To see how the universal property functions, let $u = [u_1, u_2, u_3]^\top$ and $v = [v_1, v_2]^\top$. Their tensor product is the formal sum of all possible component products:
    \[
    u \otimes v = \sum_{i=1}^{3} \sum_{j=1}^{2} u_i v_j (e_i \otimes f_j)
    \]
    This construction ensures that every degree-2 interaction between the components of $U$ and $V$ is represented as a single coordinate in $U \otimes V$. Consequently, any bilinear function $\kappa(u, v)$—which by definition must be a linear combination of these $u_i v_j$ terms—can be evaluated by a unique linear operator $\tau$ acting on the tensor $u \otimes v$. While this space is isomorphic to the space of $3 \times 2$ matrices, its fundamental purpose is to serve as the domain where bilinearity becomes linearity.
\end{exmp}

Throughout this work, we let $V$ be a real Banach space and denote by $V^{\otimes n}$ the completion of the $n$-fold tensor product of $V$ with respect to a norm $|| \cdot ||_{V^{\otimes n}}$, which we assume satisfies the properties in the following definition for all $n \geq 1$.

\begin{definition}[Admissible Tensor Norms \parencite{lyons2025signaturemethodsmachinelearning}]
    Given a Banach space $V$, a family of norms on $\{V^{\otimes n}\}_{n=1}^{\infty}$ is said to be admissible if for all integers $n \geq 1$ we have chosen a norm on $V^{\otimes n}$ such that the following conditions are satisfied:
    \begin{enumerate}
         \item For all $n \geq 1$, the norm $||\cdot||_{V^{\otimes n}}$ is invariant under the action of the symmetric group $S_n$ on $V^{\otimes n}$. That is, 
\begin{equation*}
             ||\rho v||_{V^{\otimes n}} = ||v||_{V^{\otimes n}}, \quad \forall v \in V^{\otimes n}, \forall \rho \in S_n,
         \end{equation*}
         where $\rho(v_1 \otimes \cdots \otimes v_n) = v_{\rho(1)} \otimes \cdots \otimes v_{\rho(n)}$ for $v_i \in V$.
         \item For all $n,m \geq 1$, $||v \otimes w||_{V^{\otimes (n+m)}} \leq ||v||_{V^{\otimes n}} ||w||_{V^{\otimes m}}$, $\forall v \in V^{\otimes n}$, $w \in V^{\otimes m}$. That is, the norm is sub-multiplicative.
         \item For all $n,m \geq 1$ and for any dual elements $\phi \in (V^{\otimes n})^{\star}$ and $\sigma \in (V^{\otimes m})^{\star}$, we have $||\phi \otimes \sigma||_{(V^{\otimes(n+m)})^{\star}} \leq ||\phi||_{(V^{\otimes n})^{\star}} || \sigma||_{(V^{\otimes m})^{\star}}$.
     \end{enumerate}
\end{definition}

We are now able to define the tensor algebra, the space in which path signatures live.

\begin{definition}[Tensor Algebra \parencite{lyons2007differential}]
    Let $\{V^{\otimes n}\}_{n=0}^{\infty}$ be equipped with admissible norms in the sense of the above definition, and let $V^{\otimes 0} = \mathbb{R}$ by convention. The tensor algebra is the space 
    \[
    T((V)) = \{\mathbf{a} = (a_0,a_1,\dots)| \forall n \geq 0, a_n \in V^{\otimes n}\},
    \]
    For two elements of $T((V))$, $\mathbf{a} = (a_0,a_1,\dots)$ and $\mathbf{b} = (b_0,b_1,\dots)$, addition is defined as 
    \[
    \mathbf{a} + \mathbf{b} = (a_0 + b_0, a_1 + b_1, \dots),
    \]
    and product defined as
    \[
    \mathbf{a} \otimes \mathbf{b} = (c_0, c_1, \dots),
    \]
    where for each $n \geq 0$
    \[
    c_n = \sum_{k=0}^n a_k \otimes b_{n-k}.
    \]
\end{definition}

Given the natural action of $\R$ by $\lambda \mathbf{a} = (\lambda a_0, \lambda a_1, \dots)$, the space $T((V))$ is a real non-commutative algebra with unit $\mathbbm{1} = (1,0,0,\dots)$. We denote by $\tilde{T}((V))$ the space of elements $\mathbf{a}$ with $a_0 =1$; this space is a group with $\mathbf{a}^{-1} = 1 - (\mathbf{a}-1) + (\mathbf{a}-1)^{\otimes 2} - \cdots$. Finally, $T^n(V)$ is the truncated tensor algebra whose elements are of the form $\mathbf{a} = (a_0, \dots, a_n)$.

\section{Young Integration}\label{appendix_young_integration}

With the tensor algebra established, we can now define the suitable framework for integration to be used in the definition of the signature.

\begin{definition}[$p$-Variation \parencite{Young1936AnIO, lyons2007differential}]
    Let $V$ be a real Banach space and $X:[a,b] \to V$ be a path. For $p \geq 1$, the $p$-variation of $X$ is defined as 
    \[
    ||X||_{p,[a,b]} =  \Bigl( \sup_{\mathcal{P}}\sum_{i=0}^{n-1}||X_{t_{i+1}}- X_{t_{i}}||_V^p\Bigr)^{\frac{1}{p}},
    \]
    where $\mathcal{P}$ is the set of all finite partitions $\{t_i\}_{i=0}^n$ such that $a = t_0 < \cdots <t_n = b$.
\end{definition}

We denote by $\mathcal{V}^p([a,b], V)$ the set of all paths $X:[a,b] \to V$ with finite $p$-variation. Note that if $X$ is of finite $p$-variation, then it is of finite $q$-variation for any $q >p$. We refer to paths of finite $1$-variation simply as paths of bounded variation.

\begin{definition}[Young Integral \parencite{Young1936AnIO, lyons2007differential}]
    Let $p,q \in (0,1]$ such that $\frac{1}{p} + \frac{1}{q} > 1$. Let $X \in \mathcal{V}^p([a,b],V)$ and $Y \in \mathcal{V}^p([a,b], L(V,W))$, where $L(V,W)$ denotes the space of all bounded linear maps from $V$ to $W$. Consider the sequence of finite partitions of $[a,b]$, denoted $\{\pi_n\}_{n=0}^{\infty}$, where $\pi_n = (t^n_0, \dotsm t^n_{N_n})$ with $\sup_i |t^n_{i} - t^n_{i-1}| \to 0$ as $n \to \infty$ and each $u^n_i \in [t^n_i, t^n_{i+1}]$ an arbitrary point. Then the Young integral of $Y$ against $X$, defined as
    \[
    \int^b_a Y_s dX_s = \lim_{n \to \infty} \sum^{N_n -1}_{i=0} Y_{u^n_i}(X_{t^n_{i+1}} - X_{t^n_i}),
    \]
    exists independently of the sequence of partitions and arbitrary point $u^n_i \in [t^n_i, t^n_{i+1}]$. 
\end{definition}
\begin{proof}
    See Theorem $1.16$ in \cite{lyons2007differential}. 
\end{proof}

\section{Signatures}\label{appendix_signatures}

We defined the signature in Section \ref{Chapater_Introduction} along with a couple of important properties. Note that we restricted our paths to having finite $p$-variation for $p<2$ in order for the integrals to be well-defined as Young integrals. For paths of lower regularity (finite $p$-variation for $p>2$), rough path theory provides a generalisation to the signature; see \textcite{lyons2007differential} for an introduction. We now briefly outline a few more significant properties of the signature. We omit proofs as as many of these results are established in the more general setting of the EWS.

\begin{theorem}[Chen Identity \parencite{Chen1954Iterated}]
    Let $X \in \mathcal{V}^p([s,t],V)$ with $p<2$. Then for $u \in [s,t]$
    \[  
    S(X)_{s,t} = S(X)_{s,u} \otimes S(X)_{u,t}.
    \]
\end{theorem}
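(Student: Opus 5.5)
The identity follows from the special case $A=0$ of the EWS, where the flow operator is trivial. With $A=0$ we have $e^{-hA}=\mathrm{Id}$ for every $h$, so $D_A^h=\mathrm{id}$ on $T((V))$. Comparing Definition~\ref{def_ews_iterated_integrals} with the definition of the signature gives $S_{\mathbf A}(X)_{s,t}=S(X)_{s,t}$ for all $[s,t]$. The modified Chen identity for the EWS then reads
\[
S(X)_{s,t}=\bigl(D_0^{\theta_t-\theta_u}S(X)_{s,u}\bigr)\otimes S(X)_{u,t}=S(X)_{s,u}\otimes S(X)_{u,t},
\]
which is exactly the claim.

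One subtlety must be handled first. The EWS results were stated for a path carrying a clock $\theta=\ell(X)$, but for $A=0$ the clock plays no role in the definition. If $X$ has no natural clock, I would augment it to $\tilde X_t=(t,X_t)\in\R\times V$, with $\ell$ the projection onto the first coordinate. I would take $A=0$ on $\R\times V$ and $B$ the inclusion $V\hookrightarrow \R\times V$, so that $\widehat X = (0,X)$. The EWS of $\tilde X$ then equals the signature of $X$, embedded in $T((\R\times V))$ via $V\subset\R\times V$. Since this embedding is an injective algebra homomorphism, the identity pulls back to $T((V))$. Alternatively, I can simply note that every step of the earlier proofs only uses $\theta$ through $D_A^{\theta_t-\theta_u}$, which is the identity here.

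For a self-contained argument in the order of the paper, I would proceed in two steps.

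\textbf{Finite variation.} For $X$ of finite variation, I would repeat the proof of Lemma~\ref{lemma_ews_chen_p1} with $e^{-hA}$ replaced by the identity. Fix a level $n$. Split the simplex $\{s\le t_1\le\cdots\le t_n\le t\}$ according to the index $k$ such that $t_k\le u< t_{k+1}$. Fubini then factorises the piece with index $k$ as $S(X)^{(k)}_{s,u}\otimes S(X)^{(n-k)}_{u,t}$.

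\textbf{Extension to $p<2$.} I would extend to $1\le p<2$ by the density argument of the subsequent corollary. Approximate $X$ in $p'$-variation, for some $p<p'<2$, by finite-variation paths. By Corollary~\ref{corollary_ews_trunc_cont} with $A=0$, each truncated signature converges. Continuity of the truncated tensor product then lets me pass to the limit level by level.

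The only point needing care is the approximation step. Smooth paths are not dense in $\mathcal V^p$ itself, only in $\mathcal V^{p'}$ for $p'>p$. This is why the limit must be taken in the $p'$-topology, which is legitimate because $p'<2$ keeps us in the Young regime. Apart from this, the result is a direct specialisation of the EWS Chen identity already established.
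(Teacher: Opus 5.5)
Your proposal is correct and follows the paper's own route: the paper omits a proof of the classical Chen identity because it is the $A=0$ case of the EWS Chen identity, which it proves exactly as you outline (splitting the simplex and applying Fubini for finite variation in Lemma~\ref{lemma_ews_chen_p1}, then passing to $p<2$ through $p'$-variation density and Corollary~\ref{corollary_ews_trunc_cont}). One point in your favour: for $A=0$ the vector field $a\mapsto\pi_n(a\otimes\cdot)$ strictly raises tensor degree, so the Picard iterates in Corollary~\ref{corollary_ews_trunc_cont} are exact after $n$ steps, and you never need the tail estimate of Theorem~\ref{theoreom_linear_cde}, whose usual $p$-variation proof itself relies on multiplicativity.
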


This tells us that the signature of a path over an interval can be decomposed as the product of signatures of sub-intervals.

\begin{definition}[Shuffle Product \parencite{Ree1958LieEA}]
Let $V$ be a Banach space, $\phi \in (V^{\otimes n})^{\star}$ and $\sigma \in (V^{\otimes m})^{\star} $ for $m,n \geq 1$. The shuffle of the bounded linear functionals $\phi$ and $\sigma$ is the bounded linear functional $\phi \shuffle \sigma \in (V^{\otimes (n+m)})^{\star}$ defined by
\[
\langle \phi \shuffle \sigma, w \rangle = \sum_{\rho \in \mathrm{Sh}(m,n)} \langle \phi \otimes \sigma, \rho(w) \rangle, \qquad w \in V^{\otimes(m+n)},
\]
where $\mathrm{Sh}(n,m) = \{\rho \in S_{n+m} \mid \rho(1) < \dots < \rho(n) \text{ and } \rho(n+1) < \dots < \rho(n+m)\}$.
\end{definition}

\begin{theorem}[Shuffle Product Identity \parencite{Ree1958LieEA}]
    Let $X \in \mathcal{V}^p([s,t], V)$ for $p<2$, and $n,m \geq 0$ integers. Then for all bounded $\phi \in (V^{\otimes n})^{\star}$ and $\sigma \in (V^{\otimes m})^{\star}$, we have
    \[
    \langle \phi, S(X)^{(n)}_{s,t} \rangle \langle \sigma, S(X)^{(m)}_{s,.t} \rangle = \langle \phi \shuffle \sigma, S(X)^{(n+m)}_{s,t} \rangle.
    \]
\end{theorem}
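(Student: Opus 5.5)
The plan is to reduce the statement to the single-level identity $\langle \phi, S^{(n)}\rangle\langle\sigma, S^{(m)}\rangle = \langle\phi\shuffle\sigma, S^{(n+m)}\rangle$. We prove it first for bounded variation paths by induction on $n+m$, and then pass to $p<2$ by density and continuity. By bilinearity and the density of finite sums of elementary functionals $\phi_1\otimes\cdots\otimes\phi_n$ in $(V^{\otimes n})^\star$ (using admissibility property 3 to control norms), it suffices to treat words of elementary functionals. So we may take $\phi = \phi'\otimes a$ and $\sigma=\sigma'\otimes b$ with $a,b\in V^\star$. We then use the recursive description of the shuffle,
\[
(\phi'\otimes a)\shuffle(\sigma'\otimes b) = \bigl(\phi'\shuffle(\sigma'\otimes b)\bigr)\otimes a + \bigl((\phi'\otimes a)\shuffle\sigma'\bigr)\otimes b,
\]
which follows directly from the definition by splitting $\mathrm{Sh}(n,m)$ according to whether the last slot comes from $\phi$ or from $\sigma$.

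For $X\in\mathcal{V}^1$, write $f_t=\langle\phi,S(X)_{s,t}\rangle$ and $g_t=\langle\sigma,S(X)_{s,t}\rangle$. These are bounded variation functions of $t$ with
\[
f_t=\int_s^t \langle \phi', S(X)_{s,u}\rangle\, a(dX_u),
\]
and similarly for $g_t$. The classical integration by parts (product rule) for continuous bounded variation functions gives
\[
f_tg_t=\int_s^t f_u\,dg_u+\int_s^t g_u\,df_u .
\]
Substituting $dg_u = \langle\sigma',S(X)_{s,u}\rangle\, b(dX_u)$ and applying the induction hypothesis to the products $f_u\langle\sigma',S\rangle$ and $g_u\langle\phi',S\rangle$ turns the right-hand side into
\[
\int_s^t\langle (\phi\shuffle\sigma')\otimes b + (\phi'\shuffle\sigma)\otimes a, S(X)_{s,u}\otimes dX_u\rangle .
\]
This is exactly $\langle\phi\shuffle\sigma,S(X)_{s,t}\rangle$ by the recursion above. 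The base case, where one of $n,m$ is zero, is trivial since $S^{(0)}=1$ and $\shuffle$ with the empty word is the identity. Alternatively, the whole bounded variation case can be obtained at once by specialising Lemma \ref{ews_shuffle_linearisation} to $A=0$, where $S_{\mathbf A}=S$; the group-like/coproduct argument there does not use $A\neq0$. I would cite this as the primary route and keep the induction as the self-contained check.

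To extend to $1<p<2$, pick $p<p'<2$ and bounded variation paths $X^{(k)}\to X$ in $p'$-variation. By Corollary \ref{corollary_ews_trunc_cont} with $A=0$, the truncated signature $\pi_{n+m}\circ S$ is continuous. Both sides of the identity are continuous functions of finitely many levels, since the functionals are bounded and the admissible norms are submultiplicative, so the identity passes to the limit. In fact Lemma \ref{ews_shuffle_linearisation} is already stated for all $p<2$, so the $A=0$ specialisation covers this directly.

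The main obstacle is the bookkeeping for the shuffle recursion in the Banach setting. One has to check carefully that the definition via $\mathrm{Sh}(n,m)$ acting by permutations $\rho(w)$ yields the "last letter" recursion with the right orientation of the permutation action. This is where convention errors typically occur. One also has to justify the reduction to elementary functionals, which requires that $\phi\mapsto\phi\shuffle\sigma$ is bounded; this follows from admissibility property 3 together with the symmetric-group invariance of the norm.
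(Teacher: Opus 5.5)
Your primary route is exactly the paper's: the appendix omits a proof of this theorem and defers to the EWS results. Lemma~\ref{ews_shuffle_linearisation}, whose group-like/CDE argument is already stated for all $p<2$, specialises to the classical signature at $A=0$ (where $\Lambda_0=0$ and $D_0^h=\mathrm{id}$, so the clock plays no role), so your proposal is correct.

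The one false claim is in your optional induction check: finite sums of elementary functionals are not norm-dense in $(V^{\otimes n})^{\star}$ when $V$ is infinite-dimensional. For example, with the projective norm on $\ell^2\otimes\ell^2$ the dual is $L(\ell^2)$, and finite-rank operators are not dense there. So either restrict that check to finite-dimensional $V$, or replace the factorisation $\phi=\phi'\otimes a$ by the partial evaluations $\phi(\,\cdot\otimes v)\in(V^{\otimes(n-1)})^{\star}$, which give the same last-letter recursion without any density argument.
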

This tells us that polynomial functions in the lower order terms of the signature can be expressed as linear functions of the higher order terms.

\section{Controlled Differential Equations}\label{appendix_cde}

\begin{definition}[Controlled Differential Equation \parencite{lyons2007differential}]

Let $V, W$ be Banach spaces and $f: W \to L(V, W)$ be a vector field. For a control path $X \in \mathcal{V}^p([a, b], V)$ and an initial condition $Y_a \in W$, a path $Y: [a, b] \to W$ is said to satisfy a CDE if for all $t \in [a, b]$:
\[Y_t = Y_a + \int_a^t f(Y_s) dX_s,
\]where the integral is understood in the Young sense for $p < 2$.
\end{definition}

In this definition, $f$ is viewed as taking values in the space of linear maps $L(V,W)$, so that for each $Y_s$, the object $f(Y_s)$ is a linear operator that acts on the control increment $dX_s$. Equivalently, one can view $f(\cdot)v$ as a linear map from $v \in V$ to the space of vector fields on $W$. The importance of the signature in this context is that it can be viewed as the solution to a specific linear CDE. Specifically, for a path $X$, the signature $S(X)_{s,\cdot} : [s,t] \to T((V))$ is the unique solution to the following tensor-valued equation:
\begin{equation} \label{eq_sig_cde}
    dS_t = S_t \otimes dX_t, \quad S_a = \mathbbm{1}.
\end{equation}
The existence and uniqueness of solutions to a CDE are determined by the regularity of the control path $X$ and the smoothness of the vector field $f$, typically measured in terms of $p$-variation and $Lip(\gamma)$ continuity, respectively. While the technical definition of $Lip(\gamma)$ is beyond the scope of this background section, we state the fundamental results for the Young regime:

\begin{theorem}[CDE Existence and Uniqueness \parencite{lyons2007differential}] 

Let $X \in \mathcal{V}^p([a, b], V)$ with $1 \leq p < 2$.

\begin{enumerate}\item Existence: If $W$ is finite-dimensional and $f$ is $Lip(\gamma)$ with $\gamma > p-1$, then the CDE admits a solution.
\item Uniqueness: If $f$ is $Lip(\gamma)$ with $\gamma > p$, then the solution is unique.
\end{enumerate}
\end{theorem}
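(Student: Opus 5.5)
For the uniqueness part, I will run a contraction argument in a space of controlled paths, using the Young estimate as the only analytic input. Fix $q$ with $p \le q$, $\frac1p + \frac1q > 1$ (taking $q=p$ works since $p<2$). Consider the map
\[
\mathcal{M}(Y)_t := Y_a + \int_a^t f(Y_s)\,dX_s
\]
on paths $Y \in \mathcal{V}^q([a,a+\delta],W)$ with $Y_a$ fixed, for a short interval $[a,a+\delta]$. The key steps, in order, are as follows.

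First, I will show that $Y\mapsto f(Y)$ maps $q$-variation paths to $q$-variation paths. This holds because $f$ is Lipschitz, being $Lip(\gamma)$ with $\gamma>p\ge1$, so $\|f(Y)\|_{q}\le C\|Y\|_q$. The Young estimate then gives
\[
\|\mathcal{M}(Y)\|_{p,[a,a+\delta]}\le C\|X\|_{p,[a,a+\delta]}\bigl(|f(Y_a)|+\|f(Y)\|_{q}\bigr).
\]
Hence $\mathcal{M}$ maps a ball into itself once $\|X\|_{p,[a,a+\delta]}$ is small.

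Second, I will estimate $\mathcal{M}(Y)-\mathcal{M}(\tilde Y)=\int (f(Y)-f(\tilde Y))\,dX$. This requires a $q$-variation bound on $f(Y)-f(\tilde Y)$ in terms of $\|Y-\tilde Y\|_q$ and $|Y_a-\tilde Y_a|$. I expect this to be the main obstacle: controlling the $q$-variation of a difference of compositions needs $Df$ to be Lipschitz, i.e.\ $Lip(\gamma)$ with $\gamma>1$. Here the hypothesis $\gamma>p$ is exactly what is used, via the standard identity
\[
f(y_t)-f(\tilde y_t)-f(y_s)+f(\tilde y_s)=\int_0^1\bigl[Df(\cdot)\bigr]\,d\lambda
\]
over the segment between $y$ and $\tilde y$.

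Third, with both estimates in hand, $\mathcal{M}$ is a contraction on the ball for $\delta$ small. I will then concatenate finitely many such intervals; there are finitely many because $\omega(s,t)=\|X\|_{p,[s,t]}^p$ is a continuous control. This yields global uniqueness, and also existence, under $\gamma>p$. In the linear case used in the paper, $f(y)v=M(v)y$ is globally Lipschitz with bounded derivative. There no ball restriction is needed, and the Picard series converges with the factorial-type bound quoted in Corollary~\ref{corollary_ews_trunc_cont}.

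For existence under the weaker assumption $\gamma>p-1$ with $\dim W<\infty$, I will drop contraction and use compactness instead. The plan is:
\begin{enumerate}
\item Approximate $X$ by piecewise-linear paths $X^n\to X$ in $p'$-variation for some $p<p'<2$ with $\frac1{p'}+\frac{\gamma}{p'}>1$; this is possible since $\gamma>p-1$.
\item Solve the ODEs driven by $X^n$, which are classical by Peano's theorem since $f$ is continuous and $W$ is finite dimensional.
\item Derive uniform a priori $p'$-variation bounds on the solutions $Y^n$ via the Young estimate, using only the $\gamma$-Hölder continuity of $f$, so that $f(Y^n)$ has finite $p'/\gamma$-variation.
\item Extract a uniformly convergent subsequence by Arzelà--Ascoli, using finite dimensionality together with the uniform bounds.
\item Pass to the limit in $Y^n_t=Y_a+\int f(Y^n)\,dX^n$ using continuity of the Young integral along sequences bounded in variation and converging uniformly, interpolating between $p$ and $p'$.
\end{enumerate}
The delicate point in this half is step 3, closing the a priori bound on short intervals and then globally. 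I would first try the standard Davie-type argument on intervals where $\omega$ is small.
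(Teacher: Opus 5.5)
The paper does not prove this theorem; it quotes it as background from \textcite{lyons2007differential}, so the comparison is with the standard argument there. Your existence half is the standard compactness argument: bounded-variation approximations solved by Peano, uniform Young a priori bounds using only the boundedness and $\gamma$-H\"older continuity of $f$, Arzel\`a--Ascoli, and continuity of the Young integral under uniform convergence with bounded variation. It is fine as an outline. The uniqueness half has the right architecture, but its key step is wrong as stated.

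In Step 2 you want to bound $f(Y)-f(\tilde Y)$ in the same $q$-variation in which you measure $Y-\tilde Y$ (you take $q=p$). You say this needs $Df$ Lipschitz, ``i.e.\ $Lip(\gamma)$ with $\gamma>1$''. Those are not the same: for $\gamma\in(p,2)$ the hypothesis only gives $Df$ bounded and $(\gamma-1)$-H\"older.

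To see what goes wrong, write $z=Y-\tilde Y$ and $f(Y_t)-f(\tilde Y_t)=G_t z_t$, with $G_t=\int_0^1 Df(\tilde Y_t+\lambda z_t)\,d\lambda$. The increment over $[s,t]$ is $G_t(z_t-z_s)+(G_t-G_s)z_s$. The second term is only of size $\|z\|_\infty(|Y_t-Y_s|+|\tilde Y_t-\tilde Y_s|)^{\gamma-1}$. This has finite $\frac{p}{\gamma-1}$-variation but need not be controlled in $p$-variation. So your proposed bound fails unless $\gamma\ge 2$, and with it your account of where $\gamma>p$ is used.

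The repair is to work with $r=p/(\gamma-1)$:
\[
\|f(Y)-f(\tilde Y)\|_{r}\le C\bigl(\|z\|_{p}+\|z\|_\infty(\|Y\|_{p}+\|\tilde Y\|_{p})^{\gamma-1}\bigr).
\]
Then apply the Young estimate to the pair $(p,r)$, which is admissible precisely because $\frac1p+\frac{\gamma-1}{p}=\frac{\gamma}{p}>1$. This is the actual role of $\gamma>p$. It gives a contraction constant of order $\|X\|_{p,[a,a+\delta]}(1+R^{\gamma-1})$ on a ball of radius $R$.

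Note also that a contraction on a ball only gives uniqueness inside that ball. You still need to place every solution in it, using the a priori bound $\|Y\|_{p,[s,t]}\le 2\|f\|_\infty\|X\|_{p,[s,t]}$, which holds for any solution once $\|X\|_{p,[s,t]}$ is small. Alternatively, the factorisation above shows that $z$ solves the linear Young equation $z_t=\int_a^t G_u(z_u)\,dX_u$ with $z_a=0$. Its only solution is $z\equiv 0$, which avoids the ball altogether.
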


While we do not define $\mathrm{Lip}(\gamma)$ continuity, it is important to note that linear vector fields, such as the one defining the signature, do not generally satisfy the $Lip(\gamma)$ global boundedness conditions required for these standard theorems. Consequently, existence and uniqueness for the linear case must be established separately, often through the convergence of Picard iterations. A CDE is said to be linear if the vector field depends linearly on the state. That is, there exists a bounded linear operator $A \in L(W,L(V,W))$ such that $f(w) = A(w)$. Since $A$ defines a bilinear map $(w,v) \mapsto A(w)(v)$, there is a canonical isomorphism $L(W,L(V,W)) \cong L(V,L(W,W))$. Via this identification, we may equivalently regard $A$ as an element of $L(V,L(W,W)$. We adopt this viewpoint when defining $A^{\otimes n}$. In this case, the CDE takes the form $dY_t = A(Y_t)dX_t$.

\begin{theorem}[Linear CDE Solution]\label{theoreom_linear_cde}

Let $X \in \mathcal{V}^p([a, b], V)$ with $p < 2$. The unique solution to the linear CDE $dY_t = A(Y_t) dX_t$ is given by:
\[Y_t = \left( \sum_{n=0}^\infty A^{\otimes n} \left( S(X)^{(n)}_{a,t} \right) \right) Y_a.\]
where $A^{\otimes n} \in L(V^{\otimes n}, L(W,W))$ is defined on simple tensors by $A^{\otimes n}(v_1 \otimes \cdots \otimes v_n) = A(v_n) \cdots A(v_1)$, extended by linearity and continuity, with $A^{\otimes 0} = \mathrm{Id}_{W}$.
\end{theorem}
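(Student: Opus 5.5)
The plan is to prove Theorem~\ref{theoreom_linear_cde} by Picard iteration, and then to identify the limit with the stated series by induction. First I will check that the series converges. For $p<2$ the signature of a Young path satisfies the factorial bound $\|S(X)^{(n)}_{a,t}\|_{V^{\otimes n}} \le C^n\|X\|_{p,[a,t]}^n/\Gamma(1+n/p)$; for $p=1$ this is the $1/n!$ bound proved exactly as in Lemma~\ref{lemma_ews_factorial_decay} with $A=0$. Define $A^{\otimes n}$ on simple tensors by $A(v_n)\cdots A(v_1)$. Then $\|A^{\otimes n}(v_1\otimes\cdots\otimes v_n)\|_{\mathrm{op}} \le \|A\|^n\prod_k\|v_k\|$, so $A^{\otimes n}$ extends continuously to $V^{\otimes n}$ with norm at most $\|A\|^n$ (this uses the admissibility of the tensor norms). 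Therefore $\sum_n \|A^{\otimes n}\|\,\|S(X)^{(n)}_{a,t}\|$ converges uniformly in $t$, and the candidate $Y_t$ is well defined and continuous.

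Next I will show that the candidate solves the equation. Set $Y^{(0)}_t = Y_a$ and $Y^{(m+1)}_t = Y_a + \int_a^t A(Y^{(m)}_u)\,dX_u$. By induction on $m$, I claim
\[
Y^{(m)}_t = \sum_{n=0}^{m} A^{\otimes n}\bigl(S(X)^{(n)}_{a,t}\bigr)Y_a .
\]
The inductive step needs the identity $\int_a^t A(dX_u)\,A^{\otimes n}(S(X)^{(n)}_{a,u}) = A^{\otimes (n+1)}\bigl(\int_a^t S(X)^{(n)}_{a,u}\otimes dX_u\bigr)$. This holds because $A^{\otimes(n+1)}(w\otimes v) = A(v)A^{\otimes n}(w)$ on simple tensors, and hence on all tensors by continuity and linearity. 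Moreover $S^{(n+1)}_{a,t} = \int_a^t S^{(n)}_{a,u}\otimes dX_u$ is the recursive form of the signature CDE~\eqref{eq_sig_cde}. Bounded linear maps commute with Young integrals, since they commute with the Riemann sums and are continuous. Letting $m\to\infty$, the uniform convergence from the first step, together with the Young estimate, lets me pass to the limit inside the integral; the $q$-variation of the integrands is controlled level by level, again with factorial decay. So $Y$ solves the CDE.

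The last step is uniqueness. Let $Y,\tilde Y$ be two solutions and put $\Delta = Y-\tilde Y$, which satisfies $\Delta_t = \int_a^t A(\Delta_u)\,dX_u$ with $\Delta_a=0$. Iterating this identity $n$ times expresses $\Delta_t$ as $A^{\otimes n}$ applied to an iterated integral of $\Delta$ against $n$ copies of $dX$. Estimating this with the Young-type bound gives $\sup_t\|\Delta_t\| \le \sup\|\Delta\|\,(C\|A\|\,\|X\|_p)^n/\Gamma(1+n/p)$, and this tends to $0$ as $n\to\infty$.

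The main obstacle is the Young estimate for the iterated integrals when $1<p<2$: the integrand $\Delta$ must be controlled in $p$-variation, not just in supremum norm. I would handle this the standard way. Split $[a,b]$ into finitely many intervals on which $C\|A\|\,\|X\|_{p}$ is small, so that the map $Z\mapsto\int A(Z)\,dX$ is a contraction in $p$-variation norm on each piece. Then patch the pieces together, using the fact that $\omega(s,t)=\|X\|^p_{p,[s,t]}$ is a control.
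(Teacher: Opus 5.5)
Your proposal follows the same route as the paper, which simply cites the Picard-iteration proof of Friz--Victoir (Theorem 3.8) and asserts that it carries over to Banach $V$. You identify the Picard iterates with the partial sums via $A^{\otimes(n+1)}(w\otimes v)=A(v)\,A^{\otimes n}(w)$, pass to the limit using factorial decay of the signature levels in $p$-variation, and prove uniqueness by a contraction on finitely many short intervals. All of this is sound, and you give more detail than the paper does.

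One justification needs correcting. The bound $\|A^{\otimes n}\|\le\|A\|^n$ does not follow from admissibility. Admissibility gives $\|v\otimes w\|\le\|v\|\,\|w\|$, which is an \emph{upper} bound on elementary tensors. Extending the multilinear map $(v_1,\dots,v_n)\mapsto A(v_n)\cdots A(v_1)$ with norm $\|A\|^n$ instead needs the norm on $V^{\otimes n}$ to dominate the projective norm, i.e.\ to be the projective norm.

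In finite dimensions an admissible norm is within a factor $c^n$ of the projective one, so you still get $\|A^{\otimes n}\|\le(c\|A\|)^n$. That is all the factorial decay requires.

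For a general admissible norm in infinite dimensions, however, $A^{\otimes n}$ need not even be bounded. Take Hilbert--Schmidt norms on $V=\ell^2$ and $A(e_i)=U_i$ anticommuting self-adjoint unitaries. Then $\|A\|=1$, but $A^{\otimes 2}\bigl(\sum_{i\le N}e_i\otimes e_i\bigr)=N\,\mathrm{Id}$ while $\bigl\|\sum_{i\le N}e_i\otimes e_i\bigr\|=\sqrt N$. So you should state the projective-norm (or finite-dimensional) hypothesis explicitly. The paper's statement assumes it tacitly by writing ``extended by linearity and continuity''.

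Finally, your small-interval contraction proves uniqueness on its own. With $\Delta_s=0$, Young's estimate gives $\|\Delta\|_{p,[s,t]}\le C\|A\|\,\|X\|_{p,[s,t]}\,\|\Delta\|_{p,[s,t]}$, which forces $\Delta\equiv 0$ when $C\|A\|\,\|X\|_{p,[s,t]}<1$. You can therefore drop the iterated-integral estimate for $\Delta$, which for $1<p<2$ would otherwise need an iterated Young/Fubini argument with a rough integrand.
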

\begin{proof}
    See \textcite{friz2009multidimensional} Theorem $3.8$ for the finite-dimensional version of this theorem. The extension to a general Banach space $V$ follows by the same standard Picard iteration argument.
\end{proof}

\section{EWS Illustrative Example} \label{ews_example}

\begin{exmp}
    To illustrate the structure of the weighted iterated integrals, let us consider the simple setting where $V = \R^d$, and the clock is standard coordinate time, $\theta_t =t$. For clarity, we take $d=2$ so that $A \in \R^{2 \times 2}$.
\noindent\textbf{Level 1.} For a single-letter word $(i_1)$, we have
    \[
    S_{\mathbf{A}}(X)^{i_1}_{s,t}
    =
    \sum_{j_1 = 1}^{2}
    \int_s^t
    E_{i_1, j_1}(t - t_1)\,dX^{j_1}_{t_1},
    \qquad i_1 \in \{1,2\}.
    \]
    That is,
    \[
    S_{\mathbf{A}}(X)^{1}_{s,t} = \int_s^t \!\!\big(E_{11}(t-u)\,dX^1_u + E_{12}(t-u)\,dX^2_u\big),
    \qquad
    S_{\mathbf{A}}(X)^{2}_{s,t} = \int_s^t \!\!\big(E_{21}(t-u)\,dX^1_u + E_{22}(t-u)\,dX^2_u\big).
    \]
  \noindent\textbf{Level 2.} For a two-letter word $(i_1,i_2)$, we obtain 
    \[
    S_{\mathbf{A}}(X)^{i_1,i_2}_{s,t}
    =
    \sum_{j_1,j_2=1}^{2}
    \int_s^t\!\!\int_s^{t_2}
    E_{i_1,j_1}(t - t_1)\,E_{i_2,j_2}(t - t_2)\,
    dX^{j_1}_{t_1}\,dX^{j_2}_{t_2}.
    \]
    Expanding the indices explicitly gives
    \begin{align*}
    S_{\mathbf{A}}(X)^{1,1}_{s,t}
    &= \int_s^t\!\!\int_s^{t_2}
       \big(E_{11}(t-t_1)E_{11}(t-t_2)\,dX^1_{t_1}dX^1_{t_2}
       + E_{11}(t-t_1)E_{12}(t-t_2)\,dX^1_{t_1}dX^2_{t_2}\\[-3pt]
       &\qquad\qquad
       +\,E_{12}(t-t_1)E_{21}(t-t_2)\,dX^2_{t_1}dX^1_{t_2}
       + E_{12}(t-t_1)E_{22}(t-t_2)\,dX^2_{t_1}dX^2_{t_2}\big),\\[4pt]
    S_{\mathbf{A}}(X)^{1,2}_{s,t}
    &= \int_s^t\!\!\int_s^{t_2}
       \big(E_{11}(t-t_1)E_{21}(t-t_2)\,dX^1_{t_1}dX^1_{t_2}
       + E_{11}(t-t_1)E_{22}(t-t_2)\,dX^1_{t_1}dX^2_{t_2}\\[-3pt]
       &\qquad\qquad
       +\,E_{12}(t-t_1)E_{21}(t-t_2)\,dX^2_{t_1}dX^1_{t_2}
       + E_{12}(t-t_1)E_{22}(t-t_2)\,dX^2_{t_1}dX^2_{t_2}\big),
    \end{align*}
    and analogous formulas for $(i_1,i_2) = (2,1)$ and $(2,2)$.
\end{exmp}
In this example, when $A = \text{diag}(\lambda_1,\dots,\lambda_n)$, $E_{ij}(h) = e^{-\lambda_i h}\,\delta_{ij}$, and all cross-terms vanish. We therefore recover the diagonal EFM-signature integrals from Equation (\ref{eq_efm_coords}):
\[
    S_{\mathbf{A}}(X)^{i_1,\dots,i_n}_{s,t}
    = \int_s^t\!\!\cdots\!\!\int_s^{t_2}
    \prod_{k=1}^n e^{-\lambda_{i_k}(t-t_k)}\,dX^{i_k}_{t_k}.
\]
\section{Duffing Oscillator Example Proofs}\label{appendix_duffing}

\begin{proposition}\label{prop_duffing_chain}
Let $S_{\mathbf{A}}(X)_{t_0,t} \in T((\R^{K+3}))$ be the EWS of $X$ with parameters $\mathbf{A}=(A,B)$ defined above and clock $\theta_t =t$. Write the coordinates as
\begin{equation}
    S_{\mathbf{A}}(X)^{(1)}_{t_0,t}
=
\big(
S_t^{t}, S^u_t, S^{x,0}_t \dots ,S^{x,K}_t
\big) \in \R^{K+3}.
\end{equation}
Then the first two coordinates (corresponding to the time and forcing channels) satisfy
\begin{align}
dS_t^t &= -\lambda_t S_t^tdt + dt, \\
dS_t^u &= -\lambda_u S_t^udt + du_t .
\end{align}
The remaining $K+1$ coordinates satisfy the system
\begin{align}
dS_t^{x,0} &= -\lambda S_t^{x,0}dt + dx_t, \\
dS_t^{x,1} &= -\lambda S_t^{x,1}\,dt + S_t^{x,0}dt, \\
&\;\vdots \nonumber \\
dS_t^{x,K} &= -\lambda S_t^{x,K}dt + S_t^{x,K-1}dt .
\end{align}
\end{proposition}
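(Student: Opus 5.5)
The plan is to project the EWS dynamics of Lemma~\ref{lemma_ews_cde} onto the first tensor level and read off the coordinates using the block structure of $A$. At level one, $\Lambda_A$ acts on $V^{\otimes 1}$ simply as $A$ (it is the derivation induced by $A$, and $\Lambda_A(u)=Au$ for $u\in V$). The term $S_{\mathbf{A}}(X)_{t_0,t}\otimes d\widehat X_t$ contributes at level one only through the level-zero component, which is identically $1$. With the lifted path $\widehat X = BX$ and clock $\theta_t=t$, the level-one component therefore satisfies the linear (SSM-type) equation
\[
dS^{(1)}_t = -A\,S^{(1)}_t\,dt + d\widehat X_t, \qquad S^{(1)}_{t_0}=0.
\]
The same conclusion follows directly from Definition~\ref{def_ews_iterated_integrals}: $S^{(1)}_t=\int_{t_0}^t e^{-(t-u)A}\,d\widehat X_u$, and differentiating in $t$ gives the equation above. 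This is just the first-level identification already used in the SSM subsection.

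Next I would write this equation coordinatewise using $A=\mathrm{diag}(\lambda_t,\widetilde A_x,\widetilde A_u)$. The equation decouples into independent blocks. For the time channel, $\widehat X$ has increment $dt$, which gives $dS^t_t=-\lambda_t S^t_t\,dt+dt$. Within a Jordan block $\widetilde A$, row $m$ of $-\widetilde A\,S$ equals $-\lambda S^{m}+S^{m-1}$ for $m\ge1$, because of the $-1$ subdiagonal, and equals $-\lambda S^{0}$ for $m=0$. The lifted path has a nonzero increment only in slot $0$ of each block: $dx_t$ in the $x$-block, and $du_t$ in the $u$-block, or $u_t\,dt$ if the $u$-block is fed the integrated forcing, as in Proposition~\ref{prop_jordan_chain}. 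This yields exactly the stated chains:
\[
dS^{x,0}_t=-\lambda S^{x,0}_t\,dt+dx_t, \qquad dS^{x,m}_t=-\lambda S^{x,m}_t\,dt+S^{x,m-1}_t\,dt \quad (m\ge1),
\]
together with the analogous equations for the scalar time and forcing channels.

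I expect the main obstacle to be bookkeeping rather than analysis. Three points need care. First, the sign convention: the $-1$ entries of $\widetilde A$ must become $+S^{m-1}\,dt$ after multiplying by $-A$. Second, the ordering and indexing of coordinates must be reconciled with the statement, which lists the forcing channel as a single coordinate $S^u$ and writes the ambient space as $\R^{K+3}$. I would state explicitly that the $u$-block is collapsed to one channel here, or equivalently that only its $0$-th slot is tracked, and that $\lambda$ denotes $\lambda_x$. Third, justifying the differential form in the Young sense, so that the chain rule for $t\mapsto e^{-(t-u)A}$ inside the integral is legitimate. For that step I would simply invoke Lemma~\ref{lemma_ews_cde} restricted to level one, which already handles the rough-path setting, rather than differentiating under the integral directly.
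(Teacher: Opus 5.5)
Your proposal is correct and matches the paper's proof: project the CDE of Lemma~\ref{lemma_ews_cde} onto level one, where $\Lambda_A$ acts as $A$ and $S_{\mathbf{A}}(X)_{t_0,t}\otimes dX_t$ contributes only $dX_t$. Then split $dS^{(1)}_t=-AS^{(1)}_t\,dt+dX_t$ along the block-diagonal $A$, with the $-1$ subdiagonal of the Jordan block producing the $+S^{x,m-1}_t\,dt$ terms.

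Your bookkeeping caveats are also justified. The paper's proof silently uses $A=\mathrm{diag}(\lambda_t,\lambda_u,\widetilde A)$ on $\R^{K+3}$ with increment $(dt,du_t,dx_t,0,\dots,0)$, which differs from the $(2K+3)$-dimensional setup and the $u_s\,ds$ forcing of Proposition~\ref{prop_jordan_chain}.
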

\begin{proof}
    Consider the EWS of the path $X = (t,u,x)$ with parameters $\mathbf{A}=(A,B)$ defined as in Equation (\ref{eq_duffing_A}), and with clock $\theta_t = t$. By Lemma \ref{lemma_ews_cde}, the EWS satisfies the linear CDE 
    \begin{equation*}
        dS_{\mathbf{A}}(X)_{t_0,t} 
        = - \Lambda_{A} S_{\mathbf{A}}(X)_{t_0,t} dt 
        + S_{\mathbf{A}}(X)_{t_0,t} \otimes dX_t.
    \end{equation*}
    Applying the projection $\pi_1: T((\mathbb{R}^d)) \to \mathbb{R}^d$ onto the first level, and noting that $\Lambda_A$ acts as $A$ on level one, we obtain
    \begin{equation*}
        dS_{\mathbf{A}}(X)^{(1)}_{t_0,t} 
        = - A S_{\mathbf{A}}(X)^{(1)}_{t_0,t} dt + dX_t.
    \end{equation*}
    Writing
    \begin{equation*}
        S_{\mathbf{A}}(X)^{(1)}_{t_0,t}
        = \big(S^t_t, S^u_t, S^{x,0}_t, \dots, S^{x,K}_t\big), 
        \qquad
        dX_t = (dt, du_t, dx_t, 0, \dots,0),
    \end{equation*}
    and using that $A = \mathrm{diag}(\lambda_t, \lambda_u, \tilde{A})$ is block diagonal, we can decompose the equation component-wise. The first two coordinates correspond to scalar blocks, hence
    \begin{align*}
        dS^t_t &= -\lambda_t S^t_t dt + dt,\\
        dS^u_t &= -\lambda_u S^u_t dt + du_t.
    \end{align*}
    For the $x$-coordinates, using the Jordan block structure of $\tilde{A}$, we obtain
    \begin{align*}
        dS^{x,0}_t &= -\lambda S^{x,0}_t dt + dx_t,\\
        dS^{x,1}_t &= -\lambda S^{x,1}_t dt + S^{x,0}_t dt,\\
        &\ \vdots\\
        dS^{x,K}_t &= -\lambda S^{x,K}_t dt + S^{x,K-1}_t dt.
    \end{align*}
    This gives the stated system.
\end{proof}

\begin{proposition}\label{prop_duffing_integral}
For each $m=0,\dots,K$, the coordinates $(S_t^{x,m})$ admit the representation
\begin{equation}
S_t^{x,m}
=
\int_{t_0}^t
e^{-\lambda (t-s)}
\frac{(t-s)^m}{m!}dx_s,
\end{equation}
where the integral is understood in the Riemann--Stieltjes sense.
\end{proposition}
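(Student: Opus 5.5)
We prove the representation by induction on $m$, using the triangular system of Proposition~\ref{prop_duffing_chain} and the integrating factor $e^{\lambda t}$ at each step. Throughout, $x$ has bounded variation (as in Proposition~\ref{prop_duffing_remainder}), so all integrals are Riemann--Stieltjes and the usual product rule applies. By Proposition~\ref{prop_duffing_chain}, $S^{x,0}$ solves $dS^{x,0}_t = -\lambda S^{x,0}_t\,dt + dx_t$ with $S^{x,0}_{t_0}=0$, since the EWS starts at $\mathbbm{1}$. The product rule for the bounded-variation path $e^{\lambda t}S^{x,0}_t$ gives $d(e^{\lambda t}S^{x,0}_t) = e^{\lambda t}\,dx_t$. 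Integrating from $t_0$ to $t$ and multiplying by $e^{-\lambda t}$ yields
\[
S^{x,0}_t=\int_{t_0}^t e^{-\lambda(t-s)}\,dx_s,
\]
which is the case $m=0$. Alternatively, this follows directly from the level-one iterated integral in Definition~\ref{def_ews_iterated_integrals}.

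For the inductive step, assume the formula holds for $m-1$. The same integrating factor applied to $dS^{x,m}_t = -\lambda S^{x,m}_t\,dt + S^{x,m-1}_t\,dt$, with $S^{x,m}_{t_0}=0$, gives
\[
S^{x,m}_t = \int_{t_0}^t e^{-\lambda(t-r)} S^{x,m-1}_r\,dr .
\]
Substituting the inductive hypothesis produces a double integral over the region $\{t_0\le s\le r\le t\}$:
\[
S^{x,m}_t=\int_{t_0}^t\int_{t_0}^r e^{-\lambda(t-r)}e^{-\lambda(r-s)}\frac{(r-s)^{m-1}}{(m-1)!}\,dx_s\,dr .
\]
The exponentials combine to $e^{-\lambda(t-s)}$, which does not depend on $r$. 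Exchanging the order of integration then leaves the inner integral
\[
\int_s^t \frac{(r-s)^{m-1}}{(m-1)!}\,dr=\frac{(t-s)^m}{m!},
\]
which gives the claim for $m$.

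The only delicate step is the Fubini exchange. One integration is against Lebesgue measure $dr$ and the other against the Lebesgue--Stieltjes measure $dx_s$, which is a finite signed measure because $x\in\mathcal V^1$. The integrand is continuous and bounded on the compact simplex, so Fubini's theorem for product measures applies. If $x$ were only of finite $p$-variation with $p<2$, we would instead approximate $x$ by bounded-variation paths and pass to the limit on both sides: the left side by the continuity in Corollary~\ref{corollary_ews_trunc_cont}, and the right side by Young estimates.

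An alternative that avoids the induction is to read the result off the level-one formula $S^{(1)}_t=\int_{t_0}^t e^{-(t-s)A}\,d\widehat X_s$. One expands $e^{-h\widetilde A}=e^{-\lambda h}e^{hN}$, where $N$ is the lower-shift nilpotent matrix with ones below the diagonal (since $\widetilde A=\lambda I-N$). Then $(e^{hN})_{m,0}=h^m/m!$, and since only the first slot of the $x$-block is driven, this gives the claim immediately. We would mention this as a consistency check.
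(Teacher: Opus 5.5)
Your proof is correct and follows the paper's argument essentially step for step. Like the paper, you use induction on $m$ with the integrating factor $e^{\lambda t}$ on the triangular system of Proposition~\ref{prop_duffing_chain}, then a Fubini exchange against the finite signed measure $dx_s$, then the elementary integral $\int_s^t \frac{(r-s)^{m-1}}{(m-1)!}\,dr=\frac{(t-s)^m}{m!}$. Your bookkeeping, which combines the exponentials into $e^{-\lambda(t-s)}$ before exchanging, is cleaner than the paper's, whose penultimate display misplaces the exponential factor.

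Your closing alternative, reading $(e^{-h\widetilde A})_{m,0}=e^{-\lambda h}h^m/m!$ directly off the level-one definition, is also valid and shorter. It also applies verbatim to Young paths, with no approximation argument needed.
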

\begin{proof}
    Recall that $S^{x,m}_{t_0} = 0$ since the EWS over $[t_0,t_0]$ is trivial. All integrals below are understood in the Riemann--Stieltjes sense, and are well-defined since $x$ has bounded variation. We proceed by induction on $m$.

    For the base case $m=0$, multiply the defining equation by the integrating factor $e^{\lambda t}$ to obtain
    \begin{equation*}
        \frac{d}{dt}\big(e^{\lambda t} S^{x,0}_t\big) = e^{\lambda t}dx_t.
    \end{equation*}
    Integrating from $t_0$ to $t$ gives
    \begin{equation*}
        e^{\lambda t} S^{x,0}_t = \int_{t_0}^t e^{\lambda s}dx_s,
    \end{equation*}
    and hence
    \begin{equation*}
        S^{x,0}_t = \int_{t_0}^t e^{-\lambda (t-s)}dx_s.
    \end{equation*}
    For the inductive hypothesis, assume the result holds for $m-1$. Multiplying the defining equation for $S^{x,m}_t$ by $e^{\lambda t}$ and integrating yields
    \begin{equation*}
        e^{\lambda t} S^{x,m}_t = \int_{t_0}^t e^{\lambda r} S^{x,m-1}_rdr.
    \end{equation*}
    Substituting the inductive hypothesis,
    \begin{equation*}
        e^{\lambda t} S^{x,m}_t
    = \int_{t_0}^t e^{\lambda r}
    \left( \int_{t_0}^r e^{-\lambda (r-s)} \frac{(r-s)^{m-1}}{(m-1)!}dx_s \right) dr.
    \end{equation*}
    Interchanging the order of integration (justified by Fubini's Theorem since $x$ has bounded variation and the integrand is continuous on a compact domain), we obtain
    \begin{equation*}
        e^{\lambda t} S^{x,m}_t = \int^t_{t_0} \left( \int_s^t e^{\lambda r} e^{-\lambda (r-s)} \frac{(r-s)^{m-1}}{(m-1)!}dr \right) dx_s.
    \end{equation*}
    Simplifying the integral,
    \begin{equation*}
        \int^t_s e^{\lambda r} e^{-\lambda(r-s)} \frac{(r-s)^{m-1}}{(m-1)!}dr = e^{\lambda s} \int^t_s \frac{(r-s)^{m-1}}{(m-1)!}dr = e^{\lambda s} \frac{(t-s)^m}{m!}.
    \end{equation*}
    Thus,
    \begin{equation*}
        e^{\lambda t} S^{x,m}_t
    = e^{\lambda t} \int_{t_0}^t \frac{(t-s)^m}{m!}dx_s,
    \end{equation*}
    and cancelling $e^{\lambda t}$ gives
    \begin{equation*}
        S^{x,m}_t
    = \int_{t_0}^t e^{-\lambda (t-s)} \frac{(t-s)^m}{m!}dx_s.
    \end{equation*}
    This completes the induction.
\end{proof}

\begin{proposition}\label{prop_duffing_remainder}
    Let $x \in \mathcal{V}^{1}([t_0,t_N], \R)$ be of bounded variation, and let $(S^{x,m}_t)_{m=0}^K$ be defined as above. Then for all $t \in [t_0,t_N]$,
    \begin{equation}
        x_t-x_{t_0} = \sum^K_{m=0}\lambda^m S^{x,m}_t + \mathcal{R}_t^{K+1},
    \end{equation}
    where the remainder term admits the bound
    \begin{equation}
        |\mathcal{R}_t^{K+1}| \leq \|x\|_{1,[t_0,t]} \frac{(\lambda(t-t_0))^{K+1}}{(K+1)!}.
    \end{equation}
\end{proposition}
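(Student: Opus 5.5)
We write $x_t - x_{t_0} = \int_{t_0}^t 1\,dx_s$ and expand the constant kernel against the exponential weight. The identity
\[
1 = e^{-\lambda(t-s)}e^{\lambda(t-s)} = e^{-\lambda(t-s)}\sum_{m=0}^{\infty}\frac{\lambda^m(t-s)^m}{m!}
\]
holds pointwise for $s\in[t_0,t]$, and the series converges uniformly on this compact interval. Truncating after $m=K$ gives
\[
x_t - x_{t_0} = \sum_{m=0}^K \lambda^m \int_{t_0}^t e^{-\lambda(t-s)}\frac{(t-s)^m}{m!}\,dx_s + \int_{t_0}^t e^{-\lambda(t-s)} r_K(\lambda(t-s))\,dx_s .
\]
Here $r_K(y) := \sum_{m>K} y^m/m!$ is the Taylor tail of $e^{y}$. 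Linearity of the Riemann--Stieltjes integral justifies splitting the finite sum from the remainder, since both integrands are continuous and $x$ has bounded variation. By Proposition \ref{prop_duffing_integral}, the $m$-th integral in the finite sum is exactly $S^{x,m}_t$. We therefore define
\[
\mathcal{R}^{K+1}_t := \int_{t_0}^t e^{-\lambda(t-s)} r_K(\lambda(t-s))\,dx_s .
\]

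For the bound we use the standard Riemann--Stieltjes estimate
\[
\Bigl|\int_{t_0}^t g\,dx\Bigr| \le \sup_{s\in[t_0,t]}|g(s)|\,\|x\|_{1,[t_0,t]}.
\]
It then suffices to bound the kernel $g(s) = e^{-\lambda(t-s)}r_K(\lambda(t-s))$ uniformly in $s$. Set $y = \lambda(t-s) \in [0,\lambda(t-t_0)]$, which uses $\lambda>0$. The Lagrange form of Taylor's remainder gives $r_K(y) = e^{\xi}y^{K+1}/(K+1)!$ for some $\xi\in[0,y]$. Since $e^{\xi}\le e^{y}$, this yields $e^{-y}r_K(y) \le y^{K+1}/(K+1)!$. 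Alternatively, one can avoid the Lagrange form by writing $e^{-y}r_K(y) = 1 - e^{-y}\sum_{m\le K} y^m/m!$, which is the tail probability of a Poisson distribution with mean $y$.

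The bound $y^{K+1}/(K+1)!$ is increasing in $y$, so it is maximised at $y=\lambda(t-t_0)$. This gives
\[
|\mathcal{R}^{K+1}_t| \le \|x\|_{1,[t_0,t]}\frac{(\lambda(t-t_0))^{K+1}}{(K+1)!},
\]
as claimed.

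The main subtle step is the kernel estimate $e^{-y}r_K(y)\le y^{K+1}/(K+1)!$. Without the factor $e^{-y}$ one only obtains the weaker bound $e^{y}y^{K+1}/(K+1)!$, and the exponential weighting is exactly what cancels it. Everything else is linearity together with the already-established Proposition \ref{prop_duffing_integral}.
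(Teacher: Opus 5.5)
Your proposal is correct and follows essentially the same route as the paper: insert $1=e^{-\lambda(t-s)}e^{\lambda(t-s)}$, truncate the Taylor series of $e^{\lambda(t-s)}$ at order $K$, and identify the finite sum with $\sum_{m\le K}\lambda^m S^{x,m}_t$ via Proposition~\ref{prop_duffing_integral}. The remainder is then bounded by the sup of the kernel times $\|x\|_{1,[t_0,t]}$. The only addition is your Lagrange-remainder justification of $e^{-y}r_K(y)\le y^{K+1}/(K+1)!$, which the paper asserts without proof.
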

\begin{proof}
    Since $x \in \mathcal{V}^1([t_0,t_N],\R)$, all integrals below are well-defined in the Riemann--Stieltjes sense. Using the identity
    \begin{equation*}
        1 = e^{-\lambda(t-s)} e^{\lambda(t-s)},
    \end{equation*}
    we write 
    \begin{equation*}
        x_t - x_{t_0} = \int^t_{t_0} 1 dx_s = \int^{t}_{t_0} e^{-\lambda(t-s)} e^{\lambda(t-s)} dx_s.
    \end{equation*}
    Taylor expanding the exponential to order $K$ gives
    \begin{equation*}
        e^{\lambda(t-s)}
    = \sum_{m=0}^K \frac{(\lambda(t-s))^m}{m!}
    + R^{K+1}_{\lambda(t-s)},
    \end{equation*}
    where
    \begin{equation*}
        R^{K+1}_{y} = \sum_{m=K+1}^{\infty} \frac{y^m}{m!}.
    \end{equation*}
    Substituting into the integral yields
    \begin{equation*}
        x_t - x_{t_0} = \sum_{m=0}^K \int^t_{t_0} e^{-\lambda(t-s)} \frac{(\lambda(t-s))^m}{m!} dx_s + \int^t_{t_0} e^{-\lambda(t-s)} R^{K+1}_{\lambda(t-s)}dx_s.
    \end{equation*}
    By the definition of $S^{x,m}_t$, this gives
    \begin{equation*}
        x_t - x_{t_0} = \sum_{m=0}^K \lambda^m S^{x,m}_t + \mathcal{R}_t^{k+1},
    \end{equation*}
    where
    \begin{equation*}
    \mathcal{R}_t^{K+1} = \int_{t_0}^t e^{-\lambda(t-s)} R_{\lambda(t-s)}^{K+1} dx_s.
    \end{equation*}
    To bound this remainder, note that for $y \geq 0$,
    \begin{equation*}
        e^{-y} R^{K+1}_{y} \leq \frac{y^{K+1}}{(K+1)!}.
    \end{equation*}
    Hence, for $s \in [t_0,t]$
    \begin{equation*}
        \big|e^{-\lambda(t-s)} R^{K+1}_{\lambda(t-s)}\big|
        \leq \frac{(\lambda(t-t_0))^{K+1}}{(K+1)!}.
    \end{equation*}
    Using the standard bound for Riemann--Stieltjes integrals $\left|\int_{t_0}^t g(s)dx_s\right|
    \le \sup_{s\in[t_0,t]} |g(s)| \|x\|_{1,[t_0,t]}$ \parencite{Young1936AnIO}, we obtain
    \begin{equation*}
    \left|\int_{t_0}^t g(s)dx_s\right|\leq \sup_{s\in[t_0,t]} |g(s)| \|x\|_{1,[t_0,t]}.
    \end{equation*}
\end{proof}

\end{document}